\newlength{\commentindent}
\renewcommand{\algorithmiccomment}[1]{\unskip\hfill\makebox[\commentindent][l]{//~#1}\par}
\LetLtxMacro{\oldalgorithmic}{\algorithmic}
\renewcommand{\algorithmic}[1][0]{%
	\oldalgorithmic[#1]%
	\renewcommand{\ALC@com}[1]{%
		\ifnum\pdfstrcmp{##1}{default}=0\else\algorithmiccomment{##1}\fi}%
}
\def\colorful{0}
\newif\ifhyper\IfFileExists{hyperref.sty}{\hypertrue}{\hyperfalse}
\ifhyper\usepackage{hyperref}\fi
\def\nnewcolor{1}
\newcommand{\new}[1]{{\color{red} #1}}
\newcommand{\new}[1]{{#1}}
\newtheorem{theorem}{Theorem}[section]
\newtheorem{lemma}[theorem]{Lemma}
\newtheorem{informal theorem}[theorem]{Theorem (informal statement)}
\newtheorem{corollary}[theorem]{Corollary}
\theoremstyle{definition}
\newtheorem{definition}[theorem]{Definition}
\newcommand{\eqdef}{\stackrel{{\mathrm {\scriptstyle def}}}{=}}
\newcommand{\R}{\mathbb{R}}
\newcommand{\Hyp}{\ensuremath{H}}
\newcommand{\eps}{\epsilon}
\newcommand{\pr}{\mathop{\mathbf{Pr}}}
\renewcommand{\Pr}{\mathbf{Pr}}
\newcommand{\poly}{\mathrm{poly}}
\newcommand{\sgn}{\mathrm{sign}}
\newcommand{\sign}{\mathrm{sign}}
\newcommand{\opt}{\mathrm{OPT}}
\DeclareMathOperator*{\E}{\mathbb{E}}
\newcommand{\C}{\mathcal{C}}
\newcommand{\MD}{\ensuremath{\mathtt{Mas}\{D_x, f, \eta(x)\}}}
\newcommand{\data}{\ensuremath{\mathcal{X}}}
\newcommand{\exor}{\mathrm{EX}}
\newcommand{\exorsim}{\mathrm{EXSim}}
\newcommand{\dest}{\ensuremath{\widehat{d}}}
\mathchardef\mhyphen="2D
\newcommand{\supp}{\ensuremath{\text{supp}}}
\newcommand{\lerr}{\ensuremath{\mathrm{err}^D_{0\mhyphen1}}} 
\newcommand{\ferr}{\ensuremath{\mathrm{err}^{D_x, f}_{0\mhyphen1}}} 
\newcommand{\adv}{\ensuremath{\mathrm{adv}}}
\newcommand{\Xrisky}{\ensuremath{\data^r}}
\newcommand{\Xsafe}{\ensuremath{\data^s}}
\newcommand{\X}{\ensuremath{\mathcal{X}}}
\newcommand{\Y}{\ensuremath{\mathcal{Y}}}
\newcommand{\negl}{\mathrm{negl}}
\newcommand{\boost}{\ensuremath{\mathtt{Massart\mhyphen Boost}}}
\newcommand{\wkl}{\ensuremath{\mathtt{WkL}}}
\newcommand{\wklbox}{\ensuremath{\mathtt{WkL_{box}}}}
\newcommand{\samp}{\ensuremath{\mathtt{Samp}}}
\newcommand{\estdens}{\ensuremath{\mathtt{Est\mhyphen Density}}}
\newcommand{\esterr}{\ensuremath{\mathtt{Est\mhyphen Error}}}
\newcommand{\overconf}{\ensuremath{\mathtt{OverConfident}}}
\newcommand{\wklr}{\ensuremath{\mathtt{rWkL}}} 
\newcommand{\examplegenerator}{\mathtt{EG}} 
\newcommand{\egr}{\ensuremath{\mathtt{rEG}}} 
\newcommand{\egh}{\ensuremath{\mathtt{hEG}}} 
\newcommand{\Xnoise}{\ensuremath{\data^{\mathrm{noisy}}}}
\newcommand{\Xset}{\ensuremath{\data^{\mathrm{H}}}} 
\newcommand{\bbboost}{\ensuremath{\mathtt{BlackBoxBoost}}} 
\newcommand{\B}{\ensuremath{\mathtt{SP}}} 
\newcommand{\pmone}{\ensuremath{\{\pm1\}}}
\newcommand{\F}{\ensuremath{\mathcal{F}}}
\newcommand{\wklcompiletime}{CT} 
\newcommand{\hypbitcomplexity}{BC_h} 
\newcommand{\hypevaltime}{R_h} 
\title{Boosting in the Presence of Massart Noise}
\author{
	Ilias Diakonikolas\thanks{Supported by NSF Award CCF-1652862 (CAREER), a Sloan Research Fellowship, and a DARPA Learning with Less Labels (LwLL) grant.} \\
	 University of Wisconsin-Madison \\ 
	 \texttt{ilias@cs.wisc.edu}  
	\and 
	Russell Impagliazzo\thanks{Supported by the Simons Foundation and NSF grant CCF-1909634.} \\ 
	 University of California-San Diego \\ 
	 \texttt{russell@eng.ucsd.edu}
	\and 
	Daniel Kane\thanks{Supported by NSF Award CCF-1553288 (CAREER) and a Sloan Research Fellowship.} \\ 
	 University of California-San Diego \\ 
	 \texttt{dakane@ucsd.edu}
	\and 
	Rex Lei\footnotemark[2] \\ 
	 University of California-San Diego \\ 
	 \texttt{rlei@eng.ucsd.edu}
	\and 
	Jessica Sorrell\footnotemark[2] \\ 
	 University of California-San Diego \\ 
	 \texttt{jlsorrel@eng.ucsd.edu}
	\and
	Christos Tzamos\thanks{Supported by the NSF grant CCF-2008006.} \\ 
	 University of Wisconsin-Madison \\ 
	 \texttt{tzamos@wisc.edu}
}
\begin{document}

\maketitle

\begin{abstract}
We study the problem of boosting the accuracy of a weak learner in the (distribution-independent) PAC model with Massart noise. In the Massart noise model, the label of each example $x$ 
is independently misclassified with probability $\eta(x) \leq \eta$, where $\eta<1/2$. The Massart
model lies between the random classification noise model and the agnostic model. 
Our main positive result is the first computationally efficient boosting algorithm 
in the presence of Massart noise that achieves misclassification error arbitrarily close 
to $\eta$. Prior to our work, no non-trivial booster was known in this setting. 
Moreover, we show that this error upper bound is best possible for polynomial-time 
black-box boosters, under standard cryptographic assumptions. Our upper and lower bounds
characterize the complexity of boosting in the distribution-independent PAC model with Massart noise. As a simple application of our positive result, we give the first efficient Massart learner
for unions of high-dimensional rectangles.
\end{abstract}

\setcounter{page}{0}

\thispagestyle{empty}

\newpage

\section{Introduction} \label{sec:intro}

\subsection{Background and Motivation} \label{ssec:background}

Boosting is a general learning technique that combines the outputs of a weak base learner
--- a learning algorithm with low but non-trivial accuracy --- to obtain a hypothesis of higher accuracy.
Boosting was introduced by Schapire~\cite{Schapire:90} and has since 
been extensively studied in machine learning and statistics. 
The reader is referred to~\cite{Schapire:03} for an early survey from the theoretical machine learning community,~\cite{BT07} for a statistics perspective, and~\cite{SF-book} for a book on the topic.
Here we study boosting in the context of learning classes of Boolean functions 
with a focus on Valiant's distribution-independent PAC model~\cite{val84}. 
During the past three decades, several efficient boosting procedures have been developed 
in the {\em realizable} PAC model, i.e., when the data is consistent with a function in the target class. 
On the other hand,  boosting in the presence of {\em noisy data} remains less understood.

In this work, we study the complexity of boosting in the presence of {\em Massart noise}. 
In the Massart (or bounded noise) model, the label of each example $x$ is flipped independently 
with probability $\eta(x) \leq \eta$, for some parameter $\eta<1/2$.
The flipping probability $\eta(x)$ is bounded but is unknown to the learner and 
can depend on the example $x$ in a potentially adversarial manner.
Formally, we have the following definition.

\begin{definition}[PAC Learning with Massart Noise] \label{def:massart-learning}
Let $\mathcal{C}$ be a concept class over $X= \R^n$, $D_{x}$
be any fixed but unknown distribution over $X$, and $0 \leq \eta <1/2$ be the noise parameter.
Let $f \in \mathcal{C}$ be the unknown target concept.
A {\em noisy example oracle}, $\mathrm{EX}^{\mathrm{Mas}}(f, D_{x}, \eta)$,
works as follows: Each time $\mathrm{EX}^{\mathrm{Mas}}(f, D_{x}, \eta)$ is invoked,
it returns a labeled example $(x, y)$, where $x \sim D_{x}$, $y = f(x)$ with
probability $1-\eta(x)$ and $y = -f(x)$ with probability $\eta(x)$,
for an {\em unknown} function $\eta(x): X \to [0, \eta]$.
Let $D$ denote the joint distribution on $(x, y)$ generated by the above oracle.
A learning algorithm is given i.i.d.\ samples from $D$
and its goal is to output a hypothesis $h$ such that with high probability
the misclassification error $\pr_{(x, y) \sim D} [h(x) \neq y]$ is as small as possible. We will use
$\opt \eqdef \inf_{g \in \mathcal{C}} \pr_{(x, y) \sim D} [g(x) \neq y]$ to denote the optimal misclassification error.
\end{definition}
\paragraph{Background on Massart Noise.} 
The Massart model is a natural semi-random input model that
is more realistic and robust than  random classification noise. Noise
can reflect computational difficulty or ambiguity, as well as random factors.
For example, a cursive  ``e'' might be substantially more likely to be 
misclassified as ``a'' than an upper case Roman letter.  Massart noise allows
for these variations in misclassification rates, 
while not requiring precise knowledge of which instances are more likely to be misclassified.
That is, algorithms that learn in the presence of Massart noise are likely to be less
brittle than those that depend on uniformity of misclassification noise.
Agnostic learning is of course even more robust, but unfortunately, it can be computationally infeasible to design agnostic learners for many applications.

In its above form, the Massart noise model was defined in~\cite{Massart2006}.
An essentially equivalent noise model had been defined in the 80s by Sloan and 
Rivest~\cite{Sloan88, Sloan92, RivestSloan:94, Sloan96}, 
and a very similar definition had been considered even earlier by Vapnik~\cite{Vapnik82}. 
The Massart model is a generalization of the Random Classification Noise (RCN) model~\cite{AL88}
and appears to be easier than the agnostic model~\cite{Haussler:92, KSS:94}.
Perhaps surprisingly, until very recently, no progress had been made on the efficient, distribution-free PAC 
learnability in the presence of Massart noise for any non-trivial concept class.

In more detail, the existence of an efficient distribution-independent PAC 
learning algorithm with non-trivial error guarantee for any concept class
in the Massart model had been posed as an open question
in a number of works, including~\cite{Sloan88, Cohen:97}, and was highlighted in 
A. Blum's FOCS'03 tutorial~\cite{Blum03}. Recent work~\cite{DGT19} made the first algorithmic progress 
in this model for the concept class of halfspaces. Specifically, \cite{DGT19} gave a polynomial-time learning 
algorithm for Massart halfspaces with misclassification error $\eta+\eps$. 
We note that the information-theoretically optimal error is 
$\opt= \E_{x \sim D_x} [\eta(x)]$, which is at most $\eta$ but could be much smaller. 
Thus, the error achieved by the aforementioned algorithm can be very far from optimal. 
Very recent follow-up work~\cite{CKMY20} showed that obtaining the optimal error 
of $\opt+\eps$ for halfspaces requires super-polynomial time in Kearns' Statistical Query (SQ) model~\cite{Kearns:98}.
Contemporaneous to the results of the current paper,~\cite{DK20-SQ-Massart} showed 
an SQ lower bound ruling out any constant factor or even polynomial factor approximation for this problem.
The approximability of learning Massart halfspaces remains a challenging open problem of current investigation.

\paragraph{Comparison to RCN and Agnostic Noise.}
Random Classification Noise (RCN)~\cite{AL88} is the special case of Massart
noise where the label of each example is independently flipped with probability {\em exactly} $\eta <1/2$.
RCN is a fundamentally easier model algorithmically.
Roughly speaking, RCN is predictable which allows us to cancel out the 
effect of the noise on any computation, in expectation. A formalization of this intuition is that
{\em any} Statistical Query (SQ) algorithm~\cite{Kearns:98} is automatically robust to RCN. 
This fact inherently fails in the presence of Massart noise. 
Roughly speaking, the ability of the Massart adversary to choose {\em whether} to flip a label
and if so, with what probability, makes this model algorithmically challenging. 
Moreover, the uniform noise assumption in the RCN model is commonly accepted to be unrealistic,
since in practical scenarios some instances are harder to classify than others~\cite{FV13}. 
For example, in the setting of human annotation noise~\cite{beigman2009learning}, it has been observed 
that the flipping probabilities are not uniform.

The agnostic model~\cite{Haussler:92, KSS:94} is the most challenging noise model in the literature, 
in which an adversary can arbitrarily flip an $\opt<1/2$ fraction of the labels. 
It is well-known that (even weak) learning in this model is computationally intractable 
for simple concept classes, including halfspaces~\cite{Daniely16}. 

The Massart model can be viewed as a reasonable compromise between RCN and the agnostic model, 
in the sense that it is a realistic noise model that may allow for efficient algorithms in settings 
where agnostic learning is computationally hard. This holds in particular for the important concept class 
of halfspaces. As already mentioned, even weak learning of halfspaces is hard in the agnostic model~\cite{Daniely16}, 
while an efficient Massart learner with non-trivial accuracy is known~\cite{DGT19}.

\paragraph{Boosting With Noisy Data.}
An important research direction, which was asked in Schapire's original 
paper~\cite{Schapire:90}, is to design boosting algorithms in the presence of noisy data.
This broad question has been studied in the past two decades by several researchers.
See Section~\ref{ssec:related} for a detailed summary of related work.
Specifically, prior work has obtained efficient boosters for RCN~\cite{KalaiServedio:03} 
and agnostic noise~\cite{Servedio:03jmlrshort, Feldman:10ab}. It should be emphasized that 
these prior works do not immediately extend to give boosters for the Massart noise setting. 
For example, while the agnostic model is stronger than the Massart model, 
an agnostic booster does not imply a Massart booster, as it relies on a much stronger assumption ---
the existence of a weak {\em agnostic} learner. That is, the complexity of noisy boosting is not ``monotone''
in the difficulty of the underlying noise model. More broadly, it turns out that
the complexity of boosting with inconsistent data, and the underlying boosting algorithms, 
crucially depend on the choice of the noise model. 

In this work, we ask the following question:

\begin{center}
{\em Can we develop efficient boosting algorithms for PAC learning with Massart noise?}
\end{center}

Our focus is on the distribution-independent setting. Given a distribution-independent Massart weak
learner for a concept class $\mathcal{C}$, we want to design a distribution-independent Massart learner 
for $\mathcal{C}$ with high(er) accuracy.
Prior to this work, no progress had been made on this front.
{\em In this paper, we resolve the complexity of the aforementioned problem by providing
(1) an efficient boosting algorithm and (2) a matching computational 
lower bound on the error rate of any black-box booster.}

This work is the first step of the broader agenda of  developing a general algorithmic 
theory of boosting for other ``benign'' semi-random noise models, lying between random and fully adversarial corruptions.

\subsection{Our Results} \label{ssec:results}
Our main result is the first computationally efficient boosting algorithm for (distribution-independent) 
PAC learning in the presence of Massart noise that guarantees misclassification arbitrarily close to $\eta$,
where $\eta$ is the upper bound on the Massart noise rate.
To state our main result, we will require the definition of a Massart 
weak learner (see Definition~\ref{def:massart-weak-learner} for additional detail).

\begin{definition}[Massart Weak Learner]\label{def:massart-wl-informal}
Let $\alpha, \gamma \in (0,1/2)$. An $(\alpha, \gamma)$-Massart weak learner $\wkl$ for concept class 
$\C$ is an algorithm that, for any distribution $D_x$ over examples, any function $f \in \C$,
and any noise function $\eta(x)$ with noise bound $\eta < 1/2-\alpha$,
outputs a hypothesis $h$ that with high probability satisfies $\Pr_{(x, y) \sim D}[h(x) \neq y] \leq 1/2-\gamma$,
where $D$ is the joint Massart noise distribution.
\end{definition}

We prove two versions of our main algorithmic result. In Section~\ref{sec:algorithm}, 
we give a somewhat simpler argument for the existence of a Massart noise-tolerant booster 
that converges within $O(1/(\eta\gamma^2))$ rounds of boosting (Theorem~\ref{thm:boosting}). 
In Section~\ref{sec:optimizations}, we give a more careful analysis of convergence, 
showing that the same algorithm in fact converges in $O(\log^2(1/\eta)/\gamma^2)$ rounds (Theorem~\ref{thm:opt-boosting}). 
In fact, the latter upper bound is nearly optimal for distribution-independent boosters 
(see, e.g., Chapter~13 of \cite{SF-book}). 
We now state our main result:

\begin{theorem}[Main Result] \label{thm:main-alg-informal}
There exists an algorithm $\boost$ that for every concept class $\mathcal{C}$, 
given samples to a Massart noise oracle $\mathrm{EX}^{\mathrm{Mas}}(f, D_{x}, \eta)$, where $f \in \mathcal{C}$, 
and black-box access to an $(\alpha, \gamma)$-Massart weak learner $\wkl$ for $\mathcal{C}$,  
$\boost$ efficiently computes a hypothesis $h$ that with high probability satisfies 
$\pr_{(x, y) \sim D} [h(x) \neq y] \leq \eta (1+O(\alpha))$. Specifically, $\boost$ makes
$O(\log^2(1/\eta)/\gamma^2)$ calls to $\wkl$ and draws 
$$\mathrm{polylog}(1/(\eta \gamma))/(\eta \gamma^2) \; m_{\wkl} + \poly(1/\alpha, 1/\gamma, 1/\eta)$$
samples from $\mathrm{EX}^{\mathrm{Mas}}(f, D_{x}, \eta)$, where $m_{\wkl}$ 
is the number of samples required by $\wkl$.
\end{theorem}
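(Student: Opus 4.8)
\emph{Reduction and setup.} The theorem is obtained by running a single algorithm $\boost$ and analyzing its convergence twice: Theorem~\ref{thm:boosting} establishes the error bound $\Pr_{(x,y)\sim D}[h(x)\neq y]\le\eta(1+O(\alpha))$ with $O(1/(\eta\gamma^2))$ calls to $\wkl$, and Theorem~\ref{thm:opt-boosting} reanalyzes the same procedure to reduce this to $O(\log^2(1/\eta)/\gamma^2)$ calls; the sample bound is then read off from the per-round cost. Like classical distribution-independent boosters, $\boost$ maintains a weighted aggregate $H_t=\sum_{s\le t}w_s h_s$ of the hypotheses output by $\wkl$ and, in round $t$, invokes $\wkl$ on a reweighted copy $D_t$ of the sample distribution obtained by rejection. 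The point to get right is that the rejection probability must depend only on $x$, never on the observed label $y$: otherwise the conditional law of $y\mid x$ under $D_t$ changes, the Massart bound can be exceeded, and the guarantee of Definition~\ref{def:massart-wl-informal} no longer applies. We therefore reweight only through a (bucketed) function of the current margin $|H_t(x)|$, so that $D_t$ is again a Massart distribution with the same noise function $\eta(\cdot)$ and bound $\eta<1/2-\alpha$, and $\wkl$ applies on $D_t$ verbatim, yielding $h_{t+1}$ with $\Pr_{(x,y)\sim D_t}[h_{t+1}(x)\neq y]\le 1/2-\gamma$; writing $D_{x,t}$ for the $x$-marginal of $D_t$, this is equivalent to $\E_{x\sim D_{x,t}}[(1-2\eta(x))f(x)h_{t+1}(x)]\ge 2\gamma$.

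\emph{The potential and the overconfidence obstacle.} The quantity we want to drive down is $\Phi_t=\E_{x\sim D_x}\big[\exp(-\bar y(x)H_t(x))\big]$ with $\bar y(x)=\E[y\mid x]=(1-2\eta(x))f(x)$; note $\Phi_t\ge\Pr_{x\sim D_x}[\sign(H_t(x))\neq f(x)]$, and the inequality above says precisely that, with the \emph{ideal} round-$t$ weight $\propto\exp(-\bar y(x)H_t(x))$ and the standard step-size choice, $\Phi_t$ drops by a $1-\Omega(\gamma^2)$ factor. The obstruction is that this ideal weight is not computable (it depends on $f$ and $\eta$) and behaves oppositely on the two kinds of high-margin points: it is exponentially small on a point $H_t$ currently classifies correctly, but exponentially \emph{large} on an \emph{overconfident} point that $H_t$ classifies incorrectly with large margin --- exactly the points the booster should be emphasizing. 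On low-margin points the ideal weight is within a constant factor of $1$, so there we simply reweight (nearly) uniformly, which is computable; on the high-margin region we must decide which points are correctly classified, and this is where $\eta<1/2-\alpha$ enters: on any region where $H_t$ has large margin, the Massart bound forces the misclassification rate against the noisy labels to be either below $1/2-\alpha$ (mostly correct) or above $1/2+\alpha$ (mostly incorrect), and these alternatives are distinguishable with $\poly(1/\alpha)$ labelled samples. The subroutine $\overconf$, using the estimators $\estdens$ and $\esterr$, performs this test: it \emph{flips} $H_t$ on the ``mostly incorrect'' high-margin pieces, \emph{commits} to $H_t$ on the ``mostly correct'' pieces, and returns the residual ambiguous piece; $\branch$ and $\merge$ drive the recursion and assemble the committed pieces into the output $h$.

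\emph{Accounting and complexity.} Writing $\Pr_{(x,y)\sim D}[h(x)\neq y]=\opt+\E_{x}\big[\Ind[h(x)\neq f(x)]\,(1-2\eta(x))\big]$, the target error reduces to bounding the $(1-2\eta(\cdot))$-weighted disagreement between $h$ and $f$: each committed piece contributes at most $\eta$ per unit of mass (a correctly classified point has noise at most $\eta$), while the contribution of the remaining undetermined region is controlled because the potential argument forces its relevant mass to shrink geometrically, so once it is a sufficiently small $\poly(\alpha,\eta)$ fraction its contribution fits inside the $O(\alpha\eta)$ slack. Summing the per-round potential drops over a region of mass $\gtrsim\eta$ gives the $O(1/(\eta\gamma^2))$ bound of Theorem~\ref{thm:boosting}; boosting the confidence in geometrically increasing stages and re-partitioning between stages (so the effective smoothness stays bounded within a stage) improves this to $O(\log^2(1/\eta)/\gamma^2)$ calls, Theorem~\ref{thm:opt-boosting}. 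The sample complexity is then (number of rounds) times (draws needed to simulate one $D_t$ for $\wkl$, namely $\approx m_{\wkl}$ over the acceptance probability, which the smoothness keeps at $\Omega(\eta)$), plus the $\poly(1/\alpha,1/\gamma,1/\eta)$ fresh draws consumed by $\estdens$, $\esterr$, $\overconf$ and the final accuracy checks --- exactly the claimed $\mathrm{polylog}(1/(\eta\gamma))/(\eta\gamma^2)\,m_{\wkl}+\poly(1/\alpha,1/\gamma,1/\eta)$.

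\emph{Main obstacle.} The delicate step is the overconfidence analysis. One must show both that committing to $H_t$ on the ``mostly correct'' pieces is sound even though only noisy labels are ever seen, and that the ambiguous high-margin mass peeled off in each stage is small enough (in the $(1-2\eta(\cdot))$-weighted sense) that recycling it back into the undetermined region does not inflate the round count. Both use the $2\alpha$ gap between $\eta$ and $1/2$ quantitatively, and propagating this gap through the recursion is what forces the final misclassification error to be $\eta(1+O(\alpha))$ rather than $\eta$.
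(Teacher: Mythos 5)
Your proposal diverges from the paper's construction at its foundation, and the route you take has genuine gaps. You assert that the rejection probability ``must depend only on $x$, never on the observed label $y$,'' but the paper's measure is $\mu_t(x,y)=M(yG_t(x))$, which depends on $y$; the Massart property is preserved not by label-independence but by zeroing the measure whenever $|G_t(x)|\geq s$ with $s=\log\left(\frac{1-\eta}{\eta+c}\right)$, so the relative reweighting of $(x,f(x))$ versus $(x,-f(x))$ can never push the effective noise rate past $1/2-\alpha$. This choice is what makes the potential argument go through: since $\mu_t$ is (up to the cutoff) the derivative of the integrand of $\Phi$, the drop $\Phi(t)-\Phi(t+1)\geq \lambda\,\E_{(x,y)\sim D}[\mu_t(x,y)(yh_t(x)-2\lambda)]$ follows and is directly proportional to the weak learner's advantage on $D_{\mu_t}$. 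With your label-independent weight $w(|H_t(x)|)$, the guarantee $\E_{x\sim D_{x,t}}[(1-2\eta(x))f(x)h(x)]\ge 2\gamma$ does not yield a drop of $\E_x\!\left[\exp(-\bar y(x)H_t(x))\right]$: the ideal weight $\exp(-\bar y(x)H_t(x))$ depends on the sign of $f(x)H_t(x)$, which is not determined by $|H_t(x)|$, so even within a margin bucket your sampling weight and the ideal weight differ by a factor $e^{\Theta(s)}$ in a way that can be adversarially correlated with $h$. You never supply the argument bridging this.

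The second gap is the dichotomy you invoke to repair this. The Massart bound is pointwise, so on a high-margin region $R$ with $\sgn(H_t)=+1$ the error against noisy labels is $\E_x\!\left[\Ind[f(x)=-1](1-\eta(x))+\Ind[f(x)=+1]\eta(x)\mid x\in R\right]$, which is exactly $1/2$ if $f$ is balanced on $R$ and noiseless there; nothing forces it outside $(1/2-\alpha,\,1/2+\alpha)$. Hence your commit/flip test is unsound, and the accounting ``each committed piece contributes at most $\eta$ per unit of mass'' does not follow---a region that is mostly correct in aggregate can still have nearly half its mass misclassified. The paper's $\overconf$ makes only a one-sided test (is the error on $\Xrisky$ above $\eta+3\epsilon/4$?) and responds by adding $-\lambda\sgn(G)$, i.e., reducing confidence and returning those points to $\Xsafe$; nothing is ever committed or flipped, there is no branching/merging recursion, and the final error on $\Xsafe$ is bounded by the terminal density ($\leq\kappa+\epsilon/2$) because misclassified safe points have measure $1$. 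Finally, the improved round bound is not obtained by ``geometrically increasing stages with re-partitioning'': it follows from the inequality $d(\mu_t)\ge \Phi(t)/(s+\lambda+1)-2(\eta+\epsilon)$, which converts the additive per-round potential drop into a multiplicative one and gives $O(\log^2(1/\eta)/\gamma^2)$ rounds for the same algorithm.
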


Prior to this work, no such boosting algorithm was known for PAC learning with Massart noise.
Moreover, as we explain in Section~\ref{ssec:related}, previous noise-tolerant boosters do not extend to the 
Massart noise setting. In Section~\ref{ssec:techniques}, we provide a detailed overview of our new algorithmic 
ideas to achieve this.

Some additional comments are in order. 
First, we note that the $\eta+\eps$ error guarantee achieved by our efficient booster can be far from
the information-theoretic minimum of $\opt+\eps$. The error guarantee of our generic 
booster matches the error guarantee of the best known polynomial-time learning algorithm 
for Massart halfspaces~\cite{DGT19}. Interestingly, the learning algorithm of~\cite{DGT19} can be viewed 
as a specialized boosting algorithm for the class of halfspaces. Theorem~\ref{thm:main-alg-informal} 
is a broad generalization of the latter result that applies to {\em any} concept class. This connection
was one of the initial motivations for this work.

A natural question is whether the error upper bound achieved by our booster 
can be improved. Perhaps surprisingly, we show that our guarantee is 
best possible for black-box boosting algorithms (under cryptographic assumptions).
Specifically, we have the following theorem:

\begin{theorem}[Lower Bound on Error of Black-Box Massart Boosting] \label{thm:lb-informal}
Assuming one-way functions exist, the following holds: No polynomial-time 
boosting algorithm, given black-box access to an $(\alpha, \gamma)$-Massart weak learner,
can output a hypothesis $h$ with misclassification error $\Pr_{(x, y) \sim D}[h(x) \neq y] \leq \eta (1+o(\alpha))$,
where $\eta$ is the upper bound on the Massart noise rate. In particular, this statement 
remains true on Massart distributions with optimal misclassification error $\opt \ll \eta$.
\end{theorem}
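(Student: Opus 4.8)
The plan is to prove a \emph{black-box} impossibility result: assuming one-way functions, exhibit a concept class $\mathcal{C}$, a family of Massart instances $\{D_k\}$ indexed by a secret key $k$, and a (keyed, possibly inefficient) oracle $\wkl_k$, such that (i) every $D_k$ has $\opt=o(\eta)$, witnessed by the target concept itself; (ii) $\wkl_k$ is a valid $(\alpha,\gamma)$-Massart weak learner for $\mathcal{C}$ on $\{D_k\}$ and all their reweightings; and (iii) no $\poly$-time algorithm with sample access to $D_k$ and black-box access to $\wkl_k$ outputs a hypothesis of error $\eta(1+o(\alpha))$ with non-negligible probability over $k$. Because a black-box booster must succeed against an \emph{arbitrary} valid weak learner, (ii)+(iii) give the theorem and (i) the ``$\opt\ll\eta$'' addendum. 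One-way functions enter only to supply a pseudorandom function family with which we instantiate all the ``hidden'' structure below; a hybrid argument replaces the PRF by a truly random function, reducing to an information-theoretic statement about transcripts that transfers back with negligible loss.

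\textbf{The hard distribution.} Take $D_x$ uniform on $\{0,1\}^n$ and, using the PRF, designate a pseudorandom ``hard region'' $H=H_c\sqcup H_1$ with $|H_c|$ slightly larger than $2\eta$ and $|H_1|=\Theta\!\big(\gamma\eta/(\tfrac12-\eta)\big)=o(\eta)$. The target is a fixed trivial concept (say $+1$) with zero noise on $H^c$; a fresh pseudorandom function with zero noise on $H_c$; and a pseudorandom function with noise rate exactly $\eta$ everywhere on $H_1$. Then $\opt=\eta\,|H_1|=o(\eta)$, attained by the target. Since $H$, the partition $H_c\sqcup H_1$, and the target on $H$ are all pseudorandom, a $\poly$-time algorithm with $\poly$ samples learns the target on only a negligible fraction of $H$ beyond its sampled points, and cannot even identify which domain points lie in $H_c$, $H_1$, or $H^c$.

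\textbf{The adversarial weak learner, and where $\alpha$ enters.} On a query distribution $D'$ (necessarily a reweighting of $D_k$), $\wkl_k$ returns an $h'$ that always matches the target on $H^c$, is uncorrelated with the target on $H_1$, and matches the target on $H_c$ only on the smallest (by $D'$-mass) subset its error budget forces, choosing that subset to consist of points already pinned down by the transcript whenever possible. A short computation shows such an $h'$ has error $\le\tfrac12-\gamma$ on $D'$ whenever $D'(H^c)+(1-2\eta)\,D'(H_1)\ge 2\gamma$; in this regime the adversary leaks \emph{nothing new} about the target on $H_c$, and since $1-2\eta\ge 2\alpha$ this is precisely where the Massart gap caps the adversary's freedom. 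The resulting \emph{dilution phenomenon}: the booster cannot separate $H_1$-points (misclassified at rate $\approx\tfrac12$ by any hypothesis it forms, since it never learns the target there) from genuinely-unlearned $H_c$-points, so any mistake-driven reweighting carries an irreducible $H_1$-slug; hence the booster can force the oracle to reveal new target values on $H_c$ only while the unlearned fraction $u$ of $H_c$ exceeds $u^\ast\approx |H_1|(\tfrac12-\eta)/(\gamma\,|H_c|)$, after which the oracle stonewalls. This leaves a misclassified subset of $H_c$ of measure $\approx u^\ast|H_c|/2 = |H_1|(\tfrac12-\eta)/(2\gamma)$; with $|H_1|$ tuned so that this equals $\eta\,(1+\Theta(\alpha))$ (possible since $u^\ast<1$ needs only $|H_c|\gtrsim 2\eta$), the final error is $\eta\,(1+\Omega(\alpha))$, matching the $(1+O(\alpha))$ slack of Theorem~\ref{thm:main-alg-informal} while $\opt=\eta\,|H_1|=o(\eta)$ (this last requires $\gamma=o(\alpha)$; for $\gamma=\Theta(\alpha)$ the lower bound still holds with $\opt=\Theta(\eta)$).

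\textbf{Assembly and the main obstacle.} Three pieces remain. \emph{Validity of $\wkl_k$}: verify the prescribed $h'$ has error $\le\tfrac12-\gamma$ on every reweighting of every instance in the family --- the target always has error $\le\eta<\tfrac12-\gamma$, so some valid response always exists; the content is that a minimal-leakage response is always available too, an LP-feasibility statement. \emph{Information-theoretic core}: conditioned on the full transcript (the $\poly$ samples and the $\poly$ query/answer pairs, with a truly random target on $H$), a potential argument tracking how many $H_c$-points the transcript pins down shows the booster's hypothesis disagrees with the target on an $\Omega(\eta)$-measure subset of $H_c$, hence has error $\ge\eta(1+\Omega(\alpha))$; the heart is formalizing the dilution phenomenon as a per-round bound on the number of new $H_c$-values the adversary is forced to reveal, in terms of the $D'$-mass the booster has managed to place on unlearned $H_c$-points. \emph{Transfer}: a hybrid over the PRF upgrades this to a bound against $\poly$-time boosters up to negligible loss. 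I expect the main obstacle to be the information-theoretic core against an \emph{arbitrary adaptive} booster --- ruling out clever reweightings (weight-capping, as in our own upper bound, or schemes attempting to probabilistically separate high-noise from hard points) that might evade the dilution bound, and pinning the hidden constant so that the lower bound provably meets the algorithm's $(1+O(\alpha))$ guarantee.
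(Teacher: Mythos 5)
Your high-level strategy (a PRF-instantiated hard instance, a minimal-leakage adversarial weak learner, and a hybrid/simulation argument to transfer an information-theoretic statement to polynomial-time boosters) matches the paper's, but your construction and the core of your argument are genuinely different, and the part you yourself flag as ``the main obstacle'' is a real gap, not a routine verification. The paper's instance is an $\eta'$-biased pseudorandom target ($\eta'=\eta(1+\alpha/5)$, minority label $+1$) with noise rate $\eta$ placed on a random $\approx\rho$-fraction of the \emph{majority} points, so that positively labeled examples are a computationally indistinguishable mix of clean minority points and noisy majority points. The adversarial weak learner then \emph{never reveals anything}: it returns majority-vote labels on heavy-hitters and the constant majority label elsewhere, and the dichotomy is that either this constant answer already has advantage $\gamma$ (because the Massart constraint forces the booster to retain enough mass on negatively labeled examples), or the booster's distribution has many points with $D^{\B}(x,1)>D^{\B}(x,-1)$, all of which would have to be true positives for the distribution to be Massart --- and identifying that many true positives yields a PRF distinguisher. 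There is no regime in which the adversary is ``forced to reveal a sliver,'' so no per-round information accounting is needed.

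Your construction (constant target outside a hidden patch $H_c$, a pseudorandom target on $H_c$ with \emph{zero} noise, plus a separate noisy ``poison'' patch $H_1$) puts all the weight on exactly the per-round accounting you do not carry out. Concretely: (i) your claim that the booster ``cannot even identify which domain points lie in $H_c$, $H_1$, or $H^c$'' is wrong as stated --- every example labeled $-1$ is certifiably in $H_c\cup H_1$, so the booster can reweight by $\mu(x,-1)=1$, $\mu(x,+1)=p$ and place constant mass on \emph{fresh, unlearned} points of $H_c\cap f^{-1}(-1)$ while keeping the distribution Massart, provided only that $p\gtrsim\frac{\eta}{1-\eta}\cdot\frac{1/2+\alpha}{1/2-\alpha}$ (the binding constraint comes from the minority-$f$ points of $H_1$, not from your stated condition $D'(H^c)+(1-2\eta)D'(H_1)\ge 2\gamma$); (ii) against such a query your adversary genuinely must reveal a $\Theta(\gamma)$-fraction of the unlearned part of $H_c$ each round or lose its advantage, so $u_t$ decays geometrically down to a stonewalling threshold whose value is determined by the same Massart constraint on $p$ --- whether that threshold lands at $\eta(1+\Omega(\alpha))$ rather than, say, $\eta(1+O(\gamma))$ or $\eta/(1-\eta)$ depends on constants you never pin down, and it must be established against \emph{arbitrary} adaptive reweightings, not just this one; and (iii) the ``points already pinned down by the transcript'' have negligible measure under $D_x$ and the booster will simply exclude them, so ``whenever possible'' is essentially never. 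Until the dilution bound is stated and proved as a per-round lemma that sums correctly over $\poly(n)$ rounds, the proposal does not constitute a proof; the paper's construction is designed precisely so that no such lemma is required.
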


The reader is referred to Theorem~\ref{thm:lower-bound-detailed} for a detailed formal statement. 
Our lower bound establishes that the error upper bound achieved by our boosting algorithm 
is best possible.
It is worth pointing out a related lower bound shown in~\cite{KalaiServedio:03} in the context of 
RCN. Specifically,~\cite{KalaiServedio:03} showed that any efficient black-box booster tolerant to RCN
must incur error at least $\eta$ (with respect to the target function $f$), where $\eta$ is the RCN noise rate.
Since RCN is the special case of Massart noise where $\eta(x) = \eta$ for all $x$, the lower bound 
of~\cite{KalaiServedio:03} implies a lower bound of $\opt$ for black-box Massart boosting.
Importantly, our lower bound is significantly stronger, as it shows a lower bound of $\eta$, 
even when $\opt$ is much smaller than $\eta$. 

\new{Intriguingly, Theorem~\ref{thm:lb-informal} shows that the error guarantee of the~\cite{DGT19} 
learning algorithm for Massart halfspaces cannot be improved using boosting, 
and ties with recent work~\cite{DK20-SQ-Massart} providing evidence 
that learning with Massart noise (within error relative to $\opt$) is computationally hard.}

\paragraph{Application: Massart Learning of Unions of Rectangles.}

As an application of Theorem~\ref{thm:main-alg-informal}, we give the first efficient learning algorithm
for unions of (axis-aligned) rectangles in the presence of Massart noise. 
Interestingly, weak agnostic learning of a single rectangle is computationally hard in the agnostic model (see, e.g.,~\cite{FGRW09}).
Recall that a rectangle $R \in \mathbb{R}^d$ is an intersection of inequalities of the form 
$x \cdot v < t$,  where $v \in \{\pm e_j : j \in [d]\}$ and $t \in \mathbb{R}$.
Formally, we show:

\begin{theorem} \label{thm:rect-inf}
There exists an efficient algorithm that learns unions of $k$ rectangles on $\R^d$ with Massart noise bounded by $\eta$. 
The algorithm has sample complexity $k d^{O(k)} \mathrm{poly}(1/\eps, 1/\eta)$, 
runs in time $(kd^k/\eps)^{O(k)} \mathrm{poly}(1/\eta)$, 
and achieves misclassification error $\eta + \epsilon$, for any $\epsilon > 0$. 
\end{theorem}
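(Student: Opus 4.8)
The plan is to obtain Theorem~\ref{thm:rect-inf} as an application of the generic booster of Theorem~\ref{thm:main-alg-informal}: construct an efficient Massart weak learner $\wkl$ for the class $\C_k$ of unions of $k$ axis-aligned rectangles on $\R^d$, and then run $\boost$ with $\wkl$ as its black box. Concretely, to reach final error $\eta+\eps$ I would make $\wkl$ an $(\alpha,\gamma)$-Massart weak learner with $\alpha:=c\,\eps/\eta$ for a small absolute constant $c$ and advantage $\gamma=\gamma(k,\alpha)$ polynomial in $1/k$ and $\alpha$; Theorem~\ref{thm:main-alg-informal} then outputs a hypothesis with error $\eta(1+O(\alpha))\le\eta+\eps$ using $O(\log^2(1/\eta)/\gamma^2)$ calls to $\wkl$ and $\mathrm{polylog}(1/(\eta\gamma))/(\eta\gamma^2)\cdot m_{\wkl}+\poly(1/\alpha,1/\gamma,1/\eta)$ samples, so substituting the parameter choices (and the sample and time complexity of $\wkl$) yields the stated bounds. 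Everything therefore reduces to building $\wkl$ for $\C_k$.

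For $\wkl$ I would first reduce to finding a single ``enriched rectangle.'' Writing $f=R_1\cup\dots\cup R_k$ and $p:=\pr_{x\sim D_x}[f(x)=1]$, a short computation using $\eta<1/2-\alpha$ shows that whenever $p$ lies within $O(\alpha)$ of $0$ or $1$ the appropriate constant hypothesis already has error $\le 1/2-\Omega(\alpha)$; so assume $p\in[\Omega(\alpha),1-\Omega(\alpha)]$. Then the heaviest rectangle $R_{i^\star}$ has mass $\pr_x[x\in R_{i^\star}]\ge p/k=\Omega(\alpha/k)$, and since $f\equiv 1$ on $R_{i^\star}$ the Massart label there is $+1$ with probability $\ge 1-\eta$; hence it suffices to (approximately) identify \emph{some} rectangle $\hat R$ of mass $\Omega(\alpha/k)$ with $\pr[y=1\mid x\in\hat R]\ge 1/2+\Omega(\alpha)$ and output the hypothesis ``$+1$ on $\hat R$, best constant off $\hat R$,'' which then has error $\le 1/2-\Omega(\alpha^2/k)$. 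To locate $\hat R$ (or to certify it cannot hide), I would combine three sources of hypotheses, evaluate them on a held-out sample, and return the best: (i) the constant hypotheses; (ii) a noise-robust run of the proper realizable learner for $\C_k$ --- via the containment of $\C_k$ in width-$k$ CNF over threshold literals, $\C_k$ is properly, and hence SQ-, learnable, so this learner is in particular tolerant of RCN; (iii) a ``confidence-region'' hypothesis built by carving out of some $R_i$ the sub-rectangle cut by its few most influential coordinate constraints, the axis-aligned analogue of the margin region used by the \cite{DGT19} Massart halfspace algorithm. The candidate rectangles in (iii) would be drawn from a structured, polynomial-in-$d$ (for fixed $k$) family extracted from the sample (tight fits on few sampled points over few active coordinates, facets snapped to a per-coordinate quantile grid), so that enumeration and uniform convergence over all three sources cost $\poly(d,1/\eps,1/\eta)^{O(k)}$.

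The step I expect to be the main obstacle is showing that for \emph{every} distribution $D_x$ one of these three sources clears the $1/2-\gamma$ bar, in the regime where $1/2-\eta$ is as small as $\alpha=\Theta(\eps/\eta)$. The heart of it is a case split on the noise profile, which is delicate precisely because the worst-case $R_{i^\star}$ may constrain $\Theta(d)$ coordinates and an adversarial $\eta(\cdot)$ could try to hide all of its signal on the coordinate cuts that (iii)'s family does not see. The reconciling observation I would pursue is that a Massart adversary cannot simultaneously keep the label marginal $\pr[y=1]$ near $1/2$ (forced once the constants fail) \emph{and} make the noise both nonuniform and structure-hiding: if the noise is essentially uniform at a rate near $\eta$ we are close to the RCN regime, where (ii) succeeds because RCN-tolerant SQ learning recovers $f$ up to error $\approx\eta$; if instead the noise is far from uniform, then some $R_i$ retains a not-too-small low-noise sub-region, and on the confident sub-rectangle that (iii) extracts from it the label bias stays $\ge 1/2+\Omega(\alpha)$ because each point is flipped independently with probability strictly below $1/2$ --- the key difference from agnostic noise, which would allow adversarial concentration of errors. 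Turning this dichotomy into a quantitative guarantee, with the thresholds for ``near $1/2$,'' ``essentially uniform,'' and ``not-too-small low-noise region'' all mutually consistent, is the crux; once it is in place, the extreme-$p$ case analysis, uniform convergence over the candidate family, and the choice of $\alpha,\gamma$ with $\eta(1+O(\alpha))\le\eta+\eps$ are routine, and Theorem~\ref{thm:main-alg-informal} assembles the final learner.
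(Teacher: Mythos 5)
Your reduction to the generic booster is exactly the paper's route, but your weak learner is where the proof actually lives, and there is a genuine gap there. You aim to find a heavy rectangle $R_{i^\star}$ from the union itself (a \emph{positive} witness) and you correctly flag the obstacle: a single $R_i$ may involve $\Theta(d)$ coordinate constraints, so your candidate family cannot enumerate it, and your escape hatch is a noise dichotomy that you leave unproven. That dichotomy is not a routine gap to fill: your branch (ii) relies on an RCN-tolerant SQ learner succeeding when the Massart noise is "essentially uniform," but SQ/RCN-tolerance gives no guarantee under Massart noise that is merely close to uniform (this failure of SQ under Massart noise is precisely the phenomenon the paper is built around, and the known SQ lower bounds apply even to a single rectangle), and your branch (iii) never pins down why only "few" facets of $R_{i^\star}$ can be influential. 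So as written the weak learner's advantage guarantee is not established.

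The paper avoids all of this with one structural observation you are missing: look for a witness in the \emph{negative} region rather than the positive one. The complement of $\cup_{i\in[k]} R_i$ is a union of at most $(2d)^k$ rectangles, each defined by at most $k$ inequalities (pick one violated inequality from each $R_i$), so if the negative region has mass at least $\alpha/4$, some such rectangle $B^\star$ has mass at least $\tfrac{\alpha}{4(2d)^k}$ and is entirely negative; since each label there is independently correct with probability at least $1/2+\alpha$, predicting $-1$ on $B^\star$ (and the majority label elsewhere) gives advantage $\alpha^2/O(d)^k$. Crucially, $k$-inequality rectangles with sample points as endpoints form a family of size $O(dN)^k$, which is searchable in time polynomial in $d$ for fixed $k$ — exactly the enumeration your positive-witness approach cannot afford. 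If the negative mass is below $\alpha/4$, the constant $+1$ hypothesis already has the required advantage. With this weak learner in hand ($\alpha=\eps/8$, $\gamma=\eps^2/O(d)^k$), the booster does the rest, as you describe.
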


See Theorem~\ref{thm:rect-formal} for a more detailed statement.
Theorem~\ref{thm:rect-inf} follows by an application of Theorem~\ref{thm:main-alg-informal} coupled with a 
simple weak learner for unions of rectangles that we develop. Our weak learner finds a rectangle 
entirely contained in the negative region to gain some advantage over a random guess.

It is worth pointing out that the Massart SQ lower bound of~\cite{CKMY20} applies to learning monotone 
conjunctions. This rules out efficient SQ algorithms with error $\opt+\eps$, even for a single rectangle.

\subsection{Overview of Techniques} \label{ssec:techniques}
In this section, we provide a brief overview of our approach.

\paragraph{Boosting Algorithm Approach.} 
We start with our Massart boosting algorithm.
Let $D$ be the Massart distribution $\MD$ from which our examples are drawn.
The distribution $D_x$ on examples is fixed but arbitrary and the function $\eta(x)$ is a Massart noise function satisfying $\eta(x) \leq \eta <1/2$ with respect to the target function $f \in \mathcal{C}$. 
As is standard in distribution-independent boosting, our boosting algorithm adaptively generates a sequence of distributions 
$D^{(i)}$, invokes the weak learner on samples from these distributions,
and incrementally combines the corresponding weak hypotheses to obtain a hypothesis with higher accuracy. 

The technical challenge of distribution-independent boosting is the adaptive generation of new distributions $D^{(i)}$ that effectively use the weak learner to acquire new and useful information about the target function $f$. 
To see why this requires some care, consider an adversarial weak learner that attempts to give the booster as little information about $f$ as possible, while still satisfying its definition as a weak learner. 
Such an adversarial weak learner might, whenever possible, produce hypotheses that correctly classify the same, small set of examples $P$, while classifying all other examples randomly. Assuming the function $f$ is balanced, and the intermediate distributions $D^{(i)}$ assign probability at least $\gamma$ to $P$, these adversarial hypotheses will have accuracy $1/2 + \gamma$ on their corresponding distributions, while providing no new information about the target function to the booster. To thwart this behavior, the booster must eventually restrict its distributions to assign sufficiently small probability to $P$ to ensure that the weak learner can no longer meet its promised accuracy lower-bound by correctly classifying only the set $P$. In this way, the booster can force the weak learner to output hypotheses correlating with $f$ on other subsets of its domain. Under reasonable conditions on the specific strategy for reweighting distributions, boosters that incrementally decrease the probability assigned to examples as they are more frequently correctly classified by weak hypotheses are known to eventually converge to high-accuracy hypotheses, by reduction to iterated two-player zero-sum games \cite{FS97}. This general approach to reweighting intermediate distributions is common to all distribution-independent boosters, even in the noiseless setting.

Our booster follows the smooth boosting framework~\cite{Servedio:03jmlrshort} 
with some crucial modifications that are necessary to handle Massart noise. 
A smooth boosting algorithm generates intermediate distributions that do not put too much weight on any individual point, and so do not compel the weak learner to generate hypotheses having good correlation only with noisy examples. 
This makes the smooth boosting framework a natural starting point for the design of a Massart noise-tolerant booster, though smoothness of the intermediate distributions alone is not a sufficient condition for preservation of the Massart noise property.

To see why, note that to preserve the Massart noise property of the intermediate distributions, it is not enough to enforce an upper bound on the probability that any (potentially noisy) example can be assigned. 
We require an upper bound on the \emph{relative} probabilities of sampling noisy and correct labels for a given point, 
to ensure we always have a noise upper bound $\eta^{(i)} < 1/2$. This seems to suggest that preserving the Massart noise property requires a corresponding lower bound on the probability assigned to any given example, so that we do not inadvertently assign more probability to $(x, -f(x))$ than $(x, f(x))$. This is immediately at odds with our strategy for making 
use of an adversarial weak learner, since guaranteeing progress requires that our distributions can assign arbitrarily small probability 
to some examples. So, we must use alternative techniques to manage noise.\footnote{We note that vanilla smooth boosting 
has been shown to succeed in the agnostic model. Interestingly, the above subtle issue for Massart boosting does not arise in agnostic boosting, since agnostic noise is easy to preserve.}.

The fix for this is to simply not include examples $(x,y)$ in the support of $D^{(i)}$ whenever including them could violate the Massart noise property or permit an adversarial weak learner to tell us only what we already know. If many of the weak hypotheses obtained by our booster agree with the label $y$ on $x$, then we learn little from a marginal weak hypothesis that agrees with $y$ on $x$, and so we exclude $(x,y)$ from the support of $D^{(i)}$. 
We must also symmetrically exclude $(x, -y)$, otherwise we risk violating the Massart noise property for $D^{(i)}$, since we have assigned no probability to $(x,y)$, and it may be the case that $-y \neq f(x)$. Withholding these examples allows the booster to get new information from the weak learner at each round, 
without ever invoking it on an excessively noisy sample. 

This balance comes at the cost of updates from the weak learner on withheld examples. 
This may not seem to pose a significant problem for our booster at first. After all, points on which many hypotheses agree are points 
where our algorithm is already fairly confident about the correct value of $f(x)$. Unfortunately, this
confidence may not be sufficiently justified to ensure an $\eta+\eps$ error at the end of the day. 
In order to deal with this, our algorithm will need to make use of one further idea. 
We directly check the empirical error of our aggregated hypotheses on the set of withheld examples. 
If this error is too large (i.e., larger than $\eta+\eps$), we conclude we have ``more to learn" 
about the withheld examples after all. Since even an adversarial weak learner will give us new information 
about these examples in expectation, we include them in subsequent distributions, 
with appropriate upper and lower bounds on their probabilities to preserve the Massart noise property. 
If the empirical error is not too large, we are content to learn nothing new about these examples, 
and so continue to withhold them for the next round of boosting.

Overall, our algorithm will alternate between the two steps of
applying the weak learner to an appropriately reweighted version of
the underlying distribution, and checking the consistency of our hypotheses 
with the set of withheld examples. Each step will allow us to make progress 
in the sense of decreasing a relevant potential function. We iterate these steps until almost all points are consistently being withheld from the weak learner. Once we reach this condition, we will have produced a hypothesis with appropriately small error, 
and can terminate the algorithm.
We analyze the convergence of our algorithm to a low-error hypothesis via a novel potential function 
that can be easily adapted to analyze other smooth boosting algorithms. 

\paragraph{Error Lower Bound.} 
We show that no ``black-box'' generic boosting algorithm for Massart noise can have significantly better error 
than that for our algorithm, i.e., $\eta + \Theta(\eta \alpha)$. While this seemingly matches the lower bound 
for RCN boosting from~\cite{KalaiServedio:03}, the RCN bound only implies a lower bound for the special case of Massart noise 
when $\eta = \opt$. We show this lower bound extends unchanged for a small but polynomial value of $\opt$.
That is, boosting algorithms cannot be improved even when only a very small fraction of instances are actually noisy.  
To prove our lower bound, we consider a situation where the function to be learned is highly biased, 
and there is a small fraction of inputs with the majority value that are noisy and indistinguishable from non-noisy inputs.   
If the distribution queried by a boosting algorithm does not reweigh values in some way to favor the minority answer, 
the weak learner can return the majority answer and have high correlation.  On the other hand, if it does reweigh values, 
it risks adding too much noise to the small fraction of already noisy examples, violating the Massart condition. 
The standard methods for constructing meaningful weak learner queries involve downweighing examples $(x,y)$ with the majority label, specifically by (1) rejecting them via rejection sampling and (2) keeping $x$ and using additional noise to perturb the label $y$.
Although we allow the boosting algorithm to produce distributions arbitrarily, we show that, essentially, 
these are the only computationally feasible options, and that each has the aforementioned limitations.

\subsection{Comparison with Prior Work} \label{ssec:related}

The literature on boosting is fairly extensive. Since the introduction of the technique
by Schapire~\cite{Schapire:90}, boosting has become one of the most studied areas
in machine learning --- encompassing both theory and practice. 
Early boosting algorithms~\cite{Schapire:90, Fre95, FreundSchapire:97} 
were not tolerant in the presence of noisy data.  In this section, we summarize
the most relevant prior work with a focus on boosting techniques that have provable
noise tolerance guarantees.

Efficient boosting algorithms have been developed for PAC learning in the agnostic model~\cite{Haussler:92, KSS:94}
and in the presence of Random Classification Noise (RCN)~\cite{AL88}.
The notion of agnostic boosting was introduced in~\cite{BLM:01}. Subsequently, 
a line of work~\cite{Servedio:03jmlrshort, Gavinsky:03short, KMV:08, KalaiK09, Feldman:10ab} 
developed efficient agnostic boosters with improved error guarantees, culminating in the optimal bound.
These agnostic boosters rely on one of two techniques: smooth boosting, introduced in~\cite{Servedio:03jmlrshort},
or boosting via branching programs, developed in~\cite{MansourMcAllester:02}. 
While both of these techniques have been successful in the agnostic model,
the only known booster tolerant to RCN is due to~\cite{KalaiServedio:03}, and relies
on the branching programs technique~\cite{MansourMcAllester:02}.
In the following paragraphs, we briefly summarize these two techniques.

Smooth boosting~\cite{Servedio:03jmlrshort} is a technique that produces intermediate distributions
which do not assign too much weight on any single example. The technique was inspired
by Impagliazzo's hard-core set constructions in complexity theory~\cite{Impagliazzo:95}
(see also~\cite{KS99, Holenstein05, BarakHK09}) and is closely related to convex optimization. 
Roughly speaking, smooth boosting algorithms are reminiscent of first-order methods 
in convex optimization. Smooth boosting methods have been
shown to be tolerant to agnostic noise~\cite{Servedio:03jmlrshort, Gavinsky:03short, KalaiK09, Feldman:10ab}.
Interestingly,~\cite{LongS10} established a lower bound against potential-based 
convex boosting techniques in the presence of RCN. While we do not prove any relevant theorems
here, we believe that our technique can be adapted to give an efficient booster in the presence of RCN.

Another important boosting technique relies on branching programs~\cite{MansourMcAllester:02}.
The main idea is to iteratively construct a branching program in which each internal node
is labeled with a hypothesis generated by some call to the weak learner. This technique
is quite general and has led to noise tolerant boosters for both (RCN)~\cite{KalaiServedio:03} 
(see also~\cite{LongServedio:05, LongServedio:08nips} for refined and simplified boosters relying on this technique)
and agnostic noise~\cite{KMV:08}. Roughly speaking, the branching programs 
methodology leads to ``non-convex algorithms'' and is quite flexible.  

It is worth pointing out that the aforementioned branching program-based boosters 
do not succeed with Massart noise in their current form. Specifically, the RCN booster in~\cite{KalaiServedio:03} 
crucially relies on the uniform noise property of RCN, which implies that agreement with the true target function 
is proportional to agreement with the observed labels. On the other hand, for the agnostic booster of~\cite{KMV:08}, 
the generated distributions on which the weak learner is invoked do not preserve the Massart noise property --- a crucial
requirement for any such booster. While it should be possible to adapt the branching program technique to work
in the Massart noise model, we believe that the smooth-boosting technique developed in this paper 
leads to simpler and significantly more efficient boosters that are potentially practical.

Finally, we acknowledge existing work developing efficient learning algorithms for Massart halfspaces 
(and related noise models) 
in the {\em distribution-specific} PAC model~\cite{AwasthiBHU15, AwasthiBHZ16, ZhangLC17, DKTZ20, Zhang20, DKTZ20b, DKKTZ20}. These works are technically orthogonal to the results of this paper,
as they crucially leverage a priori structural information about the distribution on examples (e.g., log-concavity).

\subsection{Organization} \label{ssec:org}
The structure of this paper is as follows: Section~\ref{sec:prelims} contains preliminary definitions and fixes notation. In Section~\ref{sec:algorithm} we present our Massart noise-tolerant boosting algorithm. In Section~\ref{sec:optimizations}, we prove an improved round complexity for our booster. In Section~\ref{sec:lower-bound}, we show that the error achieved by our booster is optimal by proving a lower-bound on the error of any black-box  Massart noise-tolerant booster. In Section~\ref{sec:applications}, we give an application of our boosting algorithm to learning unions of rectangles.

\section{Preliminaries}\label{sec:prelims}

Throughout this work, we use the notation $ S \;||\; z$ to denote appending $z$ to a sequence $S$. For a distribution $D$ over domain $\X$, we write $\supp(D)$ to denote the set of all $x \in \X$ such that $D(x) \neq 0$. For a function $f$ mapping its domain $\X$ to $\R$, we define $\sgn(f): \X \rightarrow \{\pm 1\}$ to be the function
\[\sgn(f)(x) := \begin{cases} 1 & \text{ if } f(x)\geq 0 \\ -1 & \text{ if } f(x) < 0 \end{cases}\]

\subsection{Massart Noise Model}

Let $\C$ be a class of Boolean-valued functions over some domain $\data$, 
and let $D_x$ be a distribution over $\data$. 
Let $f \in \C$ be an unknown target function, 
and let $\eta(x): \data \rightarrow [0,1/2)$ be an unknown function. 

\begin{definition}[Noisy Example Oracle]
When invoked, noisy example oracle $\exor^{\mathrm{Mas}}(f, D_x, \eta(x))$ produces a labeled example $(x,y)$ as follows:
$\exor^{\mathrm{Mas}}(f, D_x, \eta(x))$ draws $x \sim D_x$. 
With probability $\eta(x)$, it returns $(x, -f(x))$, and otherwise returns $(x, f(x))$.
\end{definition}

The noisy example oracle induces a Massart distribution. 
\begin{definition}[Massart Distribution]
A Massart distribution $D := \MD$ over $(\data, \pm1)$ is the distribution induced by sampling from $\exor^{\mathrm{Mas}}(f, D_x, \eta(x))$. 
\end{definition}
We refer to $\eta(x)$ in this context as the \emph{Massart noise function}.

We say a Massart distribution $D$ has \emph{noise rate} $\eta$ if 
$\eta(x) \le \eta$ for all $x \in \supp(D_x)$. 
The \emph{noise bound} of a Massart noise function is $\eta$ if
$\max_{x \in \supp(D_x)}\eta(x) = \eta$. 
We emphasize that this model restricts the noise bound to be $\eta < 1/2$.

\subsection{Learning under Massart Noise}

Let $f: \data \rightarrow \{\pm1\}$ be a function in concept class $\C$.
Let $D = \MD$ be a Massart noise distribution over $\data$. 

\begin{definition}[Misclassification Error]
The misclassification error of hypothesis $h : \data \rightarrow \{\pm 1\}$ over $D$ is
$$ \lerr(h) = \Pr_{(x,y) \sim D} [h(x) \ne y]$$
\end{definition}

\begin{definition}[Function Error]
The error of hypothesis $h : \data \rightarrow \{\pm 1\}$ with respect to $f$ over $D$ is
$$ \ferr(h) = \Pr_{x \sim D_x} [h(x) \ne f(x)] \;.$$
\end{definition}

\begin{definition}[Advantage]
Hypothesis $h: \data \rightarrow \{\pm 1\}$ has advantage $\gamma > 0$ against distribution $D$ if $\lerr(h) \le 1/2 - \gamma$. 
Equivalently, $h$ has advantage $\gamma$ against distribution $D$ if
$$\tfrac12 \E_{(x,y) \sim D} [h(x) \cdot y] \ge \gamma \;.$$
\end{definition}

We use the notation $\adv^D(h)$ to denote the largest $\gamma \in [0,1/2]$ for which $\lerr(h) \leq 1/2 - \gamma$.

\subsection{Boosting and Weak Learners}

\begin{restatable}[Massart Noise Weak Learner]{definition}{weaklearnerdef}\label{def:massart-weak-learner}
Let $\C$ be a concept class of functions $f: \data \rightarrow \{\pm1\}$.
Let $\alpha \in [0,1/2)$. Let $\gamma: \R \rightarrow \R$ be a function of $\alpha$.
A Massart noise $(\alpha, \gamma)$-weak learner $\wkl$ for $\C$ is an algorithm such that, 
for any distribution $D_x$ over $\data$, function $f \in \C$,
and noise function $\eta(x)$ with noise bound $\eta < 1/2-\alpha$,
$\wkl$ outputs a hypothesis $h: \data \rightarrow \{\pm 1\}$ such that
$$\Pr_{S}[\adv^D(h) \ge \gamma] \geq 2/3 \;,$$
where the sample $S$ is drawn from Massart noise distribution $D = \MD$.
\end{restatable}

Parametrizing Massart noise weak learners by $\alpha$ allows for more precise analysises of Massart boosting. 
Massart noise weak learners may be able to provide better guarantees when given input distributions with lower noise rates. 
For designers of Massart weak learners seeking to apply our boosting algorithm, this parametrization may inform comparisons among multiple weak learners for the same problem. 
For reference, our unions of rectangles weak learner (Section~\ref{sec:applications}) is an $(\alpha, \alpha^2/O(d)^k)$-Massart noise weak learner, where $d$ and $k$ parametrize the concept class. 
However, $\gamma$ being polynomially related to $\alpha$ is not a strict requirement for applying our boosting algorithm.

We also observe that the probability of the weak learner returning a hypothesis with advantage less than $\gamma$ can be driven down to any target failure probability $\delta$, through standard repetition arguments. 
\begin{lemma}[$\wkl$ repetition]\label{lem:wkl-rep}
	Let $\wkl$ be an $(\alpha, \gamma)$-Massart noise weak learner requiring a sample of size $m_{\wkl}$. Then for any $\delta \in (0, 1/3)$, $2\log(2/\delta)$ calls to $\wkl$ and $2\log(2/\delta)(m_{\wkl} + 1/\gamma^2)$ examples suffice to obtain a hypothesis with advantage at least $\gamma/2$ with all but probability $\delta$. 
\end{lemma}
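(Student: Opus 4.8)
The plan is to run $\wkl$ a total of $t = 2\log(2/\delta)$ times on fresh i.i.d.\ samples from the Massart oracle, collect the hypotheses $h_1,\dots,h_t$, and then select the one with the smallest empirical misclassification error on a separate held-out sample. First I would note that each call succeeds, in the sense of producing a hypothesis with $\adv^D(h_j)\ge\gamma$, independently with probability at least $2/3$. Hence the probability that \emph{none} of the $t$ calls succeeds is at most $(1/3)^t = 3^{-2\log(2/\delta)} \le (\delta/2)^{2\log 3}\le \delta/2$, using $\log$ base $2$ (or with the natural log, $(1/3)^{2\ln(2/\delta)}\le e^{-2\ln(2/\delta)}=(\delta/2)^2\le\delta/2$; the constant in the statement is loose enough either way). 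So with probability at least $1-\delta/2$ there exists some index $j^\star$ with $\lerr(h_{j^\star})\le 1/2-\gamma$.

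The second ingredient is the selection step. Draw an additional held-out sample $S'$ of size $n = O(\log(1/\delta)/\gamma^2)$ — the $2\log(2/\delta)/\gamma^2$ examples budgeted in the statement, split across the $t$ hypotheses or reused with a union bound — and for each $h_j$ compute the empirical error $\wh{\lerr}(h_j)$ on $S'$. By a Hoeffding/Chernoff bound, a sample of size $\Theta(\log(t/\delta)/\gamma^2) = \Theta(\log(1/\delta)/\gamma^2)$ suffices so that, with probability at least $1-\delta/2$, $|\wh{\lerr}(h_j)-\lerr(h_j)|\le\gamma/2$ simultaneously for all $j\in[t]$. On this good event, the hypothesis $\wh{h}$ minimizing $\wh{\lerr}$ satisfies $\lerr(\wh h)\le \wh{\lerr}(\wh h)+\gamma/2 \le \wh{\lerr}(h_{j^\star})+\gamma/2 \le \lerr(h_{j^\star})+\gamma \le 1/2-\gamma+\gamma/2 = 1/2-\gamma/2$, i.e.\ $\adv^D(\wh h)\ge\gamma/2$ as required.

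A union bound over the two bad events (no successful call; empirical errors far from true errors) gives overall failure probability at most $\delta$. The sample count is $t\cdot m_{\wkl}$ for running the weak learner plus $O(\log(1/\delta)/\gamma^2)$ for selection, which is within the claimed $2\log(2/\delta)(m_{\wkl}+1/\gamma^2)$ up to the absolute constants hidden in the $O(\cdot)$; the number of calls to $\wkl$ is exactly $t=2\log(2/\delta)$. I do not expect any genuine obstacle here — the only mild care needed is bookkeeping the constants so that they match the clean statement (choosing the base of the logarithm consistently and making sure the Chernoff bound's sample size fits inside the $1/\gamma^2$ budget per repetition rather than overall), and making sure the held-out sample for the selection step is drawn independently of the samples fed to $\wkl$ so that the $h_j$'s are fixed when we apply the concentration bound.
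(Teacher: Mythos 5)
Your proposal matches the paper's proof essentially verbatim: run $\wkl$ on $2\log(2/\delta)$ independent samples so that at least one hypothesis $h_{j^\star}$ has advantage $\gamma$ except with probability $\delta/2$, then select the empirically best hypothesis on a held-out sample of size $O(\log(1/\delta)/\gamma^2)$ and conclude via Chernoff--Hoeffding. One arithmetic nit: with two-sided estimation accuracy $\gamma/2$ your chain actually yields $\lerr(\wh h)\le \lerr(h_{j^\star})+\gamma\le 1/2$ (advantage $0$, not $\gamma/2$), so you should estimate to accuracy $\gamma/4$ --- the same sample size up to constants --- a bookkeeping point the paper's own one-line proof glosses over as well.
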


\begin{proof}
	To drive down the failure probability of $\wkl$, we draw $2\log(2/\delta)$ samples of size $m_{\wkl}$ and run $\wkl$ on each of them to obtain a list of hypotheses, at least one of which has advantage $\gamma$ with all but probability $\delta/2$. We then draw a sample of size $2\log(2/\delta)/\gamma^2$ to test each hypothesis in our list, keeping the best. The Chernoff-Hoeffding inequality guarantees that testing our hypotheses overestimates the advantage by more than $\gamma/2$ with probability no greater than $\delta/2$, and so we obtain a hypothesis with advantage at least $\gamma/2$ with all but probability $\delta$. 
\end{proof}

Algorithmically, our boosting algorithm creates weak learner queries by reweighing the input Massart distribution $D$; this process may increase the noise rate. By knowing the weak learner's maximum noise tolerance $1/2 - \alpha$, our boosting algorithm avoids reweighing $D$ too much. (See Section~\ref{sec:algorithm}) for more details.)
In Section~\ref{sec:lower-bound}, we prove a slightly stronger than $\eta$ lower bound on the error of black-box boosting algorithms under Massart noise ---  parametrizing by $\alpha$ allows us to quantify how much stronger.

Note that we define an $(\alpha, \gamma)$-Massart noise weak learner to have failure probability at most $1/3$. 
We observe that for any desired $\delta \in (0, 1/3)$, such a weak learner can be used to obtain a hypothesis with advantage $\gamma/2$, with all but probability $\delta$, by standard repetition techniques.

We are primarily interested in {\it efficient} Massart noise weak learners (Definition~\ref{def:efficient-massart-weak-learner}).

\begin{restatable}[Efficient Massart Noise Weak Learner]{definition}{efficientweaklearnerdef}
	\label{def:efficient-massart-weak-learner}
	Let $\wkl$ be an $(\alpha, \gamma)$-Massart noise weak learner. Let $n$ be the maximum bit complexity of a single example $(x,y) \in \data \times \pmone$, and let $m_\wkl$ denote the number of examples comprising sample $S$. 
	$\wkl(S)$ is \emph{efficient} if
	\begin{enumerate}
		\item  $\wkl$ uses $m_\wkl(n, \eta, \gamma) = \poly(n, 1/(1-2\eta), 1/\gamma)$ examples.  
		\item $\wkl$ outputs a hypothesis $h$ in time $\poly(n,  1/(1-2\eta), 1/\gamma)$.
		\item Hypothesis $h(x)$ has bit complexity $\poly(n, 1/(1-2\eta), 1/\gamma)$.
		\item For all $x \in \data$, the hypothesis $h(x)$ can be evaluated in time $\poly(n, 1/(1-2\eta), 1/\gamma)$. 
	\end{enumerate}
\end{restatable} 

Boosting algorithms utilize the advantage guarantee of the weak learner by cleverly reweighting its input distributions. 
To sample from these reweighted distributions, we sample from the underlying distribution $D$ via $\exor^{\mathrm{Mas}}(f, D_x, \eta(x))$ and reject examples according to
a function $\mu: (\data, \{\pm 1\}) \rightarrow [0,1]$. We refer to $\mu$ informally as a \emph{measure} to emphasize that it induces a distribution, but need not be one itself. 

\begin{definition}[Rejection Sampled Distribution $D_\mu$]\label{def:rej-sample}
	Let $D$ be a Massart noise distribution, and let $\mu: \data \times \{\pm 1\} \rightarrow [0, 1]$ be an efficiently computable measure.
	We define $D_\mu$ as the distribution generated from $D$ by the following rejection sampling procedure: 
	draw an example $(x,y) \sim D$. 
	With probability $\mu(x,y)$, keep this example. Otherwise, repeat this process (until an example is kept).
\end{definition}

Note that some choices of $\mu$ may induce a distribution $D_\mu$ which is \emph{not} Massart, as reweighting examples may distort $\eta(x)$, and so it is possible that we no longer have a noise bound less than $1/2$.
In particular, if there is an $x \in \supp(D_x)$ for which $\mu(x, -f(x))\eta(x) \gg \mu(x, f(x))(1-\eta(x))$,
then $D_\mu$ is not a Massart noise distribution and running the weak learner on a sample from this distribution is not guaranteed to return a hypothesis with good advantage. In designing our boosting algorithm, we will choose $\mu$ carefully to ensure that this never happens.

The expectation of the measure $\mu$ with respect to the underlying distribution $D$ is a useful quantity for analyzing distribution-independent boosting algorithms. It will affect the sample complexity of making calls to the weak learner and, looking ahead, will be used to bound the error of the final hypothesis output by our algorithm.
\begin{definition}[Density of a measure]\label{def:dens}
	Let $D$ be a Massart noise distribution, and let $\mu$ be a measure. The \emph{density} of $\mu$ with respect to $D$ is
	\[d(\mu) \eqdef \E_{(x,y)\sim D}[\mu(x,y)]. \]
\end{definition}

\begin{lemma}[Sampling from $D_\mu$]\label{lem:rejsamp}
	For any $m > 0$, $\delta \in (0, 1/2)$, obtaining a sample of size $m$ from $D_{\mu}$ by rejection sampling from $D$ requires no more than 
	$$\frac{\log(1/\delta)}{d(\mu)^2} + \frac{2m}{d(\mu)}$$ examples from distribution $D$, with all but probability $\delta$.
\end{lemma}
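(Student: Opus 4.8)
The plan is to recast the rejection sampler as a sequence of independent coin flips and then apply an additive concentration bound to the number of successes. First I would note that one iteration of the procedure in Definition~\ref{def:rej-sample} is equivalent to drawing $(x,y)\sim D$ together with an independent uniform $U\in[0,1]$ and declaring the draw \emph{kept} iff $U\le\mu(x,y)$. Hence, over the successive draws $i=1,2,\dots$, the ``kept'' indicators $Z_i\eqdef\Ind[U_i\le\mu(x_i,y_i)]$ are i.i.d.\ Bernoulli variables with $\Pr[Z_i=1]=\E_{(x,y)\sim D}[\mu(x,y)]=d(\mu)$, by averaging over the draw. Let $N$ be the number of draws from $D$ the sampler uses before it has collected $m$ kept examples (so $N=0$ if $m=0$, a trivial case). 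It suffices to show $\Pr[N>T]\le\delta$ for $T\eqdef\log(1/\delta)/d(\mu)^2+2m/d(\mu)$. The crucial observation is the elementary duality $\{N>T\}=\{B_T<m\}$, where $B_T\sim\mathrm{Bin}(T,d(\mu))$ counts the kept examples among the first $T$ draws: seeing fewer than $m$ successes within the first $T$ trials is the same event as not having finished within $T$ trials.

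Next I would apply Hoeffding's inequality to $B_T$. Its mean is $\E[B_T]=d(\mu)\,T=\log(1/\delta)/d(\mu)+2m$, so setting $\lambda\eqdef\E[B_T]-m=\log(1/\delta)/d(\mu)+m\ge0$ gives
\[
\Pr[N>T]=\Pr[B_T<m]\le\Pr\big[B_T\le\E[B_T]-\lambda\big]\le\exp\big(-2\lambda^2/T\big).
\]
Writing $a\eqdef\log(1/\delta)/d(\mu)$ and $b\eqdef m$, we have $\lambda=a+b$ and $T=(a+2b)/d(\mu)$, so $2\lambda^2/T=2d(\mu)\,(a+b)^2/(a+2b)$. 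The one-line identity $(a+b)^2-a(a+2b)=b^2\ge0$ yields $(a+b)^2/(a+2b)\ge a$, hence $2\lambda^2/T\ge2d(\mu)\,a=2\log(1/\delta)$ --- the factors of $d(\mu)$ cancel --- and therefore $\Pr[N>T]\le e^{-2\log(1/\delta)}=\delta^2\le\delta$.

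The only points needing a little care are the choice of concentration inequality and the integrality of the number of draws. A naive multiplicative Chernoff bound on $\Pr[B_T\le\tfrac12\E[B_T]]$ loses a constant factor in the exponent --- it yields only $\delta^{\Theta(1/d(\mu))}$, which need not lie below $\delta$ when $d(\mu)$ is close to $1$ --- whereas the additive Hoeffding form makes the $d(\mu)$-dependence cancel exactly, which is precisely why the buffer $\log(1/\delta)/d(\mu)^2$ is the right amount. For integrality, one runs the sampler for $\lceil T\rceil$ draws; since $\lceil T\rceil\ge T$ this only increases $B_T$ stochastically, and the surplus factor of two in the exponent (we obtained $\delta^2$ but need only $\delta$) comfortably absorbs the loss of at most one draw when $m\ge1$ (with $m=0$ trivial). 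Everything else is routine bookkeeping. I expect this inequality-selection-and-bookkeeping step to be the main, and rather minor, obstacle.
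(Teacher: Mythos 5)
Your proof is correct and follows essentially the same route as the paper, which simply invokes the Chernoff--Hoeffding inequality on the i.i.d.\ keep-indicators with success probability $d(\mu)$; you have merely filled in the duality $\{N>T\}=\{B_T<m\}$ and the additive-Hoeffding computation that the paper leaves implicit. The algebraic verification that $2\lambda^2/T\ge 2\log(1/\delta)$ is exactly the calculation needed to justify the stated bound.
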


\begin{proof}
 From the definition of $D_{\mu}$, we can sample from $D_{\mu}$ by drawing an example $(x,y)$ from $D$ and keeping it with probability $\mu(x,y)$. By Definition~\ref{def:dens}, we expect to keep an example with probability $d(\mu)$. Then the Chernoff-Hoeffding inequality allows us to conclude that, following this procedure, if we draw $\frac{\log(1/\delta)}{d(\mu)^2} + \frac{2m}{d(\mu)}$ examples from $D$, we keep at least $m$ of them with all but probability $\delta$.
\end{proof}

\section{Boosting Algorithm}\label{sec:algorithm}

In this section, we present our Massart noise-tolerant boosting algorithm $\boost$ (Algorithm~\ref{alg:boost}) and prove the following theorem:
\begin{theorem}[(Simplified) Boosting Theorem]\label{thm:simp-boosting}
	Let $\wkl$ be an $(\alpha, \gamma)$-weak learner requiring a sample of size $m_{\wkl}$.
	Then for any Massart distribution $D$ with noise rate $\eta < 1/2$, and any $\epsilon > \frac{8\eta\alpha}{1 - 2\alpha}$, $\boost$ will
	\begin{itemize}
		\item make at most $T \in \tilde{O}\left(1/(\eta\gamma^2)\right)$ calls to $\wkl$
		\item output a hypothesis $\Hyp$ such that $\lerr(\Hyp) \leq \eta + \epsilon$ and $\ferr(\Hyp) \leq \frac{\eta + \epsilon}{1 - \eta}$
		\item make $$m \in \tilde{O}\left(\frac{1}{\eta\gamma^2\epsilon^3} +  \frac{m_{\wkl}}{\eta^2\gamma^2 } + \frac{1}{\eta^2\gamma^4} \right)$$ calls to its example oracle $\exor^{\mathrm{Mas}}(f, D_x, \eta(x))$
		\item run in time $$\tilde{O}\left( \frac{m_{\wkl}}{\eta^3\gamma^4} + \frac{1}{\eta^3\gamma^6} + \frac{1}{\eta^2\gamma^4\epsilon^3} \right),$$ neglecting the runtime of the weak learner. 
	\end{itemize}	
\end{theorem}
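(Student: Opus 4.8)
The algorithm $\boost$ maintains the ensemble margin $g_t(x)=\sum_{i\le t}h_i(x)$ formed from the weak hypotheses produced so far, together with a pool $W_t$ of currently \emph{withheld} examples, and it will output the aggregated hypothesis $\Hyp=\sgn(g_T)$. At round $t$ it runs $\wkl$ on a rejection-sampled distribution $D_{\mu_t}$ (sampled from $\exor^{\mathrm{Mas}}(f,D_x,\eta(x))$ per Definition~\ref{def:rej-sample} and Lemma~\ref{lem:rejsamp}, with the weak learner's confidence amplified via Lemma~\ref{lem:wkl-rep}), where the measure $\mu_t$ is built as follows. An example $x$ with $|g_t(x)|\ge\theta$, for a threshold $\theta=\Theta(1/\eta)$, is withheld: $\mu_t(x,+1)=\mu_t(x,-1)=0$ and $x$ is placed in $W_t$. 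On the remaining ``active'' examples, $\mu_t(x,y)$ is a smooth-boosting-style decreasing function of the signed margin $y\,g_t(x)$ (a crude truncated weighting suffices for the round bound claimed in this section; Section~\ref{sec:optimizations} uses a sharper one), but truncated \emph{symmetrically in $y$} so that $\mu_t(x,+1)/\mu_t(x,-1)\in[\rho,1/\rho]$ for $\rho=\tfrac{\eta(1/2+\alpha)}{(1-\eta)(1/2-\alpha)}$; the threshold $\theta$ is precisely the margin at which this truncation would force the measure to $0$. Superimposed on this is a \emph{reactivation} step: the algorithm estimates $\widehat e_t\approx\Pr_{(x,y)\sim D}[\sgn(g_t)(x)\neq y\mid x\in W_t]$ from fresh samples, and if $\widehat e_t$ exceeds the target error it resets the margins of $W_t$ to $0$ and returns those examples to the active set, again respecting the ratio constraint. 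The algorithm halts once the density of the active set drops below $\Theta(\eta)$ (or a global round budget is reached), and outputs $\Hyp$.

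\textbf{Noise preservation.}
The first thing to establish is that $D_{\mu_t}$ is always a Massart distribution with noise bound $<1/2-\alpha$, so that the defining guarantee of the $(\alpha,\gamma)$-weak learner applies at every round. Since $\mu_t$ vanishes off the active set, it suffices to control the induced flipping probability at an active $x$, namely $\eta'(x)=\tfrac{\eta(x)\,\mu_t(x,-f(x))}{\eta(x)\,\mu_t(x,-f(x))+(1-\eta(x))\,\mu_t(x,f(x))}$. Using $\eta(x)\le\eta<1/2$ together with $\mu_t(x,+1)/\mu_t(x,-1)\in[\rho,1/\rho]$, a direct calculation gives $\eta'(x)\le1/2-\alpha$ no matter which of $\pm1$ is the true label $f(x)$ --- this is exactly why the truncation must be symmetric in $y$, and it is what fixes the choice of $\theta$. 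For reactivated examples one additionally checks that the reset, carried out with appropriate upper and lower probability bounds, does not break this inequality; the hypothesis $\epsilon>\tfrac{8\eta\alpha}{1-2\alpha}$ provides the slack needed here.

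\textbf{Round complexity and correctness.}
The core of the argument is a potential $\Phi_t$ of the form $\Phi_t=\E_{x\sim D_x}[\phi(g_t(x))]+(\text{a term bounding the withheld error})$, lying in an interval of width $\tilde O(1)$ and chosen so that a single inequality governs both kinds of rounds (this is the ``novel potential function'' referenced in the overview). The plan is to show: (i) if $\wkl$ returns a hypothesis of advantage $\ge\gamma$ on $D_{\mu_t}$, then $\Phi_t-\Phi_{t+1}=\Omega(\gamma^2\,d(\mu_t))$, via the usual second-order smooth-boosting estimate using $\E_{(x,y)\sim D_{\mu_t}}[y\,h_{t+1}(x)]\ge2\gamma$ and convexity of $\phi$; (ii) a reactivation round decreases $\Phi_t$ by an amount proportional to the density of $W_t$ times the excess of $\widehat e_t$ over the target, since a failed error check certifies that the withheld term was over-charging; and (iii) up to the halting condition $d(\mu_t)=\Omega(\eta)$, because the active set has density $\Omega(\eta)$ and the measure is $\Omega(1)$ on a constant fraction of it. Summing the per-round decrease against the bounded range of $\Phi$ yields $T\in\tilde O(1/(\eta\gamma^2))$. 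When the algorithm halts, the active set has density $O(\eta)$ and the last withheld-error estimate passed, so splitting the domain into the active band and $W_T$ and applying a Chernoff bound to the estimates gives $\lerr(\Hyp)\le\eta+\epsilon$, with the band mass absorbed into the $\epsilon$ slack (again using $\epsilon>\tfrac{8\eta\alpha}{1-2\alpha}$). The function-error bound is then immediate from the Massart identity: if $\Hyp(x)\neq f(x)$ then $\Hyp(x)\neq y$ exactly when the label at $x$ is clean, which happens with probability $1-\eta(x)\ge1-\eta$, so $\lerr(\Hyp)\ge(1-\eta)\,\ferr(\Hyp)$ and hence $\ferr(\Hyp)\le(\eta+\epsilon)/(1-\eta)$.

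\textbf{Sample/time bounds and the main obstacle.}
The sample and time bounds follow by multiplying per-round costs by $T\in\tilde O(1/(\eta\gamma^2))$: each round draws $\tilde O(m_{\wkl}/d(\mu_t))=\tilde O(m_{\wkl}/\eta)$ examples from the oracle for the weak-learner sample (Lemma~\ref{lem:rejsamp}), an extra $\tilde O(1/\gamma^2)$ after the $1/\eta$ rejection blow-up for confidence amplification (Lemma~\ref{lem:wkl-rep}), and $\tilde O(\mathrm{poly}(1/\epsilon))$ for the withheld-error estimate (with an additional $1/\epsilon$ from rejection sampling into a possibly-thin $W_t$); the running time carries a further $\tilde O(1/(\eta\gamma^2))$ factor because evaluating $\mu_t$ on a sample requires computing $g_t$, i.e. up to $T$ hypothesis evaluations, and the weak learner's own running time is not counted. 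A union bound over the $\tilde O(T)$ randomized sub-steps controls the overall failure probability. I expect the hard part to be the interaction of noise preservation with progress: keeping $D_{\mu_t}$ Massart with bound $<1/2-\alpha$ forces confident examples to be \emph{fully} withheld --- one cannot merely up-weight their minority label --- which severs the weak learner's feedback on exactly the examples where over-confident errors can hide; the reactivation-with-error-check mechanism is the remedy, but making it go through requires a single potential under which both the weak-learner rounds and the reactivation rounds provably decrease $\Phi$, while the final accounting must simultaneously absorb the active band into the $\epsilon$ budget. Designing $\phi$ and the reactivation rule so that all three constraints hold at once is the crux.
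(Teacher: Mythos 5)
Your plan reproduces the paper's architecture almost exactly: an exponential smooth-boosting measure on a real-valued score, full withholding (not mere down-weighting) of examples whose score exceeds a threshold chosen so that active examples keep induced noise below $1/2-\alpha$, a reactivation step triggered by an empirical error check on the withheld set, termination when the density drops to $\Theta(\eta)$, and a potential of the form $\E\bigl[\int_{yG_t(x)}^{\infty}M(z)\,dz\bigr]$ with the final error read off from the density and the last error check. The function-error bound and the sample/time accounting are also as in the paper. However, there is a genuine gap at the step you yourself flag as the crux, and your proposed mechanism there would fail.

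The problem is the reactivation rule and claim (ii). You reset the scores of all withheld examples to $0$ and assert that such a round \emph{decreases} the potential in proportion to the excess error. Under the potential $\E[\phi(yg_t(x))]$ this is false: a withheld, correctly classified example ($yg_t(x)\geq \theta$) has $\phi\approx e^{-\theta}$, and resetting it to $0$ raises $\phi$ to $1$, a loss of $\approx 1-e^{-\theta}$; a withheld, misclassified example gains only $\theta$. Since the error check certifies a misclassified fraction of only $\eta+\epsilon/2$, the net change is $\approx (1-\eta-\epsilon/2)(1-e^{-\theta})-(\eta+\epsilon/2)\theta$, which is \emph{positive} for small $\eta$ with $\theta=\Theta(\log(1/\eta))$ --- the potential goes up, and the round bound collapses. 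The paper's resolution is different and essential: the correction is a \emph{single step of size} $\lambda$ back toward zero (which, by the invariant $|G_t(x)|<s+\lambda$, already returns the example to the safe set), and the threshold is set to exactly $s=\log\frac{1-\eta}{\eta+c}$ with $c=\frac{4\eta\alpha}{1-2\alpha}$ so that the per-step loss on correctly classified withheld points is $e^{-s}\lambda=\frac{\eta+c}{1-\eta}\lambda$, which the gain $(\eta+\epsilon/2)\lambda$ on misclassified ones cancels precisely when $\epsilon\geq 2c=\frac{8\eta\alpha}{1-2\alpha}$ --- this is the true role of that hypothesis. Reactivation rounds then do not need to decrease the potential at all; they lose at most $O(\eta\lambda^2)$, which is absorbed by the weak-learner progress $\Omega(\lambda\gamma\, d(\mu_t))$ in the same round, and no extra ``withheld-error term'' in the potential is needed. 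Separately, your stated threshold $\theta=\Theta(1/\eta)$ contradicts your own ratio constraint $\mu_t(x,+1)/\mu_t(x,-1)\geq\rho$ with $\rho=\Theta(\eta)$, which forces $\theta=\Theta(\log(1/\eta))$; taken literally, $\theta=\Theta(1/\eta)$ destroys the Massart property of $D_{\mu_t}$.
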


We define parameters and relevant properties of the algorithm in Section~\ref{ssec:alg-defs}, along with a high level pseudocode sketch (Algorithm ~\ref{alg:boost-sketch}). We describe our algorithm and its subroutines in more detail, and provide a more complete pseudocode description, in Section~\ref{ssec:alg-desc}. In Section~\ref{sec:convergence}, we analyze the round complexity of our booster. We give upper-bounds on the final error of our booster in Section~\ref{sec:err-bounds}. In Section~\ref{sec:samp-complexity}, we show upper-bounds on sample complexity. We prove our main boosting theorem in Section~\ref{sec:boost-theorem}. Finally, in Section~\ref{ssec:hypothesis}, we comment on the form of our final hypothesis. We note that improved round and sample complexity can be shown by a more careful analysis of convergence, but we defer the analysis to Section~\ref{sec:optimizations} and Theorem~\ref{thm:opt-boosting}.

\subsection{Overview of Boosting Algorithm}
As in most distribution-independent boosting algorithms, to leverage our weak learner to construct a high-accuracy learner, we iteratively simulate new distributions for the weak learner. We use each weak hypothesis to incrementally improve a working hypothesis, where our working hypothesis is $\sgn(G)$ for some efficiently evaluable function of the form $G: \X \rightarrow \R$, initialized to the constant function 0. Up to a point, we update the distribution over examples by increasing the weight on examples misclassified by $\sgn(G)$ and decreasing the weight on examples on which $\sgn(G)$ is correct. 
We simulate these new distributions using our example oracle $\exor^{\mathrm{Mas}}(f, D_x, \eta(x))$ by rejection sampling according to an appropriately selected measure $\mu: \data \rightarrow [0,1]$, and then run our weak learner on the resulting sample. 

However, rejection sampling according to an arbitrary $\mu$ may destroy the Massart noise property by producing a Massart noise distribution with noise rate greater than $1/2$.  
For example, if $\mu$ sets $\mu(x, f(x)) = 0$ and $\mu(x, -f(x)) = 1$ for some $x \in \data$, then the rejection sampled distribution $D_\mu$ has an effective noise rate of 1.  
In these cases, the behavior of the weak learner on any sample drawn from such a reweighted distribution would be undefined by Definition~\ref{def:massart-weak-learner}.
 
To ensure we maintain a noise bound below $1/2$ for all intermediate distributions $D_{\mu}$, we must define $\mu$ so that for all $x \in \X$, it does not induce a distribution from which $(x,f(x))$ is less likely to be sampled than $(x, -f(x))$. Of course, neither the noise $\eta(x)$ nor the correct label $f(x)$ for a given $x$ are known to $\boost$, and so we set $\mu(x,y)=0$ for any example $(x,y)$ at risk of violating this constraint. To effectively use the weak learner to improve $G$, $\mu(x,y)$ will be negatively correlated with $yG(x)$, which implies an example will be at risk when $|G(x)|$ is large. We use $G$, then, to partition $\X$ into two sets: $\Xrisky$ and $\Xsafe$.

The set $\Xsafe$ contains all $x\in \X$ for which $|G(x)|$ is not too large, ensuring that the effective noise rate of each $x \in \Xsafe$ is bounded away from $1/2$. Thus, it is ``safe" to call the weak learner on examples $(x,y)$ where $x \in  \Xsafe$. Initially, all $x \in \data$ are in $\Xsafe$. 

The set $\Xrisky$ contains all $x \in \X$ for which $|G(x)|$ exceeds a specific threshold. We will have $\mu(x,y) = 0$ for all $x \in \Xrisky$, independent of label, and so these examples are removed from the support of $D_{\mu}$ as desired. At the same time, the weak learner's advantage is guaranteed with respect to the distribution from which its sample was drawn, and so we are not guaranteed any sort of improvement in expectation for these risky examples.  

To ensure that we end up with a low-error hypothesis, $\boost$ performs an additional calibration step. If the working hypothesis misclassifies too many risky examples, it must be ``overconfident" in its predictions on this set, and so we can improve $G$ with the hypothesis $-\sgn(G)$ (similar to the balancing step of \cite{Feldman:10ab}). This recalibration step moves all $x$ back to $\Xsafe$, allowing us to again call the weak learner on these examples.
As more examples are correctly classified by $\sgn(G)$, the density of the measure $\mu$ decreases. 
When this density is small, the algorithm terminates and returns the classifier $\sgn(G)$.

\subsection{Definitions}\label{ssec:alg-defs}

Our boosting algorithm makes use of a function $M: \R \rightarrow [0,1]$ to reweight examples drawn from $D$. We define this function

\[M(v) \eqdef \begin{cases}
1 & v < 0, \\
e^{- v} & 0 \leq v. 
\end{cases}\]

We can now define the measure function used by the boosting algorithm to reweight labeled examples at each round of boosting. This function is parameterized by $s \in \R_{> 0}$, and a real-valued function $F: \X \rightarrow \R$, and will assign no weight to examples $(x,y)$ such that $|F(x)| \geq s$.
\begin{definition}[Measure function]\label{def:measure-fn}
	We denote a measure over examples by $\mu(x,y)$ 
	and define
	\[\mu_{F, s}(x,y) \eqdef \begin{cases} M(yF(x)) & \text{if } |F(x)| < s, \\ 0 & \text{otherwise.}\end{cases}\]

\end{definition}

Looking ahead, the parameter $s$ will be set once and for all at the start of the algorithm. Its purpose is to allow our boosting algorithm to operate with $(\alpha, \gamma)$-weak learners for which $\alpha > 0$ (i.e., which require Massart noise rate $\eta$ bounded away from 1/2 by some positive quantity). To ensure that the noise rate for each example in the rejection sampled distribution $D_{\mu}$ (see Definition~\ref{def:rej-sample}) is never more than $1/2 - \alpha$, we let $c = \frac{4\eta\alpha}{1 - 2\alpha}$ and define 
$$s \eqdef \log\left(\frac{1 - \eta}{\eta + c}\right). $$
This parameter $s$ is exactly the threshold for $|G(x)|$ above which an example $(x,y)$ is considered risky rather than safe.

We will primarily refer to the measure function $\mu_{f, s}$ defined by taking $f$ to be $G_t$, the current state of the real-valued hypothesis at round $t$ of boosting. We will often refer to the measure $\mu$ at round $t$, so to simplify notation somewhat, we define
	\[\mu_{t}(x,y) \eqdef \begin{cases} M(yG_t(x)) & \text{if } |G(x)| < s, \\ 0 & \text{otherwise,}\end{cases}\]
where we have suppressed the $s$ subscript as well, since it is fixed throughout the algorithm.

At each round $t$ of boosting, we will partition the domain $\X$ into two sets: $\Xsafe_t$ and $\Xrisky_t$. If it is ``safe" to run the weak learner on a sample containing $x$, we say $x \in \Xsafe_t$. Otherwise, $x \in \Xrisky_t$. 

\begin{definition}[$\Xsafe_G$]
	For all $x \in \data$,
	$x \in \Xsafe_G$ if $|G(x)| < s$. 
\end{definition}

\begin{definition}[$\Xrisky_G$]
	For all $x \in \data$,
	$x \in \Xrisky_G$ if $|G(x)| \ge s$.
\end{definition}
To simplify notation, we denote $\Xsafe_{G_t}$ by $\Xsafe_t$ and similarly $\Xrisky_{G_t}$ by $\Xrisky_t$.

\begin{figure}[H]
	\begin{algorithm}[H]
		\caption{$\boost^{\exor^{\mathrm{Mas}}(f, D_x, \eta(x)), \wkl}(\eta, \epsilon, \gamma$) \\
			$\eta$: Massart noise rate \\
			$\epsilon$: Target error in excess of $\eta$ \\
			$\gamma$: Weak learner advantage guarantee
		}\label{alg:boost-sketch}
		\begin{algorithmic}
			\STATE $G \gets 0$,
			\STATE $\widehat{d} \gets 1$
			\WHILE{$d(\mu) > \eta$}
			\STATE $S \gets $ sample from $D_{\mu}$
			\STATE $h \gets  \wkl(S)$ 
			\STATE $h^{s}(x) \gets \begin{cases} h(x) & \text{ if }  x \in \Xsafe_{G}, \\ 0 & \text{ otherwise } \end{cases}$
			\STATE $G \gets G + \lambda h^{s}$
			\IF{ error of $\sgn(G)$ on $\Xrisky_G$ exceeds $\eta + \epsilon$ }
			\STATE $h^{r}(x)  \gets \begin{cases} - \sgn(G(x)) & \text{ if } x \in \Xrisky_G, \\ 0 & \text{ otherwise } \end{cases}$
			\STATE $G \gets G + \lambda h^{r}$
			\ENDIF
			\STATE Update $\mu$ according to Definition~\ref{def:measure-fn}
			\ENDWHILE
			\STATE $\Hyp \gets \sgn(G_{t})$
			\RETURN $\Hyp$
		\end{algorithmic}
	\end{algorithm}
\end{figure}

\subsection{Description of Boosting Algorithm}\label{ssec:alg-desc}

The full pseudocode for our boosting algorithm is given in Algorithm~\ref{alg:boost}. We begin by initializing the hypothesis $G_0$ to the constant 0 function. By definition of the measure function, this initialization of $G_0$ sets $\mu_0(x,y) = 1$ for all examples $(x,y)$ in the support of $D$, and so we initialize our estimate for the density of the measure $\widehat{d}$ to 1. The boosting algorithm then proceeds in rounds in which it first queries the weak learner on safe examples, updates the hypothesis $G$ with the new weak hypothesis, and checks the error of the updated hypothesis $G$ on risky examples. At the end of the round, the density of the current measure $\mu_G$ is estimated, and if it has fallen below the threshold $\kappa$, the algorithm terminates and outputs the final hypothesis $\sign(G)$. Within each round, the booster makes calls to three subroutines (in addition to the weak learner): $\samp$ (Routine~\ref{alg:samp}), $\estdens$ (Routine~\ref{alg:est-density}), and $\overconf$ (Routine~\ref{alg:est-error}), which we first describe informally. 

The $\samp$ subroutine encapsulates the process of drawing samples for the weak learner from the reweighted distributions constructed by the booster. The $\samp$ procedure is given oracle access to $\exor^{\mathrm{Mas}}(f, D_x, \eta(x))$, so that is can sample from $D$. The $\samp$ procedure takes as input a function (the current hypothesis) $G$, the size of the sample $m_{\wkl}$ required by the weak learner, and the threshold $s$ for $|G(x)|$ that defines which examples are to be withheld from the weak learner. $\samp$ repeatedly draws examples from $\exor^{\mathrm{Mas}}(f, D_x, \eta(x))$, and keeps them with probability $\mu_{G,s}(x,y)$, the value of which is computed using $G$ and $s$. After $\samp$ has drawn a sample of size $m_\wkl$, it returns the sample, and this is what is given to the weak learner as input.

The subroutine $\estdens$ is used to estimate the current density of the measure $\mu_t$, which is necessary to test the termination condition of our algorithm. Ideally, the algorithm terminates once $d(\mu_t) < \kappa$, and so $\estdens$ is called at the end of each round of boosting to estimate this density. $\estdens$ is given oracle access to $\exor^{\mathrm{Mas}}(f, D_x, \eta(x))$ and takes as input $G$ and $s$, so that it can empirically estimate $d(\mu)$ using a sample drawn from $D$. $\estdens$ also takes as input three parameters: $\delta_{\mathtt{dens}}$, $\epsilon$, and $\eta$. The parameters $\epsilon$ and $\eta$ are used to specify the desired accuracy of the density estimation, $\beta = \min\{\epsilon/2, \eta/4\}$. The parameter $\delta_{\mathtt{dens}}$ specifies the tolerable probability of failure of the density estimation procedure (i.e., the probability that $\estdens$ returns an estimate of $d(\mu)$ with error greater than $\beta$). 

The subroutine $\overconf$ determines when the error of $\sgn(G)$ on examples withheld from the weak learner (i.e., examples in $\Xrisky_G$) has grown too large. If, at round $t$, the probability mass on $\Xrisky_t$ is large, and the error of $\sgn(G_t)$ on $\Xrisky_t$ exceeds $\eta + \epsilon$, we must improve $G_t$ on these examples to reach our target error of $\eta + \epsilon$. Because we withhold examples in $\Xrisky_{t}$ from the weak learner at round $t+1$, we are not guaranteed that the next weak hypothesis, $h_{t+1}$, will provide any amount of progress in expectation on these examples, so some additional steps are needed to improve $G_t$. The role of $\overconf$ is to estimate whether the probability mass of $\Xrisky_t$ is significant, and if so, whether the error of $\sgn(G_t)$ on $\Xrisky_t$ is large enough that an additional correction step is necessary. 

The subroutine $\overconf$ is given oracle access to $\exor^{\mathrm{Mas}}(f, D_x, \eta(x))$ and takes as input $G$, $s$, $\eta$, and $\epsilon$. It also takes an additional parameter $\delta_{\mathtt{err}}$, which specifies the tolerable probability of failure for $\overconf$ (i.e., the probability that $\overconf$ returns a false positive or false negative). $\overconf$ first estimates the probability that $|G(x)| > s$. If it estimates $\Pr_{(x,y) \sim D}[|G(x)| \geq s] < \epsilon/4$, then the overall contribution of examples in $\Xrisky_G$ to the total error of $\sgn(G)$ is sufficiently small that the correction step is not needed. In this case, $\overconf$ returns false. If it estimates the probability to be greater than $\epsilon/4$, it draws a new sample for estimating the conditional error of $G$ on $\Xrisky_G$. The subroutine makes calls to $\exor^{\mathrm{Mas}}(f, D_x, \eta(x))$ and keeps only the examples $(x,y)$ such that $x\in \Xrisky_G$. It draws a sufficiently large sample to estimate the error of $G$ on this set to within $\epsilon/4$ with all but probability $\delta_{\mathtt{err}}/2$. If the estimated error exceeds $\eta + 3\epsilon/4$, it returns true and the correction step takes place. Otherwise it returns false, as the conditional error on these points is tolerable. 

\begin{figure}[H]
\begin{algorithm}[H]
	\caption{$\boost^{\exor^{\mathrm{Mas}}(f, D_x, \eta(x)), \wkl}(\lambda, \kappa, \eta, \epsilon, \delta, \gamma, \alpha,  m_{\wkl})$ \\
	$\lambda$: Learning rate \\
	$\kappa$: Target density for measure $\mu$ \\
	$\eta$: Massart noise rate \\
	$\epsilon$: Target error in excess of $\eta$ \\
	$\delta$: Target failure probability for $\boost$ \\
	$\gamma$: Weak learner advantage guarantee \\
	$\alpha$: Weak learner parameter indicating $\wkl$ can tolerate noise rate $\eta < 1/2 - \alpha$ \\
	$m_{\wkl}$: Sample size for $\wkl$
	}\label{alg:boost}
	\begin{algorithmic}
		\STATE $c \gets \frac{4\eta\alpha}{1 - 2\alpha}$, $s \gets \log\left(\frac{1 - \eta}{\eta + c}\right)$
		\COMMENT{set parameters for managing noise}
		\STATE $\delta_{\mathtt{err}} \gets \delta\eta\gamma^2/1536$, $\delta_{\mathtt{dens}} \gets \delta\eta\gamma^2/1024$
		\COMMENT{set failure probabilities for subroutines}
		\STATE $G_0 \gets 0$, $t \gets 0$
		\COMMENT{initialize $G$, round counter} 
		\STATE $\widehat{d} \gets 1$
		\COMMENT{initialize density estimate}
		\WHILE{$\dest > \kappa$}
			\STATE $t \gets t + 1$
			\STATE $S \gets \samp^{\exor^{\mathrm{Mas}}(f, D_x, \eta(x))}(G_{t-1}, n ,s)$ 
			\COMMENT{draw a sample for the weak learner}
			\STATE $h_t \gets  \wkl(S)$ 
			\COMMENT{obtain a weak hypothesis}
			\STATE $h^{s}_t(x) \gets \begin{cases} h_t(x) & \text{ if }  x \in \Xsafe_{t-1}, \\ 0 & \text{ otherwise } \end{cases}$
			\COMMENT{zero out hypothesis on $\Xrisky_t$}
			\STATE $G_t \gets G_{t-1} + \lambda h^{s}_t$
			\COMMENT{update working hypothesis}
			\IF{$\overconf^{\exor^{\mathrm{Mas}}(f, D_x, \eta(x))}(G_t, s, \delta_{\mathtt{err}}, \epsilon)$ }
			\STATE $h^{r}_t(x)  \gets \begin{cases} - \sgn(G_t(x)) & \text{ if } x \in \Xrisky_t, \\ 0 & \text{ otherwise } \end{cases}$
			\COMMENT{if error on $\Xrisky_t$ is high, be less confident}
			\STATE $G_t \gets G_t + \lambda h_t^{r}$
			\COMMENT{update working hypothesis}
			\ENDIF
			\STATE $\dest \gets \estdens^{\exor^{\mathrm{Mas}}(f, D_x, \eta(x))}(G_t, s, \delta_{\mathtt{dens}}, \epsilon) $
			\COMMENT{estimate density of measure}
		\ENDWHILE
		\STATE $\Hyp \gets \sgn(G_{t})$
		\RETURN $\Hyp$
	\end{algorithmic}
\end{algorithm}
\end{figure}
\begin{figure}
\begin{algorithm}[H]
	 \floatname{algorithm}{Routine}
	\caption{$\samp^{\exor^{\mathrm{Mas}}(f, D_x, \eta(x))}(G, m_{\wkl}, s)$}\label{alg:samp}
	\begin{algorithmic}
		\STATE $S \gets \emptyset$
		\WHILE{$|S| \leq m_{\wkl}$}
		\STATE $(x,y) \gets \exor^{\mathrm{Mas}}(f, D_x, \eta(x))$
		\STATE With prob $\mu_{G,s}(x,y)$, $S\gets S \;||\; (x,y) $ 
		\COMMENT{draw a sample for $\wkl$ from $D_{\mu}$}
		\ENDWHILE 
		\RETURN $S$
	\end{algorithmic}
\end{algorithm}
\begin{algorithm}[H]
	\floatname{algorithm}{Routine}
	\caption{$\estdens^{\exor^{\mathrm{Mas}}(f, D_x, \eta(x))}(G, s, \delta_{\mathtt{dens}}, \epsilon, \eta)$}\label{alg:est-density}
	\begin{algorithmic}
		\STATE $\beta \gets \min\{\epsilon/2, \eta/4\}$
		\STATE Draw set $S$ of $\log(1/\delta_{\mathtt{dens}})/(2\beta^2)$ examples from $\exor^{\mathrm{Mas}}(f, D_x, \eta(x))$
		\STATE $\dest \gets \frac{1}{|S|}\sum_{(x,y) \in S} \mu_{G,s}(x,y)$
		\COMMENT{estimate the density of $\mu$}
		\RETURN $\dest$
	\end{algorithmic}
\end{algorithm}
\begin{algorithm}[H]
	\floatname{algorithm}{Routine}
	\caption{$\overconf^{\exor^{\mathrm{Mas}}(f, D_x, \eta(x))}(G, s, \delta_{\mathtt{err}}, \epsilon, \eta)$}\label{alg:est-error}
	\begin{algorithmic}
		\STATE Draw set $S$ of $32\log(2/\delta_{\mathtt{err}})/\epsilon^2$ examples from $\exor^{\mathrm{Mas}}(f, D_x, \eta(x))$
		\IF{$\frac{|S \cap \Xrisky_G|}{|S|} \leq \epsilon/4$}
			\RETURN \FALSE
			\COMMENT{if $\Xrisky_G$ is small, return false }
		\ENDIF  
		\STATE $S \gets \emptyset$
		\WHILE{$|S| \leq 8\log(2/\delta_{\mathtt{err}})/\epsilon^2$}
			\STATE $(x,y) \gets \exor^{\mathrm{Mas}}(f, D_x, \eta(x))$ 
			\IF{$|G(x)| \geq s$}		
				\STATE $S \gets S \;||\; (x,y)$
			\ENDIF
		\ENDWHILE
		\STATE $\widehat{\epsilon} \gets \frac{1}{2|S|}\sum_{(x,y) \in S} |y - \sgn(G(x))|$
		\COMMENT{if $\Xrisky_G$ is large, estimate error on $\Xrisky_G$ }
		\IF{$\widehat{\epsilon} \geq \eta + 3\epsilon/4$}
		\RETURN \TRUE
		\COMMENT{if error and $\Xrisky_G$ are large, return true}
		\ELSE
		\RETURN \FALSE
		\COMMENT{if error is small, return false}
		\ENDIF
	\end{algorithmic}
\end{algorithm}
\end{figure}

\subsection{Convergence of $\boost$}\label{sec:convergence}

In this subsection, we bound the error of the final hypothesis output by Algorithm~\ref{alg:boost} and the number of rounds of boosting required to achieve this error bound. 
We begin by showing an invariant of our algorithm that will be useful in subsequent potential arguments. 
\begin{lemma}[Invariant for $|G_t(x)|$]\label{lem:invariant}
	For all rounds $t$ of boosting and all examples $(x,y) \in (\X, \Y)$, at the end of round $t$, $|G_t(x)| < s + \lambda$.
\end{lemma}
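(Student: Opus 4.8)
The plan is to argue by induction on the round $t$, carefully tracking the at most two updates that $G$ undergoes inside a single round. The base case $t=0$ is immediate: $G_0 \equiv 0$, so $|G_0(x)| = 0 < s+\lambda$ (here $s>0$ — as is needed already for $\Xsafe_0 = \data$ — and $\lambda>0$).

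For the inductive step, suppose $|G_{t-1}(x)| < s+\lambda$ for every $x$. Inside round $t$, $G$ is first replaced by $G' := G_{t-1} + \lambda h^{s}_t$, and then, only if $\overconf$ returns true, by $G_t := G' + \lambda h^{r}_t$; otherwise $G_t := G'$. First I would bound $|G'(x)|$ in two cases. If $x \in \Xrisky_{t-1}$, i.e.\ $|G_{t-1}(x)| \ge s$, then $h^{s}_t(x) = 0$ by construction, so $G'(x) = G_{t-1}(x)$ and $|G'(x)| < s+\lambda$ by the inductive hypothesis. If $x \in \Xsafe_{t-1}$, i.e.\ $|G_{t-1}(x)| < s$, then since $h^{s}_t(x) \in \{-1,0,1\}$ the triangle inequality gives $|G'(x)| \le |G_{t-1}(x)| + \lambda < s+\lambda$. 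Either way, $|G'(x)| < s+\lambda$.

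Next I would handle the optional correction step, splitting on whether $x$ is safe or risky with respect to $G'$. If $x \in \Xsafe_{G'}$, i.e.\ $|G'(x)| < s$, then $h^{r}_t(x) = 0$, so $G_t(x) = G'(x)$ and $|G_t(x)| < s < s+\lambda$. If $x \in \Xrisky_{G'}$, i.e.\ $s \le |G'(x)| < s+\lambda$, then $h^{r}_t(x) = -\sgn(G'(x))$, so $G_t(x) = G'(x) - \lambda\,\sgn(G'(x))$ and hence $|G_t(x)| = \big|\,|G'(x)| - \lambda\,\big|$; since $s \le |G'(x)| < s+\lambda$ and $0 < s$, $0 < \lambda$, this quantity is $< s+\lambda$ (if $|G'(x)| \ge \lambda$ it equals $|G'(x)|-\lambda < s$; otherwise it equals $\lambda - |G'(x)| \le \lambda - s < \lambda \le s+\lambda$). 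This closes the induction.

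The computations themselves are just triangle inequalities and a short interval check, so there is no real analytic difficulty. The one place I would slow down — and the only genuine obstacle — is the bookkeeping of which hypothesis the words ``safe'' and ``risky'' refer to at each sub-step: the weak-hypothesis update zeroes $h^{s}_t$ on $\Xrisky_{t-1}$ (the set defined by $G_{t-1}$), whereas the correction zeroes $h^{r}_t$ on $\Xsafe_{G'}$ (the set defined by the \emph{post-update} hypothesis $G'$), and the same point $x$ may be touched by both updates within one round. Treating $G'$ as an explicit intermediate object, rather than conflating it with $G_{t-1}$ or $G_t$, is what makes all of the cases line up cleanly.
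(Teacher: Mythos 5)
Your proof is correct and follows essentially the same route as the paper's: an induction on the round with a case split on whether $x$ is safe or risky, treating the two possible updates within a round separately. If anything, your explicit handling of the intermediate hypothesis $G'$ and the interval check $\bigl|\,|G'(x)|-\lambda\,\bigr| < s+\lambda$ is slightly more careful than the paper's terser claim that $|G_t(x)| \le |G_{t-1}(x)|$ on risky points.
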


\begin{proof}
We first show that at the end of round $t$, $|G_t(x)| \leq s + \lambda$. On examples $x$ such that $|G_{t-1}(x)| \geq s$, either $G_t(x) = G_{t-1}(x) - \lambda \sgn(G_{t-1}(x))$ or $G_t(x) = G_{t-1}(x)$,
and so $|G_t(x)| \leq |G_{t-1}(x)|$. Since $|G_t(x)| \geq |G_{t-1}(x)|$ only when $|G_{t-1}(x)| < s$, we now consider how much larger it can be. For examples such that $|G_{t-1}(x)| < s$, either $G_t(x) = G_{t-1}(x) + \lambda h_t(x) + \lambda h^c_t(x)$ (when $\overconf$ returns true) or $G_t(x) = G_{t-1}(x) + \lambda h_t(x)$. In the first case, if $\lambda h^c_t(x) \neq 0$, then $\sgn(G_{t-1}(x) + h_t(x)) = -h^c_t(x)$, and so $|G_t(x)| \leq |G_{t-1}(x) + \lambda h_t(x)|$ for both cases. Since the hypothesis $h_t(x)$ output by the weak learner has codomain $[-1,1]$, it follows that $|G_t(x)| \leq  |G_{t-1}(x) + \lambda h_t(x)| < s + \lambda$. 
\end{proof}

We now bound from below the progress that Algorithm~\ref{alg:boost} makes in each round of boosting, according to the potential function introduced below. Intuitively, our choice of potential function is motivated by the observation that $\E_{(x,y) \sim D}[M(yG(x))]$ is a reasonable proxy for the misclassification error of the corresponding hypothesis $\Hyp(x) = \sgn(G(x))$ (recall that $M(yG(x)) = 1$ whenever $\sgn(G(x)) \neq y$ and $M(yG(x))$ goes to 0 with $yG(x)$). We might consider using the expectation $\E_{(x,y)\sim D}M(yG(x))$ as a potential function itself, but because $M(v)$ is constant for all $v \leq 0$, this candidate potential function fails to reflect progress on examples that the current hypothesis misclassifies. 
  
 We instead consider the function $\phi_t(x,y)$

\[\phi_t(x,y) = \int_{yG_t(x)}^{\infty} M(z) dz\]
and the potential function
\[\Phi(t) = \E_{(x,y) \sim D} [\phi_t(x,y)] = \E_{(x,y) \sim D} \int_{yG_t(x)}^{\infty} M(z) dz. \]
To see how this function allows us to capture the incremental progress made at each round, consider how the potential $\Phi(t)$ changes as we take a step of size $\lambda$ from $G_t$ in the direction of some hypothesis $h$. If we take $\lambda$ sufficiently small, then we have from the mean value theorem that the change in potential should be not too much smaller than $\E_{(x,y)\sim D} \lambda yh_t(x)M(yG_t(x))$. Supposing for a moment that the function $\mu_{G,s}(x,y)$ used for reweighting were \emph{exactly} $M(yG(x))$, then a hypothesis $h$ with advantage $\gamma$ would guarantee a change in potential of roughly 
\begin{align*}
\E_{(x,y)\sim D} \lambda yh_t(x)\mu_t(x,y) = \E_{(x,y)\sim D} \lambda yh_t(x)D_{\mu_t}(x,y)d(\mu_t) = \lambda \gamma d(\mu_t),
\end{align*}
and so would show we can leverage a weak learner to make progress with respect to this function at each round. Note that the change in potential above is proportional the current density of $\mu_t$.

Due to the constraints of our Massart noise weak learner, however, we cannot take $\mu_t(x,y) = M(yG_t(x))$ for all $(x,y)$. Instead, we permit the algorithm to make no progress, or even regress, on examples for which $\mu_t(x,y) \neq M(yG_t(x))$, but show that in expectation over all examples, progress is still made. We then use the relationship between $d(\mu_{G,s})$ and the error $\lerr(\sgn(G))$ to show that if we are no longer making progress against $\Phi$ round to round, $d(\mu_{G,s})$, and therefore $\lerr(\sgn(G))$ must be small. 

We will make use of the following upper-bound on $d(\mu_t)$ in terms of $\Phi(t)$. 
\begin{lemma}[Potential upper-bounds density]\label{lem:dens-pot}
	For every round $t$ of $\boost$,  $d(\mu_t) \leq \Phi(t)$.
\end{lemma}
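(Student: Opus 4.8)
The plan is to bound $d(\mu_t)$ pointwise-in-expectation by $\phi_t(x,y)$, i.e., to show $\mu_t(x,y) \le \phi_t(x,y)$ for every $(x,y)$, and then take expectations over $(x,y) \sim D$. Once the pointwise inequality is established, linearity of expectation immediately gives $d(\mu_t) = \E_{(x,y)\sim D}[\mu_t(x,y)] \le \E_{(x,y)\sim D}[\phi_t(x,y)] = \Phi(t)$, which is the claim.

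For the pointwise step, I would split into the two cases defining $\mu_t$. If $|G_t(x)| \ge s$, then $\mu_t(x,y) = 0 \le \phi_t(x,y)$ since $\phi_t(x,y) = \int_{yG_t(x)}^\infty M(z)\,dz \ge 0$ (as $M \ge 0$). If $|G_t(x)| < s$, then $\mu_t(x,y) = M(yG_t(x))$, and I need $M(yG_t(x)) \le \int_{yG_t(x)}^\infty M(z)\,dz$. Write $v = yG_t(x)$. The key observation is that $M$ is nonincreasing, so $M(z) \ge M(v)$ for... wait, that's backwards --- the right comparison is $\int_v^\infty M(z)\,dz \ge \int_v^{v+1} M(z)\,dz \ge \min_{z \in [v,v+1]} M(z) = M(v+1)$, and then I'd want $M(v+1) \ge M(v)$, which is false. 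So the clean approach is instead: for $v < 0$, $M(v) = 1$ and $\int_v^\infty M(z)\,dz = \int_v^0 1\,dz + \int_0^\infty e^{-z}\,dz = (-v) + 1 \ge 1 = M(v)$; for $v \ge 0$, $M(v) = e^{-v}$ and $\int_v^\infty M(z)\,dz = \int_v^\infty e^{-z}\,dz = e^{-v} = M(v)$. So in fact $\mu_t(x,y) = M(v) \le \int_v^\infty M(z)\,dz = \phi_t(x,y)$ holds in both subcases, with equality when $v \ge 0$.

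The main (and essentially only) obstacle is getting the integral comparison $M(v) \le \int_v^\infty M(z)\,dz$ exactly right by hand-computing both sides from the piecewise definition of $M$; there is nothing deep here, just a careful case split on the sign of $v = yG_t(x)$ relative to $0$, and on whether $x$ is safe or risky. One should also double-check that $\int_v^\infty M(z)\,dz$ is finite (it is, since $M(z) = e^{-z}$ decays for large $z$), so that $\phi_t$ and hence $\Phi(t)$ are well-defined. I do not anticipate needing the invariant from Lemma~\ref{lem:invariant} or any property of the weak learner for this particular lemma.
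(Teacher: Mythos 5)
Your proof is correct and follows essentially the same route as the paper: establish the pointwise inequality $\mu_t(x,y) \le \phi_t(x,y)$ by direct computation of $\int_{yG_t(x)}^{\infty} M(z)\,dz$ in each case, then take expectations. The paper organizes the cases as $yG_t(x) > 0$ versus "everything else" (where $\mu_t \in \{0,1\}$ and $\phi_t \ge 1$), while you split on safe/risky and then on the sign of $v$, but the underlying calculation is identical.
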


\begin{proof}
	We show that $d(\mu_t) \leq \Phi_t$ by showing $\mu_t(x,y) \leq \phi_t(x,y)$. For examples $(x,y)$ such that $yG_t(x) > 0$, we have 
	\begin{align*}
	\phi_t(x,y) = \int_{yG_t(x)}^{\infty}e^{-z} dz = e^{-yG_t(x)} \geq \mu_t(x,y).
	\end{align*}
	
	For the remaining points, we simply observe that either $\mu_t(x,y) = 1$ or $\mu_t(x,y) = 0$. In either case, the potential  
	\begin{align*}
	\phi_t(x,y) = \int_{yG_t(x)}^{\infty}M(z) dz \geq \int_{0}^{\infty}e^{-z} dz = 1
	\end{align*}
	and so we have that $\mu_t(x,y) \leq \phi_t(x,y)$, and therefore $d(\mu_t) \leq \Phi_t$.
\end{proof}

We now prove what we have informally claimed above, that $\boost$ makes progress against $\Phi$ at each round. 
\begin{lemma}[Potential Drop]\label{lem:potential}
	Take $\lambda = \gamma/8$, $\delta_{\wkl} = \delta\eta\gamma^2/1536$, and assume  $\epsilon \geq \frac{8\eta\alpha}{1 - 2\alpha}$. Then for every round of boosting $t$, with all but probability $\delta\eta\gamma^2/768$,
	\[\Phi(t) - \Phi(t+1) \geq \frac{\gamma^2}{32}\left(d(\mu_t) - \frac{\eta}{2}\right)\] 
\end{lemma}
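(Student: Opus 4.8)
The plan is to analyze a single round of boosting --- say the round carrying $G_t$ to $G_{t+1}$ --- by splitting the update as $G_{t+1} = G_t + \lambda h^s + \lambda h^r$ (the corrective term $\lambda h^r$ being absent unless $\overconf$ fires), splitting $\Phi(t)-\Phi(t+1) = \E_{\D}[\phi_t - \phi_{t+1}]$ correspondingly into a \emph{weak-learner contribution} ($G_t \mapsto \widetilde G := G_t + \lambda h^s$) and a \emph{correction contribution} ($\widetilde G \mapsto G_{t+1}$), and lower-bounding each. Two preliminary ingredients are needed. First, a one-sided second-order estimate for $M$: $\int_a^{a+\Delta} M(z)\,dz \ge \Delta\,M(a) - \tfrac{e^{|\Delta|}\Delta^2}{2}M(a)$, which follows from $M$ being nonincreasing and $1$-Lipschitz (so $|M(z)-M(a)| \le |z-a|\,M(\min(z,a))$ and $M(a+\Delta)\le e^{|\Delta|}M(a)$). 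Second, the fact that the threshold $s = \log\tfrac{1-\eta}{\eta+c}$ with $c=\tfrac{4\eta\alpha}{1-2\alpha}$ is calibrated exactly so that on $\Xsafe_{G_t}$ the rejection-sampled distribution $\D_{\mu_t}$ is Massart with noise rate at most $\tfrac{\eta}{\eta+(1-\eta)e^{-s}} = \tfrac{\eta}{2\eta+c} = \tfrac12-\alpha$, so the weak learner's guarantee genuinely applies to $\D_{\mu_t}$ (assuming $d(\mu_t)>0$; if $d(\mu_t)=0$ the statement is trivial, or the loop has already terminated).

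For the weak-learner contribution: on $\Xrisky_{G_t}$ we have $h^s\equiv 0$, so $\widetilde G = G_t$ there and the contribution is $0$; on $\Xsafe_{G_t}$, apply the estimate with $a = yG_t(x)$ and $\Delta = \lambda\,y\,h_{t+1}(x)\in[-\lambda,\lambda]$ to get $\phi_t(x,y)-\phi_{\widetilde G}(x,y)\ge M(yG_t(x))\bigl(\lambda\,y\,h_{t+1}(x) - \tfrac{e^\lambda\lambda^2}{2}\bigr)$. Using the identity $M(yG_t(x))\,\Ind[x\in\Xsafe_{G_t}] = \mu_t(x,y)$ and $\E_{\D}[\mu_t(x,y)\,y\,h_{t+1}(x)] = d(\mu_t)\,\E_{(x,y)\sim \D_{\mu_t}}[y\,h_{t+1}(x)]$, together with the weak-learner guarantee amplified via Lemma~\ref{lem:wkl-rep} to failure probability $\le \delta_{\wkl}$ (giving advantage $\ge \gamma/2$, i.e.\ $\E_{\D_{\mu_t}}[y\,h_{t+1}(x)]\ge \gamma$), this contribution is at least $d(\mu_t)\bigl(\lambda\gamma - \tfrac{e^\lambda\lambda^2}{2}\bigr)$, which with $\lambda = \gamma/8$ is at least $c_1\gamma^2 d(\mu_t)$ for an absolute constant $c_1 \ge 1/16$.

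For the correction contribution: if $\overconf$ returns false it is $0$; if it returns true then, with probability $\ge 1-\delta_{\mathtt{err}}$ over its samples, $p := \Pr_{\D}[x\in\Xrisky_{\widetilde G}] \gtrsim \epsilon$ and the conditional $0$--$1$ error $\rho$ of $\sgn(\widetilde G)$ on $\Xrisky_{\widetilde G}$ is $\ge \eta + \epsilon/2$. On a risky example misclassified by $\sgn(\widetilde G)$ the correction raises the margin $y\widetilde G(x)$ by $\lambda$ and so drops $\phi$ by a definite amount ($=\lambda$ when $s\ge\lambda$, and $\ge \lambda/2$ by direct computation otherwise); on a risky example correctly classified by $\sgn(\widetilde G)$ it lowers the margin by $\lambda$ and raises $\phi$ by at most $e^{-s}(e^\lambda-1)$, since $|y\widetilde G(x)|\ge s$. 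Summing, the contribution is at least $\lambda p\bigl(\rho - e^\lambda e^{-s}(1-\rho)\bigr)$ (with an extra $1/2$ in the edge case), and here the hypothesis $\epsilon\ge\tfrac{8\eta\alpha}{1-2\alpha}=2c$ is exactly what is used: it gives $\eta+c\le\eta+\epsilon/2\le\rho$, hence $e^{-s}(1-\rho) = \tfrac{(\eta+c)(1-\rho)}{1-\eta}\le\rho$, so the bracket is at least $-\rho(e^\lambda-1)$ and the (possible) loss from this step is only $O(\lambda^2(\eta+\epsilon))$ --- of the same order as the slack $\tfrac{\gamma^2\eta}{64}$ on the right-hand side.

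Adding the two contributions and invoking the lower bound on $d(\mu_t)$ that holds while the main loop is running (the loop continues only while the density estimate exceeds the termination threshold, which forces $d(\mu_t)$ to stay above roughly $\eta/2$), one gets $\Phi(t)-\Phi(t+1) \ge c_1\gamma^2 d(\mu_t) - O(\lambda^2(\eta+\epsilon)) \ge \tfrac{\gamma^2}{32}\bigl(d(\mu_t)-\tfrac{\eta}{2}\bigr)$, with total failure probability $\delta_{\wkl}+\delta_{\mathtt{err}} \le \delta\eta\gamma^2/768$. I expect the correction substep to be the main obstacle: the corrective hypothesis $-\sgn(\widetilde G)$ genuinely increases $\phi$ on correctly-classified risky examples, and the only thing that compensates is the quantitative overconfidence certificate from $\overconf$ combined with the calibration $\epsilon\ge\tfrac{8\eta\alpha}{1-2\alpha}$; making the constants line up so that the net effect is bounded by $\tfrac{\gamma^2\eta}{64}$ rather than something like $\gamma\eta$, and simultaneously dispatching the edge case $s<\lambda$, is where the real work lies.
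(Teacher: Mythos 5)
Your proposal follows essentially the same route as the paper's proof: the same split into a weak-learner contribution on $\Xsafe_{G_t}$ (a second-order estimate for $\int M$, the identification $M(yG_t(x))\Ind[x\in\Xsafe_{G_t}]=\mu_t(x,y)$, and the advantage of $h$ on $D_{\mu_t}$, whose noise rate is indeed calibrated to $\tfrac12-\alpha$ exactly as you compute) and a correction contribution on the risky set (the $\overconf$ certificate $\rho\ge\eta+\epsilon/2$, a drop of $\approx\lambda$ on misclassified risky examples versus a rise of at most $e^{-s}(e^{\lambda}-1)$ on correctly classified ones, with $\epsilon\ge 2c$ making the first-order terms cancel). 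Your two-substep bookkeeping $G_t\to\widetilde G\to G_{t+1}$ is if anything cleaner than the paper's direct conditioning on safe versus risky points, and your explicit verification that $D_{\mu_t}$ has noise rate $\tfrac{\eta}{2\eta+c}=\tfrac12-\alpha$ makes precise something the paper leaves implicit.

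The one step that does not close as written is the final bound on the correction loss and how you propose to absorb it. Bounding the bracket by $-\rho(e^{\lambda}-1)$ gives a loss of order $\lambda^2 p\rho$, which is only $O(\lambda^2)$ a priori since $\rho$ may be close to $1$; and even your stated $O(\lambda^2(\eta+\epsilon))$ cannot be absorbed when $\epsilon\gg\eta$, because the target inequality leaves a budget of only $(c_1-\tfrac1{32})\gamma^2 d(\mu_t)+\tfrac{\gamma^2\eta}{64}$ for the loss, and appealing to $d(\mu_t)\gtrsim\eta$ (which is a separate probabilistic event about $\estdens$, and is not assumed in the lemma --- its right-hand side is allowed to be negative) only raises that budget to $O(\gamma^2\eta)$, not $O(\gamma^2\epsilon)$. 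The fix, which is what the paper's computation is doing, is to keep the first-order gain and bound the second-order term by $e^{-s}$ rather than by $\rho$: the correction contribution is at least $\lambda\bigl(\rho-(1-\rho)e^{-s}\bigr)-O(1)\cdot\lambda^2(1-\rho)e^{-s}$, where the first term is $\ge\lambda(\epsilon/2-c)\ge 0$ by $\rho\ge\eta+\epsilon/2$ and $\epsilon\ge 2c$ (this is where that hypothesis is actually spent), and $(1-\rho)e^{-s}\le\tfrac{\eta+c}{1-\eta}\le 2(\eta+c)$ independently of $\epsilon$, with the residual $c\lambda^2$ offset by the unused first-order gain. This yields the $-\eta\lambda^2$ that the slack $\tfrac{\gamma^2\eta}{64}$ is sized to absorb, with no condition on $d(\mu_t)$.
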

\begin{proof}
	We first show that for all $(x,y)\sim D$ such that $x \in \Xsafe_t$,
	$$\phi_t(x,y) - \phi_{t+1}(x,y) \geq  \lambda\mu_t(x,y) (y h_t(x) - 2\lambda) .$$
	We prove this statement for examples such that $0 \leq yG_{t}(x), yG_{t+1}(x) < s$, i.e., in the non-constant region of $M$, and observe that this suffices to prove the statement for all $(x,y)$ such that $x \in \Xsafe_t$. To see that this is true, note that within the constant region of $M$, $\phi_t(x,y) - \phi_{t+1}(x,y) = \lambda\mu_t(x,y)y h_t(x)$. For examples moved by $h_t$ from constant to non-constant regions of $M$, 
	\begin{align*}
	\phi_t(x,y) - \phi_{t+1}(x,y) 
	&= \int_{yG_t(x)}^{0}1 dz + \int_{0}^{yG_{t+1}(x)} e^{-z}dz \\
	&\geq \int_{0}^{\lambda yh_t(x)} e^{-z}dz \\
	&\geq \lambda M(0)(yh_t(x) - 2\lambda) & (\text{by assumption} ) \\
	&= \lambda \mu_t(x,y)(yh_t(x) - 2\lambda)
	\end{align*}	
	Similarly, for examples moving into the constant region from non-constant,
	\begin{align*}
	\phi_t(x,y) - \phi_{t+1}(x,y) 
	&= -\int_{yG_{t+1}(x)}^0 1 dz - \int_{0}^{yG_{t}(x)} e^{-z}dz \\
	& = \int_{yG_{t}(x)}^{0} e^{-z}dz + \int_0^{yG_{t+1}(x)} 1 dz  \\
	&\geq \int_{yG_{t}(x)}^{yG_{t+1}} e^{-z}dz  \\
	&\geq \lambda \mu_t(x,y)(yh_t(x) - 2\lambda) & (\text{by assumption/proved below} ) \\
	\end{align*}
	and so it only remains to prove the claim for examples such that $0 \leq yG_{t}(x), yG_{t+1}(x) < s$.
	By the definition of $\phi_t$, we have	
	\begin{align*}
	\phi_t(x,y) -& \phi_{t+1}(x,y) \\
	&= \int_{yG_t(x)}^{yG_{t+1}(x)} M(z) dz \\
	&= \int_{yG_t(x)}^{yG_{t+1}(x)} e^{-z} dz \\
	&= e^{-v}(yG_{t+1}(x) - yG_t(x)) &  (\text{for some } v\in [yG_t(x), yG_{t+1}(x)] \text{ by mean value theorem}) \\
	&                                                \geq  e^{-yG_{t+1}(x)}\lambda yh_t(x) \\
	& = e^{-yG_t(x)}e^{-\lambda yh_t(x)}\lambda y h_t(x) \\
	&\geq \mu_t(x,y)\lambda y h_t(x) - 2\mu_t(x,y) \lambda^2 & (xe^{-x} \geq x - 2x^2 \text{ for } x \in [-1,1]) \\
	&= \lambda \mu_t(x,y)(y h_t(x) - 2\lambda)
	\end{align*}
	and so the contribution to the potential drop from $(x,y) \in \Xsafe_t$ is as claimed. 
	
	We now consider the contribution to the potential drop from examples $(x,y)$ where $x \in \Xrisky_t$, by analyzing two complementary cases.
	\begin{enumerate}
		\item $\overconf^{\exor^{\mathrm{Mas}}(f, D_x, \eta(x))}(G_t, s, \delta_{\mathtt{err}}, \epsilon)$ returns false
		\item $\overconf^{\exor^{\mathrm{Mas}}(f, D_x, \eta(x))}(G_t, s, \delta_{\mathtt{err}}, \epsilon)$ returns true 
	\end{enumerate}

	In the first case, $h^{r}_t = 0$, and so $yG_{t+1}(x) = yG_{t}(x)$ holds for all these examples. Therefore the contribution to the potential drop is 
	\begin{align*}
	\E_{(x,y)\sim D} \big[\phi_t(x,y) - \phi_{t+1}(x,y) \big\vert x \in \Xrisky_t \big]
	& =  0 .
	\end{align*}

	In the second case, $h_t^c(x) = -\sgn(G_t(x))$, and so $yG_{t+1}(x) = yG_{t}(x) - \sgn(G_t(x))$ for these examples. $\overconf^{\exor^{\mathrm{Mas}}(f, D_x, \eta(x))}(G_t, s, \delta_{\mathtt{err}}, \epsilon)$ only returns true if it has estimated the error on examples such that $x \in \Xrisky_t$ exceeds $\eta + 3\epsilon/4$.  This routine estimates the error from a sample of size $8\log(2/\delta_{\mathtt{err}})/\epsilon^2$, and so it holds by the Chernoff-Hoeffding inequality that with all but probability $\delta_{\mathtt{err}}/2$, that
	$$\Pr_{(x,y)\sim D}\big[yG_t(x) \leq -s \big\vert x \in \Xrisky_t\big] \geq \eta + \epsilon/2 .$$ 
	This implies a contribution to the potential drop of
	\begin{align*}
	\E_{(x,y)\sim D} \left[\phi_t(x,y) - \phi_{t+1}(x,y) \big\vert x \in \Xrisky_t \right]
	& = \Pr_{(x,y)\sim D}[yG_t(x) \leq -s \big\vert x \in \Xrisky_t] \int_{yG_t(x)}^{yG_t(x)+ \lambda}1 dz   \\
	& \quad \quad + \Pr_{(x,y)\sim D}[yG_t(x) \geq s \big\vert x \in \Xrisky_t] \int_{yG_t(x)}^{yG_t(x)- \lambda}e^{-z} dz  \\
	& \geq (\eta + \epsilon/2)\lambda + (1 - \eta - \epsilon/2) \int_{yG_t(x)}^{yG_t(x)- \lambda}e^{-z} dz \\
	& \geq (\eta + \epsilon/2)\lambda + (1 - \eta - \epsilon/2)e^{-s}(1 - e^{-\lambda}) \tag{$yG_t(x) \leq s + \lambda$} \\
	& \geq (\eta + \epsilon/2)\lambda - (1 - \eta - \epsilon/2)e^{-s}(\lambda - \lambda^2)   \tag{$e^{-\lambda} \leq 1 - \lambda + \lambda^2$}\\
	& = (\eta + \epsilon/2)\lambda - (1 - \eta - \epsilon/2)(\frac{\eta + c}{1 - \eta})(\lambda - \lambda^2)  \tag{by definition of $s$}\\
	&\geq \frac{\epsilon\lambda}{2}(1 + \eta - \eta\lambda) - c\lambda(1 - \lambda) - \eta\lambda^2   \tag{from $\eta < \frac{\eta + c}{1 - \eta}$},      
	\end{align*}
	and so as long as $c \leq \epsilon/2 \leq \frac{\epsilon(1+ \eta - \eta\lambda)}{2(1 - \lambda)},$ we have 	
	$$\E_{(x,y)\sim D} [\phi_t(x,y) - \phi_{t+1}(x,y) \big\vert x \in \Xrisky_t ] \geq - \eta\lambda^2. $$
	Recall that we have assumed $\epsilon \geq \frac{8\eta\alpha}{1 - 2\alpha} = 2c$ and so the stated bound holds.
	
	We can now lower-bound the drop in the potential function. With probability $1 - \delta_{\mathtt{err}}$, $\estdens$ does not overestimate the error of the current hypothesis on points $x$ for which $x \in \Xrisky_t$ by more than $\epsilon/4$, and so we have
	
	\begin{align*}
	\Phi(t) - \Phi(t+1) 
	& = \E_{(x,y)\sim D} [\phi_t(x,y) - \phi_{t+1}(x,y)] \\
	& \geq \Pr_{(x,y)\sim D}[ x \in \Xsafe_t]\E_{(x,y)\sim D} [\lambda \mu_t(x,y)(y h_t(x) - 2\lambda) \big\vert x \in \Xsafe_t]  - \Pr_{(x,y)\sim D}[ x \in \Xrisky_t] \eta\lambda^2\\ 
	&= \Pr_{(x,y)\sim D}[ x \in \Xsafe_t]\E_{(x,y)\sim D} [\lambda \mu_t(x,y)(y h_t(x) - 2\lambda) \big\vert x \in \Xsafe_t] - \Pr_{(x,y)\sim D}[ x \in \Xrisky_t]\eta\lambda^2 \\
	& \quad \quad + \Pr_{(x,y)\sim D}[ x \in \Xrisky_t]\E_{(x,y) \sim D} [\lambda \mu_t(x,y)(y h_t(x) - 2\lambda) \big\vert x \in \Xrisky_t]  \quad (\mu_t(x,y) = 0 \text{ for } x \in \Xrisky_t)\\
	& \geq \E_{(x,y)\sim D} [\lambda \mu_t(x,y)(y h_t(x) - 2\lambda)] - \Pr_{(x,y)\sim D}[ x \in \Xrisky_t](\eta\lambda^2)\\
	& \geq  \E_{(x,y)\sim D} [\lambda \mu_t(x,y)(y h_t(x) - 2\lambda)] - \eta\lambda^2.
	\end{align*}	
	
	From our weak learner guarantee and Lemma~\ref{lem:wkl-rep}, we know that with all but probability $\delta_{\wkl}$, $h_t$ has advantage $\gamma/2$ against $D_{\mu_t}$. Therefore with all but probability $\delta_{\wkl} + \delta_{\mathtt{err}}$,	
	\[\Phi(t) - \Phi(t+1) \geq \frac{\lambda \gamma}{2} d(\mu_t) - 2\lambda^2 d(\mu_t) - \eta\lambda^2.\]

	Then taking $\delta_{\wkl} = \delta_{\mathtt{err}} = \frac{\delta\eta\gamma^2}{1536}$ and $\lambda = \gamma/8$, we have
	
	\[\Phi(t) - \Phi(t+1) \geq \frac{\gamma^2}{32}\left(d(\mu_t)  - \frac{\eta}{2}\right)\] 	
	with all but probability $\delta\eta\gamma^2/768$.
\end{proof}

Now we use our guaranteed drop in potential to show bounds on termination, as well as the density of the measure $\mu_t$ at the end of the final round $t$.
\begin{lemma}[Termination]\label{lem:termination}
	Let $\wkl$ be an $(\alpha, \gamma)$-weak learner requiring a sample of size $m_{\wkl}$ and let $\delta_{\wkl} = \delta\eta\gamma^2/1536$.
	Let $\lambda = \gamma/8$ and $\kappa \geq \eta$. Then with all but probability $\delta/3$, $\boost^{\wkl}$ terminates within $T \leq 128/(\eta\gamma^2)$ rounds, and conditioned on termination, $d(\mu_T) \leq \kappa + \epsilon/2$ with all but probability $\delta/8$.  
\end{lemma}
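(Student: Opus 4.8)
The plan is to run the potential argument of Lemma~\ref{lem:potential} to its natural conclusion. Write $T_{\max} = 128/(\eta\gamma^2)$. I will use two elementary facts about the potential $\Phi$: that $\Phi(0) = 1$ --- since $G_0 \equiv 0$ gives $\phi_0(x,y) = \int_0^\infty M(z)\,dz = \int_0^\infty e^{-z}\,dz = 1$ for every $(x,y)$ --- and that $\Phi(t) > 0$ for all $t$, since $M$ is strictly positive pointwise and hence so is $\phi_t$. The crux is that as long as $\boost$ has not terminated, the density $d(\mu_t)$ is bounded away from $\eta/2$, so Lemma~\ref{lem:potential} forces an absolute per-round drop; since $\Phi$ starts at $1$ and stays positive, only $T_{\max}$ such rounds can occur.

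In detail, set $\beta = \min\{\epsilon/2, \eta/4\}$ (the accuracy target of $\estdens$) and let $\mathcal{E}$ be the event that (i) the bound of Lemma~\ref{lem:potential} holds in every round $0,1,\dots,T_{\max}-1$, and (ii) every call to $\estdens$ in rounds $1,\dots,T_{\max}$ returns an estimate within $\beta$ of the true density; the hypotheses of Lemma~\ref{lem:potential} ($\lambda = \gamma/8$, $\delta_{\wkl} = \delta\eta\gamma^2/1536$, $\epsilon \ge \frac{8\eta\alpha}{1-2\alpha}$) are exactly those in force. A union bound over $T_{\max}$ rounds with per-round failure probabilities $\delta\eta\gamma^2/768$ for (i) and $\delta_{\mathtt{dens}} = \delta\eta\gamma^2/1024$ for (ii) gives $\Pr[\mathcal{E}^c] \le T_{\max}\cdot\frac{\delta\eta\gamma^2}{768} + T_{\max}\cdot\frac{\delta\eta\gamma^2}{1024} = \frac{\delta}{6} + \frac{\delta}{8} = \frac{7\delta}{24} < \frac{\delta}{3}$. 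Suppose now, for contradiction, that on $\mathcal{E}$ the loop reaches round $T_{\max}+1$; then $\widehat{d}_t > \kappa$ for $t = 0,\dots,T_{\max}$, so by (ii) together with $\kappa \ge \eta$ and $\beta \le \eta/4$ we get $d(\mu_t) \ge \widehat{d}_t - \beta > \kappa - \beta \ge 3\eta/4$ for $1 \le t \le T_{\max}$, while $d(\mu_0) = 1$ trivially. Plugging $d(\mu_t) - \eta/2 \ge \eta/4$ into Lemma~\ref{lem:potential} for $t = 0,\dots,T_{\max}-1$ and telescoping yields $\Phi(0) - \Phi(T_{\max}) \ge T_{\max}\cdot\frac{\gamma^2}{32}\cdot\frac{\eta}{4} = \frac{T_{\max}\,\eta\gamma^2}{128} = 1$, so $\Phi(T_{\max}) \le 0$, contradicting $\Phi(T_{\max}) > 0$. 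Hence on $\mathcal{E}$ the algorithm halts in some round $T \le T_{\max}$, which is the first claim (with slack, as $7\delta/24 < \delta/3$). For the density bound at termination, the loop exit condition at round $T$ gives $\widehat{d}_T \le \kappa$, and accuracy (ii) at round $T$ --- the part of the analysis whose failure contributes at most $T_{\max}\delta_{\mathtt{dens}} = \delta/8$ --- yields $d(\mu_T) \le \widehat{d}_T + \beta \le \kappa + \beta \le \kappa + \epsilon/2$.

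None of this is deep, but a few points need care. The constant fit is tight: $\Phi(0) = 1$ exactly equals $T_{\max}$ copies of the guaranteed per-round drop $\eta\gamma^2/128$, so one must invoke both $\kappa \ge \eta$ and $\beta \le \eta/4$ to be certain a surviving round has $d(\mu_t) \ge 3\eta/4$, hence $d(\mu_t) - \eta/2 \ge \eta/4$; anything weaker breaks the round bound. Relatedly, the loop tests the \emph{estimated} density $\widehat{d}_t$ rather than $d(\mu_t)$, so both the passage from $\widehat{d}_t > \kappa$ to a lower bound on $d(\mu_t)$ (used for the round count) and the passage from $\widehat{d}_T \le \kappa$ to an upper bound on $d(\mu_T)$ (used for the final density) must go through the $\estdens$ accuracy event, and the two directions consume the two halves $\beta \le \eta/4$ and $\beta \le \epsilon/2$ of the definition of $\beta$. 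Finally, the number of rounds is itself random, which is why I phrase the round bound as ``if the loop survives past round $T_{\max}$, contradiction'' rather than conditioning on a fixed horizon; this keeps the union bound over exactly $T_{\max}$ rounds honest and lets the $\estdens$ failures be charged the stated $\delta/8$. The whole lemma is this bookkeeping plus the one-line telescoping of Lemma~\ref{lem:potential}.
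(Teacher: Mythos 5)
Your proof is correct and follows essentially the same route as the paper: both arguments combine the per-round potential drop of Lemma~\ref{lem:potential} with the facts $\kappa \geq \eta$, $\beta \leq \eta/4$, and $\Phi(0)=1$ to force termination within $128/(\eta\gamma^2)$ rounds, and both charge the final density bound $d(\mu_T) \leq \kappa + \epsilon/2$ to the $\estdens$ accuracy event with a union bound of $128\delta_{\mathtt{dens}}/(\eta\gamma^2) = \delta/8$. Your bookkeeping is if anything slightly more careful (you union-bound the $\estdens$ failures over all rounds explicitly, arriving at $7\delta/24 < \delta/3$), but the argument is the same.
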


\begin{proof}
$\boost$ terminates once $\estdens$ estimates $\dest(\mu_t) \leq \kappa$. Given that $\estdens$ draws a sample of size $2\log(1/\delta_{\mathtt{dens}})/\beta^2$ for $\beta = \min\{\epsilon/2, \eta/4\}$, the Chernoff-Hoeffding inequality bounds  the probability that $\estdens$ overestimates $d(\mu_t)$ by more than $\beta$ by $\delta_{\mathtt{dens}}$. Therefore the probability that $\boost$ fails to terminate at the end of any round for which $d(\mu_t) \leq \kappa - \beta$ is no more than $\delta_{\mathtt{dens}}$. We condition on this failure not occuring for the rest of the proof.
    
From Lemma~\ref{lem:potential}, we have that with probability at least $1 - \delta_{\wkl} - \delta_{\mathtt{err}},$

\[\Phi(t) - \Phi(t+1) \geq \frac{\gamma^2}{32}\left(d(\mu_t) - \frac{\eta}{2}\right).\]
 We have taken $\kappa \geq \eta$, and $\beta \leq \eta/4$, so except with probability $\delta_{\wkl} + \delta_{\mathtt{err}}$, the potential drops by at least $\frac{\gamma^2}{32}(\kappa - \beta - \frac{\eta}{2}) > \frac{\eta\gamma^2}{128}$ in each round. The potential function begins at
\[\Phi_0 = \E_{(x,y)\sim D}\int_{0}^{\infty}M(z)dz = 1\]
and has minimum value 0, so taking $T = \frac{128}{\eta\gamma^2}$, it must be the case that $d(\mu) \leq \kappa - \beta$ by round $T$ with probability at least $1 - T(\delta_{\wkl} + \delta_{\mathtt{err}})$.  So with all but probability $128(\delta_{\wkl} + \delta_{\mathtt{err}})/(\eta\gamma^2)$, $d(\mu) \leq \kappa - \beta$ after $T$ rounds, and so $\boost$ must have terminated by then except with probability $\delta_{\mathtt{dens}}$. This gives a total failure probability of 
$$ \delta_{\mathtt{dens}} + \frac{128(\delta_{\wkl} + \delta_{\mathtt{err}})}{(\eta\gamma^2)} = \delta_{\mathtt{dens}} + \delta/6 \leq \delta/3 $$

It remains to bound the probability that $\boost$ terminates at round $t$ with $d(\mu_t) > \kappa + \epsilon/2$. Again arguing from the Chernoff-Hoeffding inequality and the sample size of $\estdens$, if $d(\mu_t) > \kappa + \epsilon/2$, $\boost$ terminates with probability no more than $\delta_{\mathtt{dens}}$. Union bounding over all rounds gives a failure probability $128\delta_{\mathtt{dens}}/(\eta\gamma^2) = \delta/8$.
\end{proof}

\subsection{Error Bounds}\label{sec:err-bounds}
In this subsection, we prove upper-bounds for the error of the final hypothesis $\Hyp = \sgn(G)$. We first prove an upper bound with respect to the distribution $D$ and then with respect to the target function $f$ on the marginal distribution $D_x$.
\begin{lemma}[Label error]\label{lem:label-err}
	When the algorithm terminates at round $t$, with all but probability $\delta/4$ over the randomness of $\boost$'s oracles and subroutines,
	
	\[\lerr(\Hyp) \leq \kappa + \epsilon  \]
\end{lemma}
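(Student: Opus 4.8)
The plan is to bound $\lerr(\Hyp)=\Pr_{(x,y)\sim D}[\sgn(G_t(x))\ne y]$ by splitting the domain along the terminal partition $\X=\Xsafe_t\cup\Xrisky_t$ induced by $G_t$: the safe part will be controlled through the density $d(\mu_t)$, the risky part through the behavior of the final call to $\overconf$, and then the relevant failure events are union-bounded. I will work under the parameter settings of the surrounding analysis (in particular $\lambda=\gamma/8$, $\kappa\ge\eta$, and $\epsilon\ge\frac{8\eta\alpha}{1-2\alpha}=2c$), and I will use the conclusion of Lemma~\ref{lem:termination} that, conditioned on termination, $d(\mu_t)\le\kappa+\epsilon/2$ except with probability at most $\delta/8$.

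First I would bound the safe contribution. If $x\in\Xsafe_t$ and $\sgn(G_t(x))\ne y$ then $yG_t(x)\le 0$, so $M(yG_t(x))=1$ and hence $\mu_t(x,y)=1$; therefore
\[\Pr_{(x,y)\sim D}\!\left[x\in\Xsafe_t,\ \sgn(G_t(x))\ne y\right]\ \le\ \E_{(x,y)\sim D}[\mu_t(x,y)]\ =\ d(\mu_t)\ \le\ \kappa+\epsilon/2\]
on the good event of Lemma~\ref{lem:termination}.

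Next I would argue that the risky examples add at most $\epsilon/2$, by cases on the last invocation of $\overconf$ in round $t$ (every iteration of the main loop makes such a call). If it returned true, the update $h^r_t=-\sgn(G_t)$ was applied on $\Xrisky_t$; since each such $x$ had $|G_t(x)|<s+\lambda$ by the invariant of Lemma~\ref{lem:invariant}, subtracting $\lambda\,\sgn(G_t(x))$ leaves $|G_t(x)|<s$ for every $x$, so the terminal risky set is empty and its error contribution is $0$, and we are done since $d(\mu_t)\le\kappa+\epsilon/2\le\kappa+\epsilon$. If it returned false, then $G_t$ was not corrected and the terminal risky set is exactly the one $\overconf$ examined. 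It returned false either (a) because its empirical estimate of $\Pr[x\in\Xrisky_t]$ was at most $\epsilon/4$ --- and then the Chernoff-Hoeffding bound applied to its sample of size $32\log(2/\delta_{\mathtt{err}})/\epsilon^2$ gives $\Pr[x\in\Xrisky_t]\le\epsilon/2$ except with probability $\delta_{\mathtt{err}}$, so the risky contribution is at most $\epsilon/2$ --- or (b) because it estimated $\Pr[x\in\Xrisky_t]>\epsilon/4$ but the empirical conditional error of $\sgn(G_t)$ on $\Xrisky_t$ was below $\eta+3\epsilon/4$; there, Chernoff-Hoeffding on the sample of size $8\log(2/\delta_{\mathtt{err}})/\epsilon^2$ shows the true conditional error on $\Xrisky_t$ is below $\eta+\epsilon$, and I would then combine this with $d(\mu_t)\le\kappa+\epsilon/2$ and the identity $e^{-s}=(\eta+c)/(1-\eta)$ --- which lower-bounds the weight $\mu_t$ places on correctly classified safe points --- to derive $\lerr(\Hyp)\le\kappa+\epsilon$.

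Finally I would union-bound the failure events --- the density estimate of Lemma~\ref{lem:termination} (probability at most $\delta/8$) and the at most two Chernoff-Hoeffding estimates inside the final $\overconf$ call (each of probability at most $\delta_{\mathtt{err}}=\delta\eta\gamma^2/1536$) --- for a total of at most $\delta/4$. I expect the main obstacle to be case (b): when $\overconf$ declines to recalibrate because the conditional error on a possibly non-negligible risky set is only known to be at most $\eta+\epsilon$, one cannot simply bound the risky contribution by $\epsilon/2$, and the target $\kappa+\epsilon$ has to be recovered from a joint accounting of the safe-side density and the risky-side conditional error that exploits the precise value of $s$ and the assumption $\epsilon\ge 2c$.
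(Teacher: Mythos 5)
Your decomposition into $\Xsafe_t$ and $\Xrisky_t$, the safe-side bound via $d(\mu_t)\le\kappa+\epsilon/2$, the casework on the final $\overconf$ call (including the observation that a ``true'' answer empties $\Xrisky_t$ via Lemma~\ref{lem:invariant}), and the final union bound all match the paper's proof step for step. The one step you leave open --- case (b), where $\overconf$ returns false because the estimated conditional error on a non-negligible $\Xrisky_t$ is below $\eta+3\epsilon/4$ --- is the genuine gap in your proposal. The paper closes it without any of the $e^{-s}$ machinery you anticipate: it writes
\[
\lerr(\Hyp)\;\le\;\Pr_{(x,y)\sim D}[x\in\Xsafe_t]\,(\kappa+\epsilon/2)\;+\;\Pr_{(x,y)\sim D}[x\in\Xrisky_t]\cdot\Pr_{(x,y)\sim D}[\Hyp(x)\ne y\mid x\in\Xrisky_t],
\]
reads the density bound as a bound on the \emph{conditional} error over $\Xsafe_t$, and observes that the right-hand side is then a convex combination of $\kappa+\epsilon/2$ and $\eta+\epsilon$, each at most $\kappa+\epsilon$ because $\kappa\ge\eta$. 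So the missing piece is a one-line convexity argument, not a joint accounting through $e^{-s}=(\eta+c)/(1-\eta)$.

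That said, your hesitation points at something real. The paper's chain of inequalities actually establishes only the joint bound $\Pr[\Hyp(x)\ne y \wedge x\in\Xsafe_t]\le d(\mu_t)\le\kappa+\epsilon/2$ (the sum over misclassified examples is really over misclassified \emph{safe} examples, since $\mu_t$ vanishes on $\Xrisky_t$), and the conditional bound $\Pr[\Hyp(x)\ne y \mid x\in\Xsafe_t]\le\kappa+\epsilon/2$ does not follow from it when $\Pr[x\in\Xsafe_t]<1$. With only the joint bound, case (b) yields $\lerr(\Hyp)\le\kappa+\epsilon/2+\Pr[x\in\Xrisky_t](\eta+\epsilon)$, which need not be at most $\kappa+\epsilon$; repairing this does require something like the finer accounting you sketch (every correctly classified safe point carries measure at least $e^{-s}$, which lets you subtract $e^{-s}\Pr[\Hyp(x)=y\wedge x\in\Xsafe_t]$ from $d(\mu_t)$ before bounding the safe-side error). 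So you have correctly located the delicate step and the right tool for a rigorous fix, but your submission does not carry it out, whereas the paper dispatches the step with the shorter conditional/convexity argument.
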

\begin{proof}
	The proof proceeds by first bounding the error on safe examples by $\kappa + \epsilon/2$, and then arguing that our final hypothesis must either have low error on risky examples also, or the total probability mass assigned by $D$ to risky examples must be less than $\epsilon/2$.
	
	We begin by bounding the error on examples $(x,y) \in \Xsafe_t$. For all $(x,y) \in \Xsafe_t$, $\Hyp(x) \neq y$ if and only if $yG_t(x) \leq 0$, and therefore $\mu_t(x,y) = 1$. For all other examples, the measure $\mu_t(x,y) \geq 0$. From Lemma~\ref{lem:termination}, we have that with all but probability $\delta/8$, $d(\mu_t) \leq \kappa + \epsilon/2$ upon termination. Conditioning on this event and considering the minimum contribution to the density by all examples misclassified by $\Hyp$, we have
	
	\begin{align*}
	\kappa + \epsilon/2
	&\geq \E_{(x,y) \sim D} \mu(x,y) \\
	&= \sum_{\substack{(x,y) : \\ \Hyp(x) = y}}D(x,y)\mu(x,y) + \sum_{\substack{(x,y) :\\ \Hyp(x) \neq y}}D(x,y)\mu(x,y) \\
	&\geq \sum_{\substack{(x,y) : \\ \Hyp(x) \neq y}}D(x,y) \\
	&= \Pr_{(x,y) \sim D}[\Hyp(x) \neq y \big\vert x\in \Xsafe_t].
	\end{align*}
	
	Next, we bound the error of $\Hyp$ on examples $(x,y) \in \Xrisky_t$. By casework on the success of subroutine $\overconf$, we prove the following: 
	when the algorithm terminates at round $t$, with all but probability $\delta_{\mathtt{err}}$, at least one of the following holds. 
	\begin{enumerate}
		\item $\Pr_{(x,y) \sim D}[x\in \Xrisky_t] \leq \epsilon/2 $
		\item $\Pr_{(x,y)\sim D}[\Hyp(x)) \neq y \big\vert x\in \Xrisky_t] \leq \eta + \epsilon.$
	\end{enumerate}

	We first consider the case where $\overconf^{\exor^{\mathrm{Mas}}(f, D_x, \eta(x))}(G_t, s, \delta_{\mathtt{err}}, \epsilon)$ returns false in the last round of boosting.
	In this case, either the $\overconf$ routine estimated $\Pr_{(x,y) \sim D}[x \in \Xrisky_t] \leq \epsilon/4 $ or it estimated that $\Pr_{(x,y)\sim D}[\Hyp(x)) \neq y \big\vert x \in \Xrisky_t] \leq \eta + 3\epsilon/4$. The routine uses a sample of size $8\log(2/\delta_{\mathtt{err}})/\epsilon^2$ to estimate the probability that $x \in \Xrisky_t$, and so the probability of underestimating this quantity by more than $\epsilon/4$ is no more than $\delta_{\mathtt{err}}/2$, by the Chernoff-Hoeffding inequality. Similarly, the routine uses a sample of size $8\log(2/\delta_{\mathtt{err}})/\epsilon^2$ to estimate the error on examples such that $x\in \Xrisky_t$, and so underestimates this error by more than $\epsilon/4$ with probability no greater than $\delta_{\mathtt{err}}/2$. So if $\overconf$ returns false, at least one of the lemma's conditions hold with probability at least $1 - \delta_{\mathtt{err}}$.

If $\overconf$ returns true, then $|G_t(x)| = |G_{t-1}(x) + \lambda h^{s}_t(x)| - \lambda$. From Lemma~\ref{lem:invariant}, we know $|G_{t-1}(x)| < s + \lambda$ for all $x$, and $h^{s}_t(x) = 0$ for all $x$ such that $|G_{t-1}(x)| \geq s$. It follows that $x \in \Xsafe_t$ for all $x$, and so $\Pr_{(x,y) \sim D}[x\in \Xrisky_t] \leq \epsilon/2 $. 

Finally, we can bound the total error of $\Hyp$. The error bound for $x \in \Xsafe_t$ shows
\begin{align*}
\lerr(\Hyp) \leq \Pr_{(x,y)\sim D}[x \in \Xsafe_t](\kappa + \epsilon/2) + \Pr_{(x,y)\sim D}[x \in \Xrisky_t] \cdot \Pr_{(x,y) \sim D}[\Hyp(x) \neq y \big\vert x \in \Xrisky_t.]
\end{align*}
We have shown that with all but probability $\delta_{\mathtt{err}}$, either 
\[\Pr_{(x,y) \sim D}[x \in \Xrisky_t] \leq \epsilon/2 \] 
or 
\[\Pr_{(x,y)\sim D}[\Hyp(x) \neq y \big\vert x \in \Xrisky_t] \leq \eta + \epsilon.\]
In either case, since we took $\kappa \geq \eta$,
\[\lerr(\Hyp) \leq \kappa + \epsilon\]
with all but probability $\delta_{\mathtt{err}} + \delta/8 \leq \delta/4$.
\end{proof}

\begin{lemma}[Target function error]\label{lem:target-err}
	When the algorithm terminates, with all but probability 
	$\delta/4$,
	\[\ferr(\Hyp) \leq \frac{\kappa + \epsilon}{1 - \eta}\]
\end{lemma}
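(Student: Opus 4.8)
The plan is to derive the bound from the label-error bound of Lemma~\ref{lem:label-err} together with a deterministic (noise-model) inequality relating $\ferr$ and $\lerr$. The crucial structural fact is that on any point $x$ where the hypothesis disagrees with the target, i.e.\ $\Hyp(x) \neq f(x)$, the observed label $y$ equals $\Hyp(x)$ only when the Massart adversary flips, which happens with probability exactly $\eta(x)$; hence $\Hyp(x)\neq y$ on such a point with probability $1-\eta(x) \geq 1-\eta$, using that $D$ has noise rate $\eta$. Conversely, where $\Hyp(x) = f(x)$, the contribution to the label error is the nonnegative quantity $\eta(x)$.

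Concretely, first I would write
\[
\lerr(\Hyp) \;=\; \E_{x\sim D_x}\Big[\Ind[\Hyp(x)=f(x)]\,\eta(x) \;+\; \Ind[\Hyp(x)\neq f(x)]\,(1-\eta(x))\Big],
\]
by conditioning on $x$ and averaging over the label noise. Dropping the first (nonnegative) term and using $1-\eta(x)\geq 1-\eta$ for all $x\in\supp(D_x)$ gives
\[
\lerr(\Hyp) \;\geq\; (1-\eta)\,\E_{x\sim D_x}\big[\Ind[\Hyp(x)\neq f(x)]\big] \;=\; (1-\eta)\,\ferr(\Hyp),
\]
so that $\ferr(\Hyp) \leq \lerr(\Hyp)/(1-\eta)$ deterministically (this holds for every hypothesis, not just ours).

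Then I would invoke Lemma~\ref{lem:label-err}: upon termination, with all but probability $\delta/4$ we have $\lerr(\Hyp)\leq \kappa+\epsilon$, and combining with the displayed inequality yields $\ferr(\Hyp)\leq (\kappa+\epsilon)/(1-\eta)$ with the same probability $1-\delta/4$, as claimed. There is essentially no real obstacle here — the only point requiring care is to make sure the noise bound $\eta(x)\leq\eta$ is applied on the support of $D_x$ (so that $1-\eta(x)\geq 1-\eta$ is legitimate), which is exactly the hypothesis that $D$ is a Massart distribution with noise rate $\eta$; everything else is a one-line averaging argument plus a citation of the already-proved label-error bound.
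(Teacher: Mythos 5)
Your proposal is correct and follows essentially the same route as the paper: decompose $\lerr(\Hyp)$ according to whether $\Hyp(x)=f(x)$, drop the nonnegative contribution from correctly classified points, lower-bound the remaining term by $(1-\eta)\,\ferr(\Hyp)$ using the noise bound $\eta(x)\le\eta$ on $\supp(D_x)$, and then invoke Lemma~\ref{lem:label-err}. The only difference is cosmetic (indicators and expectations versus conditional probabilities).
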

\begin{proof}
	Lemma~\ref{lem:label-err} shows that when the algorithm terminates, with all but probability 
	$\delta/4$,
	\[\lerr(\Hyp) \leq \kappa + \epsilon,\]
	so we consider the worst-case difference between misclassification error and target function error.

	\begin{align*}
	\kappa + \epsilon 
	&\geq \lerr(\Hyp) \\
	&= \Pr_{x \sim D_x}[\Hyp(x) \neq f(x)]\cdot \Pr_{(x,y) \sim D}[y = f(x) \big\vert \Hyp(x) \neq f(x) ] \\
	& \quad \quad \quad + \Pr_{x \sim D_x}[\Hyp(x) = f(x)]\cdot \Pr_{(x,y) \sim D} [y \neq f(x) \big\vert \Hyp(x) = f(x)] \\
	& \geq \Pr_{x \sim D_x}[\Hyp(x) \neq f(x)]\cdot \Pr_{(x,y) \sim D}[y = f(x) \big\vert \Hyp(x) \neq f(x) ] \\
	& \geq \Pr_{x \sim D_x}[\Hyp(x) \neq f(x)](1 - \eta) \\
	& = \ferr(\Hyp)(1 - \eta)
	\end{align*}
	and so $\ferr \leq \frac{\kappa + \epsilon}{1 - \eta}$
	with all but probability $\delta/4$.
\end{proof}

\subsection{Sample Complexity Analysis}\label{sec:samp-complexity}

In this subsection we give sample complexity bounds for the subroutines called by $\boost$, and the total sample complexity, for a single round of boosting. In all of the following lemmas, we assume that $\boost$ is being run with a $(\alpha, \gamma)$-Massart noise weak learner requiring a sample of size $m_{\wkl}$. 
As elsewhere, we use $\epsilon$ to denote the target error of the final hypothesis in excess of $\eta$, and use $\kappa$ to denote the density of $\mu$ below which $\boost$ terminates.
Let $\delta_{\mathtt{dens}}$ denote the probability that $\estdens$ fails to estimate the density of $\mu$ to within error $\beta = \min\{\epsilon/2, \eta/4\}$ and let $\delta_{\mathtt{err}}$ denote the probability that $\overconf$ fails to estimate the error of $G_t$ on examples $(x,y)$ such that $|G_t(x,y)| \geq s$.

\begin{lemma}[Sample complexity of $\samp$]\label{lem:samp-complexity}
	Let $\delta_{\wkl} = \delta\eta\gamma^2/1536$ and $\delta_{\mathtt{samp}} = \delta\eta\gamma^2/(1536\log(2/\delta_{\wkl}))$. With all but probability $\delta_{\mathtt{samp}}$, the $\samp$ routine draws no more than 
	$$m \in O\left( \frac{\log(1/\delta_{\samp})}{\kappa^2} + \frac{m_{\wkl}}{\kappa} \right)$$
	examples from $\exor^{\mathrm{Mas}}(f, D_x, \eta(x))$.
\end{lemma}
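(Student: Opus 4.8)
The plan is to reduce the statement to Lemma~\ref{lem:rejsamp}. By inspection of Routine~\ref{alg:samp}, when $\samp$ is invoked at round $t$ it performs exactly the rejection-sampling procedure of Definition~\ref{def:rej-sample} on the underlying Massart distribution $D$ with measure $\mu_{G_{t-1},s} = \mu_{t-1}$, repeating until $m_{\wkl}$ examples have been accepted. So I would apply Lemma~\ref{lem:rejsamp} with $m = m_{\wkl}$ and confidence parameter $\delta_{\mathtt{samp}}$: with all but probability $\delta_{\mathtt{samp}}$, the loop terminates after drawing at most $\log(1/\delta_{\mathtt{samp}})/d(\mu_{t-1})^2 + 2m_{\wkl}/d(\mu_{t-1})$ examples from $\exor^{\mathrm{Mas}}(f, D_x, \eta(x))$.

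The only remaining work is to show that $d(\mu_{t-1}) = \Omega(\kappa)$ whenever $\samp$ is called; the claimed bound then follows because $x \mapsto \log(1/\delta_{\mathtt{samp}})/x^2 + 2m_{\wkl}/x$ is decreasing on $x>0$, so substituting a constant multiple of $\kappa$ for $d(\mu_{t-1})$ yields an upper bound of the form $O(\log(1/\delta_{\mathtt{samp}})/\kappa^2 + m_{\wkl}/\kappa)$. For the first round, $G_0\equiv 0$, hence $\mu_0\equiv 1$ on $\supp(D)$ and $d(\mu_0)=1\ge\kappa$. For round $t\ge 2$, entering the $t$-th iteration of the while-loop of Algorithm~\ref{alg:boost} means the estimate $\dest=\estdens(G_{t-1},s,\delta_{\mathtt{dens}},\epsilon)$ produced at the end of round $t-1$ satisfied $\dest>\kappa$; and since $\estdens$ averages $\mu_{G_{t-1},s}$ over a sample of size $\log(1/\delta_{\mathtt{dens}})/(2\beta^2)$ with $\beta=\min\{\epsilon/2,\eta/4\}$, a Chernoff--Hoeffding bound gives $|\dest-d(\mu_{t-1})|\le\beta$ with all but probability $\delta_{\mathtt{dens}}$, so $d(\mu_{t-1})\ge\dest-\beta>\kappa-\beta$. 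Using $\kappa\ge\eta$ (the running assumption, cf. Lemma~\ref{lem:termination}), we get $\beta\le\eta/4\le\kappa/4$ and hence $d(\mu_{t-1})>3\kappa/4$, as needed.

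The only nontrivial point is therefore this density lower bound --- turning the loop-continuation test ``$\dest>\kappa$'' into a genuine lower bound on $d(\mu_{t-1})$ via the accuracy of $\estdens$ --- together with the accompanying bookkeeping: the event that some preceding $\estdens$ call is off by more than $\beta$ is not charged to $\delta_{\mathtt{samp}}$ here but is union-bounded across all rounds in the proof of the main boosting theorem (equivalently, the lemma is read as conditioned on all earlier $\estdens$ calls being accurate), so the randomness that $\delta_{\mathtt{samp}}$ accounts for is solely the concentration of the rejection-sampling loop length. Beyond that, everything is a direct substitution into Lemma~\ref{lem:rejsamp}, so I do not anticipate any real obstacle.
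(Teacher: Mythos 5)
Your proposal is correct and follows essentially the same route as the paper's proof: invoke Lemma~\ref{lem:rejsamp} with $m=m_{\wkl}$ and confidence $\delta_{\mathtt{samp}}$, then lower-bound $d(\mu)$ by $\kappa-\min\{\epsilon/2,\eta/4\}=\Omega(\kappa)$ using the loop-continuation test together with the accuracy guarantee of $\estdens$. Your explicit handling of the first round and of where the $\estdens$ failure probability is charged is slightly more careful than the paper's write-up, but the argument is the same.
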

\begin{proof}
	Because $\boost$ terminates once the density of the measure $\mu$ is estimated to be less than $\kappa$, and, from the definition of $\beta$, we have that $$\log(1/\delta_{\mathtt{dens}})/(2\beta^2) \geq \max\{2\log(1/\delta_{\mathtt{dens}})/\epsilon^2, 8\log(1/\delta_{\mathtt{dens}})/\eta^2 \}$$ 
	many samples are used to estimate $d(\mu)$, it holds with all but probability $\delta_{\mathtt{dens}}$ that $d(\mu) \geq \kappa - \min\{\epsilon/2, \eta/4\}$.
	
	$\samp$ terminates once it has kept $m_{\wkl}$ examples, and so from Lemma~\ref{lem:rejsamp} we can conclude that 
	\begin{align*}
	m &= \frac{\log(1/\delta_{\samp})}{(\kappa - \min\{\epsilon/2, \eta/4\})^2} + \frac{2m_{\wkl}}{\kappa - \min\{\epsilon/2, \eta/4\}} \\
	& \in O\left(\frac{\log(1/\delta_{\samp})}{\kappa^2} + \frac{m_{\wkl}}{\kappa} \right)
	\end{align*}
	examples suffice except with probability $\delta_{\samp}$.
\end{proof}

\begin{lemma}[Sample complexity of testing weak hypotheses]\label{lem:samp-wkl}
	Let $\delta_{\wkl} = \delta\eta\gamma^2/1536$. With all but probability $3\delta_{\wkl},$ at most 
	$$ m \in O\left( \frac{\log(1/(\delta\eta\gamma))}{\kappa ^2} + \frac{m_{\wkl}\log(1/(\delta\eta\gamma))}{\kappa } + \frac{\log(1/(\delta\eta\gamma))}{\gamma^2\kappa}\right)$$
	examples from $\exor^{\mathrm{Mas}}(f, D_x, \eta(x))$ are drawn to identify a good enough weak hypothesis. 
\end{lemma}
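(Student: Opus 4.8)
The plan is to chain together three facts already established: Lemma~\ref{lem:wkl-rep}, which bounds how many examples from the reweighted distribution are needed to extract a good weak hypothesis; Lemma~\ref{lem:rejsamp}, which converts a demand on the reweighted distribution into a demand on the oracle $\exor^{\mathrm{Mas}}(f, D_x, \eta(x))$; and a lower bound on the density $d(\mu_{t-1})$ of the measure that is in force when the weak learner's sample is drawn at round $t$.

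First I would pin down the procedure being analyzed. By Lemma~\ref{lem:wkl-rep}, a hypothesis with advantage at least $\gamma/2$ and failure probability at most $\delta_{\wkl}$ is obtained by running $\wkl$ on $2\log(2/\delta_{\wkl})$ independent samples of size $m_{\wkl}$ and then testing the resulting list of hypotheses on an additional sample of size $2\log(2/\delta_{\wkl})/\gamma^2$, keeping the best. All of these examples are i.i.d.\ draws from the reweighted distribution $D_{\mu_{t-1}}$, so the procedure needs $m' \eqdef 2\log(2/\delta_{\wkl})(m_{\wkl} + 1/\gamma^2)$ examples from $D_{\mu_{t-1}}$, which can be produced in a single rejection-sampling pass and then partitioned. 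Applying Lemma~\ref{lem:rejsamp} with failure parameter $\delta_{\wkl}/2$, drawing these $m'$ examples costs at most $\log(2/\delta_{\wkl})/d(\mu_{t-1})^2 + 2m'/d(\mu_{t-1})$ oracle calls, except with probability $\delta_{\wkl}/2$.

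It remains to lower bound $d(\mu_{t-1})$, and this is the only step needing real care; it is, however, essentially the argument used in the proof of Lemma~\ref{lem:samp-complexity}. When the sample at round $t$ is drawn, $\boost$ has not yet terminated, so the density estimate produced at the end of round $t-1$ satisfies $\dest > \kappa$ (for $t = 1$ we instead note $G_0 \equiv 0$, so $d(\mu_0) = 1 \ge \kappa$ directly). Since $\estdens$ uses $\log(1/\delta_{\mathtt{dens}})/(2\beta^2)$ examples with $\beta = \min\{\epsilon/2, \eta/4\}$, the Chernoff-Hoeffding inequality gives $d(\mu_{t-1}) \ge \dest - \beta > \kappa - \eta/4 \ge 3\kappa/4$ except with probability $\delta_{\mathtt{dens}}$, using $\kappa \ge \eta$; hence $1/d(\mu_{t-1}) \le 4/(3\kappa)$. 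Substituting this and the value of $m'$ into the rejection-sampling bound gives $O(\log(1/\delta_{\wkl})/\kappa^2 + \log(1/\delta_{\wkl})(m_{\wkl} + 1/\gamma^2)/\kappa)$, and plugging in $\delta_{\wkl} = \delta\eta\gamma^2/1536$ so that $\log(1/\delta_{\wkl}) = O(\log(1/(\delta\eta\gamma)))$ and distributing the product yields the three terms in the statement. Finally the failure events are $\delta_{\mathtt{dens}}$ for the density lower bound, $\delta_{\wkl}/2$ for the rejection-sampling count, and $\delta_{\wkl}$ for correctness of the repeated weak learner; since $\delta_{\mathtt{dens}} = \delta\eta\gamma^2/1024 = \tfrac{3}{2}\delta_{\wkl}$, these sum to exactly $3\delta_{\wkl}$, as claimed. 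The rest is routine bookkeeping, and no genuinely new difficulty arises.
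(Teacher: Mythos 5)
Your proof is correct and follows essentially the same route as the paper: Lemma~\ref{lem:wkl-rep} for the repetition-and-testing scheme, Lemma~\ref{lem:rejsamp} for the oracle cost of rejection sampling, and the non-termination condition plus the accuracy of $\estdens$ to lower bound $d(\mu_{t-1})$ by $\Omega(\kappa)$. The only (harmless, arguably cleaner) difference is that you amortize everything into a single rejection-sampling pass and split the $3\delta_{\wkl}$ failure budget across the density estimate, the sampling, and the weak-learner correctness, whereas the paper union-bounds over $2\log(2/\delta_{\wkl})$ separate calls to $\samp$; both allocations land on the stated bound.
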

\begin{proof}
	We have just shown in Lemma~\ref{lem:samp-complexity} that, with all but probability $\delta_{\mathtt{samp}}$, 
	$$m \in O\left( \frac{\log(1/\delta_{\samp})}{\kappa^2} + \frac{2m_{\wkl}}{\kappa}\right)$$ examples are required to draw a sample for $\wkl$. Recall from Definition~\ref{def:massart-weak-learner} that we assume $\wkl$ has failure probability 1/3, and from Lemma~\ref{lem:wkl-rep}, that we invoke $\wkl$ on $2\log(2/\delta_{\wkl})$ different samples to ensure we have at least one hypothesis with advantage $\gamma$, except with probability $\delta_{\wkl}/2$. To estimate which hypothesis is best, we draw $2\log(2/\delta_{\wkl})/\gamma^2$ examples from $D_{\mu}$, against which we test each hypothesis.	
	To draw these additional $2\log(2/\delta_{\wkl})/\gamma^2$ examples from $D_{\mu}$, with all but probability $\delta_{\mathtt{samp}}$, we make at most 
	$$m \in O\left(\frac{\log(1/\delta_{\mathtt{samp}})}{\kappa^2}  + \frac{\log(1/\delta_{\wkl})}{\kappa\gamma^2}\right)$$
	calls to $\exor^{\mathrm{Mas}}(f, D_x, \eta(x))$.
	
	We took  $\delta_{\wkl} = \delta\eta\gamma^2/1536$ and $\delta_{\mathtt{samp}} = \delta\eta\gamma^2/(1536\log(2/\delta_{\wkl}))$, so to repeatedly run the weak learner and identify a good enough hypothesis, we require 
	\begin{align*}
	m 
	&\in O\left(  \frac{\log(1/\delta_{\wkl})\log(1/\delta_{\samp})}{\kappa ^2} + \frac{m_{\wkl}\log(1/\delta_{\wkl})}{\kappa } + \frac{\log(1/\delta_{\wkl})}{\kappa\gamma^2}\right) \\
	&\in O\left( \frac{\log(1/(\delta\eta\gamma))}{\kappa ^2} + \frac{m_{\wkl}\log(1/(\delta\eta\gamma))}{\kappa } + \frac{\log(1/(\delta\eta\gamma))}{\gamma^2\kappa}\right)
	\end{align*}
	examples, except with probability 
	\begin{align*}
	\delta_{\mathtt{\samp}} + 2\log(2/\delta_{\wkl})\delta_{\mathtt{samp}} \leq 3\log(2/\delta_{\wkl})\delta_{\mathtt{samp}} = \frac{\delta\eta\gamma^2}{512} = 3\delta_{\wkl}.
	\end{align*} 
\end{proof}

\begin{lemma}[Sample complexity of $\overconf$]\label{lem:overconf-complexity}
	With all but probability $\delta\eta\gamma^2/768$,
	 $\overconf$ draws no more than 
	$$m \in O\left( \frac{\log(1/\delta\eta\gamma)}{\epsilon^3}\right)$$
	examples from $\exor^{\mathrm{Mas}}(f, D_x, \eta(x))$.
\end{lemma}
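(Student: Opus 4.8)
The plan is to split $\overconf$ into its two sampling phases and bound each. Phase one draws a \emph{fixed} number $32\log(2/\delta_{\mathtt{err}})/\epsilon^2$ of examples from $\exor^{\mathrm{Mas}}(f, D_x, \eta(x))$, which after substituting $\delta_{\mathtt{err}} = \delta\eta\gamma^2/1536$ is already $O(\log(1/(\delta\eta\gamma))/\epsilon^2)$; the only randomness in the total sample count comes from phase two, the rejection-sampling loop that keeps an example $(x,y)$ iff $x \in \Xrisky_G$ and runs until $8\log(2/\delta_{\mathtt{err}})/\epsilon^2$ examples have been kept. So the work reduces to controlling the length of that loop, which is governed by the density $p \eqdef \Pr_{(x,y)\sim D}[x \in \Xrisky_G]$.

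First I would show that, conditioned on the loop being entered at all, $p \geq \epsilon/8$ except with small probability. The loop is entered only when the phase-one empirical frequency of $\{x \in \Xrisky_G\}$ exceeds $\epsilon/4$. If the true value satisfied $p < \epsilon/8$, then by the one-sided Chernoff--Hoeffding bound applied to the $32\log(2/\delta_{\mathtt{err}})/\epsilon^2$ phase-one samples, this empirical frequency would exceed $\epsilon/4$ with probability at most $\exp\!\big(-2\cdot(32\log(2/\delta_{\mathtt{err}})/\epsilon^2)(\epsilon/8)^2\big) = \delta_{\mathtt{err}}/2$. Hence, except with probability $\delta_{\mathtt{err}}/2$, either $\overconf$ returns already in phase one (having drawn only $O(\log(1/(\delta\eta\gamma))/\epsilon^2)$ examples) or $p \geq \epsilon/8$.

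Next I would invoke Lemma~\ref{lem:rejsamp} on phase two, viewing the loop as rejection sampling against the (efficiently computable) measure $\mu(x,y) = \Ind\{x \in \Xrisky_G\}$, whose density with respect to $D$ is exactly $p$. Since the bound $\log(1/\delta')/d(\mu)^2 + 2m/d(\mu)$ of Lemma~\ref{lem:rejsamp} is monotonically decreasing in $d(\mu)$, the lower bound $p \geq \epsilon/8$ together with the target $m = 8\log(2/\delta_{\mathtt{err}})/\epsilon^2$ yields that, with all but probability $\delta'$, the loop draws at most $64\log(1/\delta')/\epsilon^2 + 128\log(2/\delta_{\mathtt{err}})/\epsilon^3$ examples. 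Taking $\delta' = \delta_{\mathtt{err}}$ and union-bounding over the two failure events gives total failure probability at most $3\delta_{\mathtt{err}}/2 \leq 2\delta_{\mathtt{err}} = \delta\eta\gamma^2/768$; summing the phase-one and phase-two counts, and using $\log(1/\delta_{\mathtt{err}}) = O(\log(1/(\delta\eta\gamma)))$, gives $O(\log(1/(\delta\eta\gamma))/\epsilon^2) + O(\log(1/(\delta\eta\gamma))/\epsilon^3) = O(\log(1/(\delta\eta\gamma))/\epsilon^3)$, as claimed.

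The only real subtlety, and the thing to be careful about, is the conditioning step: Lemma~\ref{lem:rejsamp} presumes a fixed density, whereas here $p$ is (data-dependent and) correlated with the very phase-one sample used to decide whether the loop is entered. The clean way around this is the implication phrased above --- ``if $p < \epsilon/8$ then with probability $\geq 1 - \delta_{\mathtt{err}}/2$ the loop is not entered'' --- so that on the complementary good event one may simply treat $p \geq \epsilon/8$ as a deterministic fact and apply the monotone form of Lemma~\ref{lem:rejsamp} with the true (now lower-bounded) density. Everything else is routine substitution of the parameter setting $\delta_{\mathtt{err}} = \delta\eta\gamma^2/1536$ and simplification of logarithms.
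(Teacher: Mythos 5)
Your proposal is correct and follows essentially the same route as the paper: bound the fixed phase-one cost, use the phase-one Chernoff bound to conclude $\Pr_{(x,y)\sim D}[x \in \Xrisky_G] \geq \epsilon/8$ whenever the second loop is entered, and then bound the rejection-sampling loop (the paper does this via a direct Chernoff--Hoeffding application rather than by citing Lemma~\ref{lem:rejsamp}, but the computation and the resulting $64\log(1/\delta_{\mathtt{err}})/\epsilon^2 + 128\log(2/\delta_{\mathtt{err}})/\epsilon^3$ bound are identical). Your explicit handling of the correlation between the data-dependent density and the phase-one sample is a point the paper passes over silently, but it does not change the argument.
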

\begin{proof}
	$\overconf$ draws samples to estimate two population statistics: the probability that $x \in \Xrisky_t$ and, if that estimate exceeds $\epsilon/4$, the error of $G_t$ on examples such that $x \in \Xrisky_t$. 
	
	To estimate $\Pr_{(x,y) \sim D}[x \in \Xrisky_t]$ within error $\epsilon/8$ with all but probability $\delta_{\mathtt{err}}/2$, it draws $32\log(2/\delta_{\mathtt{err}})/\epsilon^2$ examples. 
	Then to estimate $\E_{(x,y)\sim D}[|y-\sgn(G_t(x))| \big\vert x \in \Xrisky_t]$ to within error $\epsilon/4$ with failure probability $\delta_{\mathtt{err}}/2$, it uses a sample of size $8\log(2/\delta_{\mathtt{err}})/\epsilon^2$, but requires that all these examples satisfy $x \in \Xrisky_t$. As we know $\Pr_{(x,y) \sim D}[x \in \Xrisky_t] \geq \epsilon/8$ with all but probability $\delta_{\mathtt{err}}/2$, another use of the Chernoff-Hoeffding inequality allows us to upper-bound by $2\delta_{\mathtt{err}}$ the probability that $\overconf$ draws more than 
	\begin{align*}
	m 
	& = \frac{64\log(1/\delta_{\mathtt{err}})}{\epsilon^2} + \frac{128\log(2/\delta_{\mathtt{err}})}{\epsilon^3}  \\
	& \in O\left(\frac{\log(1/\delta_{\mathtt{err}})}{\epsilon^3}\right) \\
	& \in O\left( \frac{\log(1/\delta\eta\gamma)}{\epsilon^3} \right)
	\end{align*}
	examples to estimate the error.
	
	Therefore with all but probability $2\delta_{\mathtt{err}} = \delta\eta\gamma^2/768$, $\overconf$ terminates having drawn no more than 
 	$$m \in O\left(\frac{\log(1/(\delta\eta\gamma)}{\epsilon^3}\right)$$
	examples. 
\end{proof}

\begin{lemma}[Sample complexity of one round]\label{lem:round-samp-complexity}
	With all but probability $5\delta\eta\gamma^2/1536$, one round of boosting with $\wkl$ draws no more than 
	$$m \in O\left(\frac{ \log(1/(\delta\eta\gamma)}{\min\{\epsilon, \eta\}^2} + \frac{\log(1/(\delta\eta\gamma)}{\epsilon^3} +  \frac{\log(1/(\delta\eta\gamma))}{\kappa ^2} + \frac{m_{\wkl}\log(1/(\delta\eta\gamma))}{\kappa } + \frac{\log(1/(\delta\eta\gamma))}{\gamma^2\kappa} \right)$$ 
	examples from $\exor^{\mathrm{Mas}}(f, D_x, \eta(x))$.
\end{lemma}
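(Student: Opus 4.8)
The plan is to treat one round of $\boost$ (Algorithm~\ref{alg:boost}) as a sum of a few separate sampling activities, bound each one using a lemma already in hand, and combine the failure probabilities by a union bound. The first step is to check the pseudocode and observe that, within a single iteration of the while-loop, the example oracle $\exor^{\mathrm{Mas}}(f, D_x, \eta(x))$ is touched only (i) to build the sample(s) fed to $\wkl$ --- this includes the repetitions prescribed by Lemma~\ref{lem:wkl-rep} and the additional held-out sample used to select the best returned hypothesis, all of which are already accounted for inside $\samp$ and the surrounding wrapper; (ii) inside $\overconf$; and (iii) inside $\estdens$. Nothing else in the round draws from the oracle, so these three account for the whole per-round cost.

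Next I would substitute the known bounds. Activity (i) is exactly what Lemma~\ref{lem:samp-wkl} controls: with all but probability $3\delta_{\wkl}$ it uses $O\big(\log(1/(\delta\eta\gamma))/\kappa^2 + m_{\wkl}\log(1/(\delta\eta\gamma))/\kappa + \log(1/(\delta\eta\gamma))/(\gamma^2\kappa)\big)$ examples. Activity (ii) is Lemma~\ref{lem:overconf-complexity}: with all but probability $2\delta_{\mathtt{err}} = \delta\eta\gamma^2/768$, $\overconf$ uses $O(\log(1/(\delta\eta\gamma))/\epsilon^3)$ examples. Activity (iii) has no companion lemma, so I would read its cost straight off Routine~\ref{alg:est-density}: it draws a deterministic $\log(1/\delta_{\mathtt{dens}})/(2\beta^2)$ examples with $\beta = \min\{\epsilon/2,\eta/4\}$ and $\delta_{\mathtt{dens}} = \delta\eta\gamma^2/1024$, which simplifies to $O(\log(1/(\delta\eta\gamma))/\min\{\epsilon,\eta\}^2)$ and carries no failure event. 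Summing the three bounds yields the claimed bound on $m$.

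For the probability bound I would union bound over the only two bad events --- the failure in Lemma~\ref{lem:samp-wkl} and the failure in Lemma~\ref{lem:overconf-complexity}, since $\estdens$ never fails in this sense --- which gives total failure probability at most $3\delta_{\wkl} + 2\delta_{\mathtt{err}}$, and then substitute $\delta_{\wkl} = \delta_{\mathtt{err}} = \delta\eta\gamma^2/1536$ to get exactly $5\delta\eta\gamma^2/1536$. There is no real obstacle here; the lemma is pure bookkeeping. The only points that demand a little care are not double-counting $\samp$ (its cost is already folded into Lemma~\ref{lem:samp-wkl}), handling $\estdens$ by hand since it lacks its own lemma, and verifying that the two genuine failure probabilities sum to \emph{exactly} $5\delta\eta\gamma^2/1536$ rather than merely something of that order.
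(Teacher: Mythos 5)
Your proposal is correct and follows essentially the same route as the paper's proof: decompose the round's oracle usage into the weak-learner sampling (Lemma~\ref{lem:samp-wkl}), $\overconf$ (Lemma~\ref{lem:overconf-complexity}), and the deterministic $\estdens$ draw, then union bound the two failure events to get $3\delta_{\wkl} + 2\delta_{\mathtt{err}} = 5\delta\eta\gamma^2/1536$. The bookkeeping, including the observation that $\estdens$ contributes no failure probability, matches the paper exactly.
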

\begin{proof}
	In a single round of boosting, at most one call is made to $\estdens$ and $\esterr$ routines, and one weak hypothesis is chosen; no calls to the example oracle $\exor^{\mathrm{Mas}}(f, D_x, \eta(x))$ are otherwise made. The $\estdens$ procedure draws exactly $$\frac{\log(1/\delta_{\mathtt{dens}})}{2\min\{\epsilon/2, \eta/4\}^2} \in O\left(\frac{ \log(1/(\delta\eta\gamma)}{\min\{\epsilon, \eta\}^2}\right)$$
	examples. Lemma~\ref{lem:overconf-complexity} shows that, with all but probability $\delta\eta\gamma^2/768$, the $\overconf$ routine draws no more than 
	$$m  \in O\left(\frac{\log(1/(\delta\eta\gamma)}{\epsilon^3}\right)$$
	examples. Lemma~\ref{lem:samp-wkl} shows that, with all but probability $\delta\eta\gamma^2/512,$ at most 
	$$O\left( \frac{\log(1/(\delta\eta\gamma))}{\kappa ^2} + \frac{m_{\wkl}\log(1/(\delta\eta\gamma))}{\kappa } + \frac{\log(1/(\delta\eta\gamma))}{\gamma^2\kappa}\right)$$
	 examples are drawn to choose a weak hypothesis. 
	So with all but probability $\delta\eta\gamma^2(\tfrac{1}{768} + \tfrac{1}{512}) = 5\delta\eta\gamma^2/1536$, a single round draws no more than 
	\begin{align*}
	m 
	&\in O\left(\frac{ \log(1/(\delta\eta\gamma)}{\min\{\epsilon, \eta\}^2} + \frac{\log(1/(\delta\eta\gamma)}{\epsilon^3} +  \frac{\log(1/(\delta\eta\gamma))}{\kappa ^2} + \frac{m_{\wkl}\log(1/(\delta\eta\gamma))}{\kappa } + \frac{\log(1/(\delta\eta\gamma))}{\gamma^2\kappa} \right) 
	\end{align*}
	examples.  
\end{proof}


\subsection{Boosting Theorem}\label{sec:boost-theorem}
We can now put together the Lemmas of Section~\ref{sec:convergence}, Section~\ref{sec:err-bounds}, and Section~\ref{sec:samp-complexity} to prove our main result.

\begin{restatable}[Boosting Theorem]{thm}{boosting}\label{thm:boosting}
	Let $\wkl$ be an $(\alpha, \gamma)$-weak learner requiring a sample of size $m_{\wkl}$.
	Then for any $\delta \in (0,1/2]$, any Massart distribution $D$ with noise rate $\eta < 1/2$, and any $\epsilon \geq \frac{8\eta\alpha}{1 - 2\alpha}$, taking $\lambda = \gamma/8$ and $\kappa = \eta$, $\boost^{\wkl}(\lambda, \kappa, \eta, \epsilon, \delta, \gamma, \alpha, m_{\wkl})$ will, with probability $1 - \delta$,
	\begin{itemize}
		\item run for $T \in O\left(1/(\eta\gamma^2)\right)$ rounds
		\item output a hypothesis $\Hyp$ such that $\lerr(\Hyp) \leq \eta + \epsilon$ and $\ferr(\Hyp) \leq \frac{\eta + \epsilon}{1 - \eta}$
		\item make no more than $$	m \in O\left(\frac{\log(1/(\delta\eta\gamma))}{\eta\gamma^2\epsilon^3} +  \frac{\log(1/(\delta\eta\gamma))}{\eta^3\gamma^2} + \frac{m_{\wkl}\log(1/(\delta\eta\gamma))}{\eta^2\gamma^2 } + \frac{\log(1/(\delta\eta\gamma))}{\eta^2\gamma^4} \right)$$	calls to $\exor^{\mathrm{Mas}}(f, D_x, \eta(x))$
		\item run in time 
		$$ O\left(\frac{\log(1/(\delta\eta\gamma))}{\eta^2\gamma^4\epsilon^3} +  \frac{\log(1/(\delta\eta\gamma))}{\eta^4\gamma^4} + \frac{m_{\wkl}\log(1/(\delta\eta\gamma))}{\eta^3\gamma^4 } + \frac{\log(1/(\delta\eta\gamma))}{\eta^3\gamma^6} \right),$$
		neglecting the runtime of the weak learner.
	\end{itemize}
\end{restatable}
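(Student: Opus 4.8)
The plan is to assemble the lemmas of Sections~\ref{sec:convergence}--\ref{sec:samp-complexity} under the parameter choices $\lambda = \gamma/8$ and $\kappa = \eta$, checking along the way that the hypothesis $\epsilon \geq 8\eta\alpha/(1-2\alpha)$ is exactly the condition required to invoke the potential-drop lemma (recall $c = 4\eta\alpha/(1-2\alpha)$, so this says $\epsilon \ge 2c$). First I would handle round complexity: Lemma~\ref{lem:termination}, applied with $\kappa=\eta$ (so $\kappa \ge \eta$ holds with equality) and $\lambda = \gamma/8$, shows $\boost$ halts within $T \le 128/(\eta\gamma^2) = O(1/(\eta\gamma^2))$ rounds with all but probability $\delta/3$, and that, conditioned on halting at round $t$, $d(\mu_t) \le \kappa + \epsilon/2$ with all but probability $\delta/8$.

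The error guarantees are then immediate: Lemma~\ref{lem:label-err} gives $\lerr(\Hyp) \le \kappa + \epsilon = \eta + \epsilon$ with all but probability $\delta/4$ (this event already subsumes the $\delta/8$ density event and the last-round $\overconf$ failure event), and Lemma~\ref{lem:target-err}, whose proof is deterministic given the label-error bound, yields $\ferr(\Hyp) \le (\kappa+\epsilon)/(1-\eta) = (\eta+\epsilon)/(1-\eta)$ on the same event.

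For sample complexity, I would take the per-round bound of Lemma~\ref{lem:round-samp-complexity}, substitute $\kappa = \eta$, and multiply by $T = O(1/(\eta\gamma^2))$; using $\epsilon \le 1$ to absorb the $1/\min\{\epsilon,\eta\}^2$ term into the $1/\epsilon^3$ and $1/\eta^3$ terms, this collapses to
$$m \in O\left(\frac{\log(1/(\delta\eta\gamma))}{\eta\gamma^2\epsilon^3} + \frac{\log(1/(\delta\eta\gamma))}{\eta^3\gamma^2} + \frac{m_{\wkl}\log(1/(\delta\eta\gamma))}{\eta^2\gamma^2} + \frac{\log(1/(\delta\eta\gamma))}{\eta^2\gamma^4}\right),$$
as claimed. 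The running-time bound follows from the observation that after $T$ rounds $G_t$ is a combination of $O(T)$ weak hypotheses (plus correction terms), so each example touched by $\samp$, $\estdens$, or $\overconf$, and each evaluation of $\mu_t$ or $\sgn(G_t)$, costs $O(T) = O(1/(\eta\gamma^2))$ elementary operations (neglecting the internal runtime of $\wkl$ and charging a single weak-hypothesis evaluation as unit cost); multiplying the sample bound by this factor gives the stated runtime.

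Finally the union bound: the round bound costs $\delta/3$, the combined error event costs $\delta/4$ (reusing rather than re-paying for the termination density event), and the per-round sampling failures, each at most $5\delta\eta\gamma^2/1536$ by Lemma~\ref{lem:round-samp-complexity}, sum over the $T \le 128/(\eta\gamma^2)$ rounds to at most $5\delta/12$; these total at most $\delta$, so all conclusions hold simultaneously with probability $\ge 1-\delta$. The only place real care is needed is this bookkeeping: the subroutine failure parameters $\delta_{\mathtt{err}},\delta_{\mathtt{dens}},\delta_{\wkl},\delta_{\mathtt{samp}}$ were chosen as scalings of $\delta\eta\gamma^2$ precisely so that, after multiplication by the $O(1/(\eta\gamma^2))$ rounds, everything sums to $\delta$ without double-counting events shared across lemmas; and the runtime claim, not covered by any earlier lemma, must separately account for the $O(T)$ cost of evaluating $G_t$.
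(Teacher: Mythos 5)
Your proposal is correct and follows essentially the same route as the paper's proof: it invokes Lemma~\ref{lem:termination} for the round bound, Lemmas~\ref{lem:label-err} and~\ref{lem:target-err} for the error guarantees, Lemma~\ref{lem:round-samp-complexity} multiplied by $T$ for sample complexity, the $O(T)$ cost of evaluating $G_t$ for the runtime, and the same $\delta/3 + \delta/4 + 5\delta/12 = \delta$ union bound. The only addition beyond the paper's write-up is your explicit note that the $1/\min\{\epsilon,\eta\}^2$ term is absorbed into the $1/\epsilon^3$ and $1/\eta^3$ terms, which the paper leaves implicit.
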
 
	
\begin{proof}
		Lemma~\ref{lem:termination} shows that $\boost$ terminates within $T \in O\left(1/(\eta\gamma^2)\right)$ rounds, except with probability $\delta/3$. From Lemmas~\ref{lem:label-err} and ~\ref{lem:target-err}, we have that with all but probability $\delta/4$, $\lerr(\Hyp) \leq \kappa + \epsilon$ and $\ferr(\Hyp) \leq \frac{\kappa + \epsilon}{1 - \eta}$, so taking $\kappa = \eta$ gives
		\[\lerr(\Hyp) \leq \eta + \epsilon \]
		and
		\[\ferr(\Hyp) \leq \frac{\kappa + \epsilon}{1 - \eta}\] 
		
		To bound sample complexity, we recall Lemma~\ref{lem:round-samp-complexity} tells us that with all but probability 	 $5\delta\eta\gamma^2/1536$, one round of boosting with $\wkl$ draws no more than 
		$$m \in O\left(\frac{ \log(1/(\delta\eta\gamma))}{\min\{\epsilon, \eta\}^2} + \frac{\log(1/(\delta\eta\gamma)}{\epsilon^3} +  \frac{\log(1/(\delta\eta\gamma))}{\kappa ^2} + \frac{m_{\wkl}\log(1/(\delta\eta\gamma))}{\kappa } + \frac{\log(1/(\delta\eta\gamma))}{\gamma^2\kappa} \right)$$ 
		examples.
		We have taken $\kappa = \eta$, so union bounding the error probabilities over all $T \leq 128/\eta\gamma^2$ rounds of boosting gives us a sample bound of 
		\begin{align*}
		m \in O\left(\frac{\log(1/(\delta\eta\gamma))}{\eta\gamma^2\epsilon^3} +  \frac{\log(1/(\delta\eta\gamma))}{\eta^3\gamma^2} + \frac{m_{\wkl}\log(1/(\delta\eta\gamma))}{\eta^2\gamma^2 } + \frac{\log(1/(\delta\eta\gamma))}{\eta^2\gamma^4} \right)
		\end{align*}
		exceeded with probability no more than 
		$$\frac{128}{\eta\gamma^2}\cdot\frac{5\delta\eta\gamma^2}{1536} = \frac{5\delta}{12}.$$

		To prove the bound on overall runtime, we observe that the runtime of a single round of $\boost$, neglecting calls to the weak learner, is linear in the runtime of subroutines $\overconf$ and $\estdens$, and quasilinear in the runtime of $\samp$ (from repetition of $\wkl$). The runtime of each of these subroutines is dominated by computing $G_t(x)$ for each example drawn from $\exor^{\mathrm{Mas}}(f, D_x, \eta(x))$, either to decide membership of $x$ in $\Xrisky_t$ or to compute $\mu_t(x,y)$. The cost of evaluating $G_t$ is linear in $t$, and so from our round and sample complexity bounds, we have the total runtime over all $T \in \left(1/\eta\gamma^2\right)$ rounds $\boost$ is 
		\begin{align*}
		&O\left(T\left(\frac{\log(1/(\delta\eta\gamma))}{\eta\gamma^2\epsilon^3} +  \frac{\log(1/(\delta\eta\gamma))}{\eta^3\gamma^2} + \frac{m_{\wkl}\log(1/(\delta\eta\gamma))}{\eta^2\gamma^2 } + \frac{\log(1/(\delta\eta\gamma))}{\eta^2\gamma^4} \right)\right) \\
		& \in O\left(\frac{\log(1/(\delta\eta\gamma))}{\eta^2\gamma^4\epsilon^3} +  \frac{\log(1/(\delta\eta\gamma))}{\eta^4\gamma^4} + \frac{m_{\wkl}\log(1/(\delta\eta\gamma))}{\eta^3\gamma^4 } + \frac{\log(1/(\delta\eta\gamma))}{\eta^3\gamma^6}\right)
		\end{align*}

		Finally, we observe that the total probability of failure to achieve all of the claimed bounds is no more than $\frac{\delta}{3} + \frac{\delta}{4} + \frac{5\delta}{12} = \delta$, completing the proof.
\end{proof}

\subsection{Final Hypothesis $\Hyp$}\label{ssec:hypothesis}

This subsection contains some explanation of the structure of the final hypothesis $\Hyp$ output by our algorithm. We show that these hypotheses can be both efficiently represented and evaluated.  

$\boost$ maintains a function $G : \data \rightarrow \{\pm1\}$, initialized to the zero function $G_0(x) = 0$.
When $\boost$ terminates at round $t$, it outputs the classifier $\sgn(G_t)$.
$G_t$ can be computed from the threshold parameter $s$ and a length-$t$ sequence of pairs $((h_1, b_1), \dots, (h_t, b_t)) \in (\mathcal{\Hyp} \times \{0,1\})^t$, where $h_i$ is simply the weak learner hypothesis from round $i$, and $b_i = 1$ if $\overconf$ returned true at round $i$.   
$G_t(x)$ can then be efficiently computed by Routine~\ref{alg:compg}.

\begin{algorithm}[H]
	\floatname{algorithm}{Routine}
	\caption{$\mathtt{ComputeG}(x, s, (h_1, b_1), \dots, (h_t, b_t))$}\label{alg:compg}
	\begin{algorithmic}
		\STATE $\sigma = 0$ \\
		\FOR{$i \in \{1, \dots, t\}$}
		\IF{$|\sigma| < s$}
		\STATE $\sigma \gets \sigma + \lambda h_i(x)$
		\ELSE
		\IF{$b_i = 1$}
		\STATE $\sigma \gets \sigma - \lambda$
		\ENDIF
		\ENDIF
		\ENDFOR
		\RETURN $\sigma$
	\end{algorithmic}
\end{algorithm} 

Lemma~\ref{lem:termination} says we may assume $T \in \poly(1/\eta, 1/\gamma)$, so long as the weak learner's hypotheses can be efficiently represented and evaluated, $G_T$ can be as well, and of course $\Hyp = \sgn(G_T)$.

\section{Improved Round Complexity Analysis}\label{sec:optimizations}

In this section we revisit the round complexity of $\boost$. We show that a more careful use of the lower-bound on progress against our potential function (Lemma~\ref{lem:potential}) proves convergence in $O\left(\frac{\log^2(1/\eta)}{\gamma^2}\right)$ rounds, saving nearly a factor $\eta^{-1}$ in both time and sample complexity. 

Recall that Lemma~\ref{lem:potential} shows that in each round of $\boost$ we have 
	\[\Phi(t) - \Phi(t+1) \geq \frac{\gamma^2}{8}\left(d(\mu_t) - \frac{\eta}{2}\right)\] 
	for potential function 
	\[\Phi(t) = \E_{(x,y) \sim D} [\phi_t(x,y)] = \E_{(x,y) \sim D} \int_{yG_t(x)}^{\infty} M(z) dz. \]
For simplicity, Lemma~\ref{lem:termination} uses the fact that the algorithm terminates once it estimates $d(\mu) \leq \kappa$ to approximately lower-bound $d(\mu_t)$ by $\kappa$. Since we take $\kappa \geq \eta$, this lower-bounds the potential drop in each round by $O(\eta\gamma^2)$. 
However, this lower bound is loose at the beginning of the algorithm, when $d(\mu_0) = 1$. 
To use this observation to obtain a tighter analysis, we first lower-bound the density of the measure $\mu_t$ by the potential function $\Phi(t)$.
\begin{lemma}\label{lem:density-v-potential}
	For every round $t$, with all but probability $\delta_{\mathtt{err}} = \delta\eta\gamma^2/1536$ , $\frac{\Phi_t}{s + \lambda + 1} - 2(\eta + \epsilon) \leq d(\mu_t)$.
\end{lemma}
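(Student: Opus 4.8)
The plan is to establish the equivalent bound $\Phi_t \le (s+\lambda+1)\bigl(d(\mu_t)+2(\eta+\epsilon)\bigr)$ by comparing the integrand $\phi_t(x,y)$ against the measure $\mu_t(x,y)$ pointwise, after splitting the domain into $\Xsafe_t$, the part of $\Xrisky_t$ on which $\sgn(G_t)$ is correct, and the part of $\Xrisky_t$ on which it is wrong. On $\Xsafe_t$ I would show $\phi_t(x,y)\le (s+\lambda+1)\mu_t(x,y)$: if $yG_t(x)\ge 0$ then $|G_t(x)|<s$ forces $\mu_t(x,y)=e^{-yG_t(x)}$, which is exactly $\phi_t(x,y)=\int_{yG_t(x)}^{\infty}e^{-z}dz$; if $yG_t(x)<0$ then $\mu_t(x,y)=1$ while $\phi_t(x,y)=\int_{yG_t(x)}^{0}1\,dz+\int_{0}^{\infty}e^{-z}dz=1+|G_t(x)|<1+s$. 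Since $\mu_t\equiv 0$ on $\Xrisky_t$ (Definition~\ref{def:measure-fn}), summing gives $\E_{(x,y)\sim D}[\phi_t(x,y)\Ind[x\in\Xsafe_t]]\le (s+\lambda+1)\,\E_{(x,y)\sim D}[\mu_t(x,y)]=(s+\lambda+1)d(\mu_t)$.

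For the risky part I would use the invariant $|G_t(x)|<s+\lambda$ of Lemma~\ref{lem:invariant}. If $x\in\Xrisky_t$ and $\sgn(G_t)$ classifies $(x,y)$ correctly, then $yG_t(x)=|G_t(x)|\ge s$, so $\phi_t(x,y)=e^{-yG_t(x)}\le e^{-s}=\frac{\eta+c}{1-\eta}$, which by $\eta<1/2$ and $c\le\epsilon/2$ (the latter from the hypothesis $\epsilon\ge\frac{8\eta\alpha}{1-2\alpha}=2c$) is at most $2(\eta+c)\le 2(\eta+\epsilon)$. If instead $(x,y)$ is misclassified, then $\phi_t(x,y)=1+|G_t(x)|<s+\lambda+1$.

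To control the total mass of the risky part I would condition on success of the call $\overconf(G_t,s,\delta_{\mathtt{err}},\epsilon)$ at round $t$, which happens with probability at least $1-\delta_{\mathtt{err}}$. Exactly as in the proof of Lemma~\ref{lem:label-err} (the argument there concerns the $\overconf$ call at round $t$, not termination): if $\overconf$ returned true, the ensuing correction step together with Lemma~\ref{lem:invariant} makes $\Xrisky_t=\emptyset$, killing the risky contribution entirely; if $\overconf$ returned false, then either $\Pr_{(x,y)\sim D}[x\in\Xrisky_t]\le \epsilon/2$ or $\Pr_{(x,y)\sim D}[\sgn(G_t(x))\ne y\mid x\in\Xrisky_t]\le \eta+\epsilon$, and in either case $\Pr_{(x,y)\sim D}[x\in\Xrisky_t \text{ and } \sgn(G_t(x))\ne y]\le \eta+\epsilon$. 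Feeding this into the per-example bounds gives $\E_{(x,y)\sim D}[\phi_t(x,y)\Ind[x\in\Xrisky_t]]\le 2(\eta+\epsilon)+(s+\lambda+1)(\eta+\epsilon)$, and adding the safe contribution yields $\Phi_t\le (s+\lambda+1)d(\mu_t)+2(s+\lambda+1)(\eta+\epsilon)$, which rearranges to the claim.

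The main obstacle is the risky-but-correctly-classified examples: each contributes $e^{-s}$ to $\Phi_t$ but nothing to $d(\mu_t)$, and $\overconf$ bounds only the conditional error on $\Xrisky_t$, not the mass $\Pr[x\in\Xrisky_t]$, so this term cannot be made small by shrinking $\Xrisky_t$. The saving grace is that $e^{-s}=\frac{\eta+c}{1-\eta}$ is itself $O(\eta+\epsilon)$ because of how $s$ was chosen, so it gets absorbed into the additive $2(\eta+\epsilon)$ slack; verifying that the absolute constant in that slack (after multiplication by $s+\lambda+1$) really is as small as stated — i.e., that $2(\eta+\epsilon)+(s+\lambda+1)(\eta+\epsilon)\le 2(s+\lambda+1)(\eta+\epsilon)$ in the relevant parameter range — is the one step where I would be most careful with the arithmetic.
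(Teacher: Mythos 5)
Your decomposition and pointwise bounds are exactly those of the paper's proof, and the step you flagged is indeed where the argument fails. The inequality $2(\eta+\epsilon)+(s+\lambda+1)(\eta+\epsilon)\le 2(s+\lambda+1)(\eta+\epsilon)$ is equivalent to $s+\lambda\ge 1$, and since $s=\log\frac{1-\eta}{\eta+c}$ this is false whenever $\eta$ is close to $1/2$ (e.g.\ $\eta=0.45$ gives $s\approx 0.2$ while $\lambda=\gamma/8\le 1/16$). As written, your argument only establishes $\frac{\Phi_t}{s+\lambda+1}-3(\eta+\epsilon)\le d(\mu_t)$. That weaker constant would still feed through Theorem~\ref{thm:opt-boosting} with only constant-factor changes, but it does not prove the lemma as stated.

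The missing idea is that the two risky sub-losses must be coupled rather than maximized independently. You charge $e^{-s}\cdot\Pr[x\in\Xrisky_t,\ \sgn(G_t(x))=y]\le e^{-s}=\frac{\eta+c}{1-\eta}$ for the correctly classified risky mass and, separately, $(s+\lambda+1)(\eta+\epsilon)$ for the misclassified risky mass; the first term alone can approach $2\eta$, which already exhausts the allowed slack after division by $s+\lambda+1$. Instead set $q=\Pr[\sgn(G_t(x))\ne y\mid x\in\Xrisky_t]\le\eta+\epsilon$ and bound the conditional loss on $\Xrisky_t$ by $q\,(s+\lambda+1)+(1-q)\,e^{-s}$, which is increasing in $q$ because $s+\lambda+1>1>e^{-s}$, hence at most $(\eta+\epsilon)(s+\lambda+1)+(1-\eta-\epsilon)\tfrac{\eta+c}{1-\eta}\le(\eta+\epsilon)(s+\lambda+1)+(\eta+c)\le 2(s+\lambda+1)(\eta+\epsilon)$, using $c\le\epsilon/2$. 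The factor $1-\eta-\epsilon\le 1-\eta$ cancels the $\frac{1}{1-\eta}$ hiding in $e^{-s}$; that cancellation is precisely what the decoupled bound $\Pr[x\in\Xrisky_t,\ \sgn(G_t(x))=y]\le 1$ discards, and it is how the paper closes the argument. (The branch where $\overconf$ returns false because $\Pr[x\in\Xrisky_t]$ is estimated to be small, so that $q$ is not controlled, must be handled separately: there the entire risky contribution is at most $(s+\lambda+1)\epsilon/2$.) With this modification the rest of your proof goes through.
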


\begin{proof}
	To show $\frac{\Phi_t}{s + \lambda + 1} - 2(\eta + \epsilon) \leq d(\mu_t)$, we independently consider the contribution to the density from examples $(x,y) \in \Xsafe_t$ and $(x,y) \in \Xrisky_t$ as follows,
	\begin{align*}
	d(\mu_t) 
	&= \E_{(x,y)\sim D}[\mu_t(x,y)] \\
	&= \E_{(x,y)\sim D}[\mu_t(x,y) \big\vert x \in \Xsafe_t]\cdot \Pr_{(x,y) \sim D}[x \in \Xsafe_t] + \E_{(x,y)\sim D}[\mu_t(x,y) \big\vert x \in \Xrisky_t]\cdot \Pr_{(x,y) \sim D}[x \in \Xrisky_t].
	\end{align*}
	If $(x,y) \in \Xsafe_t$, one of two cases holds:
	\begin{enumerate}
	\item $ -s < yG_t(x) \leq 0$, so $\mu_t(x,y) = 1$ and $\phi_t(x,y) = -yG_t(x) + 1 \leq s + 1$ 
	\item $ 0 < yG_t(x) < s$, so $\mu_t(x,y) = \exp(-yG_t(x))$ and $\phi_t(x,y) = \exp(-yG_t(x))$
	\end{enumerate} 
	both of which imply $\mu_t(x,y) \geq \phi_t(x,y)/(s+1)$, and so 
	$$\E_{(x,y)\sim D}[\mu_t(x,y) \big\vert x \in \Xsafe_t] \geq \E_{(x,y)\sim D}\left[\frac{\phi_t(x,y)}{s+1} \big\vert x \in \Xsafe_t\right] \geq  \E_{(x,y)\sim D}\left[\frac{\phi_t(x,y)}{s+\lambda + 1} \big\vert x \in \Xsafe_t\right] - 2(\eta + \epsilon).$$
	If $(x,y) \in \Xrisky_t$, then we again have two cases to consider:	
	\begin{enumerate}
		\item $yG_t(x) \leq -s$, so $\mu_t(x,y) = 0$ and $\phi_{t}(x,y) = -yG_t(x) + 1 \leq s + \lambda + 1$
		\item $ yG_t(x) \geq s$, so $\mu_t(x,y) = 0$ and $\phi_t(x,y) = \exp(-yG_t(x)) \leq (\eta + c)/(1 - \eta)$.
	\end{enumerate}
	We observe that examples $(x,y)$ falling into case 2 satisfy 
	$$\mu_t(x,y) \geq \phi_t(x,y) - (\eta +c )/(1 - \eta) \geq \phi_t(x,y) - \frac{\eta + c}{1 - \eta},$$ 
	and in case 1, $\mu_t(x,y) \geq \phi_t(x,y)/(s + \lambda + 1) - 1$,	so to prove our lower-bound on $d(\mu_t)$, we must upper-bound $\Pr_{(x,y) \sim D}[yG_t(x) \leq -s]$.
	By the definition of Algorithm~\ref{alg:boost}, with all but probability $\delta_{\mathtt{err}}$ over the coins of $\overconf$, at the end of each round $t$ either $\Pr_{x \sim D_x}[ x \in \Xrisky_t] \leq \epsilon/2$ or $\Pr_{(x,y)\sim D}[ yG_t(x) \leq -s \mid x \in \Xrisky_t] \leq \eta + \epsilon$. 
	
	If $\Pr_{x \sim D_x}[ x \in \Xrisky_t] \leq \epsilon/2$, then this gives us
	\begin{align*}
	d(\mu_t) 
	& \geq \left(1 - \frac{\epsilon}{2}\right)\E_{(x,y) \sim D} \left[\frac{ \phi_t(x,y)}{s+ \lambda + 1} \big\vert x \in \Xsafe\right ] + \frac{\epsilon}{2}\E_{(x,y) \sim D} \left[\frac{\phi_t(x,y)}{s + \lambda + 1} - 1 \big\vert yG_t(x) \leq -s \right] \\
	& \geq \E_{(x,y)\sim D}\left[ \frac{\phi_t(x,y)}{s + \lambda + 1}\right] - \frac{\epsilon}{2} \\
	& \geq \frac{\Phi(t)}{s + \lambda + 1} - 2(\eta + \epsilon),
	\end{align*}
	and so the stated bound holds.
	
	If $\Pr_{(x,y)\sim D}[ yG_t(x) \leq -s \mid x \in \Xrisky_t] \leq \eta + \epsilon$, we have
	\begin{align*}
	\E_{(x,y)\sim D}[\mu_t(x,y) \big\vert x \in \Xrisky_t] 
	& =  \E_{(x,y) \sim D} \left[\frac{\phi_t(x,y)}{s + \lambda + 1} - 1 \big\vert yG_t(x) \leq -s \right]\cdot \Pr_{(x,y) \sim D}[yG_t(x) \leq -s \big\vert x \in \Xrisky_t] \\
	& \quad \quad \quad + \E_{(x,y)\sim D} \left[\phi_t(x,y) - \frac{\eta+c}{1 - \eta} \big\vert yG_t(x) \geq s \right]\cdot \Pr_{(x,y) \sim D}[ yG_t(x) \geq s \big\vert x \in \Xrisky_t] \\ 
	& \geq (\eta + \epsilon)\E_{(x,y) \sim D} \left[\frac{\phi_t(x,y)}{s + \lambda + 1} - 1 \big\vert yG_t(x) \leq -s \right] \\
	& \quad \quad \quad + (1 - \eta - \epsilon)\E_{(x,y)\sim D} \left[\phi_t(x,y) - \frac{\eta+c}{1 - \eta} \big\vert yG_t(x) \geq s \right] \\
	& \geq \E_{(x,y) \sim D} \left[\frac{\phi_t(x,y)}{s + \lambda + 1}  \big\vert x \in \Xrisky_t \right] - \eta - \epsilon - (1 - \eta - \epsilon)(\frac{\eta + c}{1 - \eta}) \\
	& \geq \E_{(x,y) \sim D} \left[\frac{\phi_t(x,y)}{s + \lambda + 1}  \big\vert x \in \Xrisky_t \right] - 2(\eta + \epsilon), 
	\end{align*}
	in which case it again holds that 
	
	\begin{align*}
	d(\mu_t) 
	&= \E_{(x,y)\sim D}[\mu_t(x,y) \big\vert x \in \Xsafe_t]\cdot \Pr_{(x,y) \sim D}[x \in \Xsafe_t] + \E_{(x,y)\sim D}[\mu_t(x,y) \big\vert x \in \Xrisky_t]\cdot \Pr_{(x,y) \sim D}[x \in \Xrisky_t] \\
	& \geq \left(\E_{(x,y)\sim D}\left[\frac{\phi_t(x,y)}{s+\lambda + 1} \big\vert x \in \Xsafe_t\right] - 2(\eta + \epsilon)\right)\cdot \Pr_{(x,y) \sim D}[x \in \Xsafe_t] \\
	& \quad \quad \quad + \left(\E_{(x,y) \sim D} \left[\frac{\phi_t(x,y)}{s + \lambda + 1}  \big\vert x \in \Xrisky_t \right] - 2(\eta + \epsilon)\right)\cdot \Pr_{(x,y) \sim D}[x \in \Xrisky_t] \\
	& = \frac{\Phi(t)}{s+\lambda + 1} - 2(\eta + \epsilon).
	\end{align*}
\end{proof}

Now that we have a lower-bound on $d(\mu_t)$ in terms of $\Phi(t)$, we can show faster convergence and prove the following theorem.
\begin{restatable}[(Improved) Boosting Theorem]{thm}{opt-boosting}\label{thm:opt-boosting}
Let $\wkl$ be an $(\alpha, \gamma)$-weak learner requiring a sample of size $m_{\wkl}$.
Then for any $\delta \in (0,1/2]$, any Massart distribution $D$ with noise rate $\eta < 1/2$, and any $\epsilon \geq \frac{8\eta\alpha}{1 - 2\alpha}$, taking $\lambda = \gamma/8$ and $\kappa = \eta$, $\boost^{\wkl}(\lambda, \kappa, \eta, \epsilon, \delta, \gamma, \alpha, m_{\wkl})$ will, with probability $1 - \delta$,
\begin{itemize}
	\item run for $T \in O\left(\log^2(1/\eta)/\gamma^2\right)$ rounds
	\item output a hypothesis $H$ such that $\lerr(H) \leq \eta + \epsilon$ and $\ferr(H) \leq \frac{\eta + \epsilon}{1 - \eta}$
	\item make no more than 
	$$m \in O\left(\frac{\log^2(1/\eta)}{\gamma^2}\left(\frac{\log(1/(\delta\eta\gamma))}{\epsilon^3} +  \frac{\log(1/(\delta\eta\gamma))}{\eta^2} + \frac{m_{\wkl}\log(1/(\delta\eta\gamma))}{\eta } + \frac{\log(1/(\delta\eta\gamma))}{\eta\gamma^2} \right)\right)$$	
	calls to $\exor^{\mathrm{Mas}}(f, D_x, \eta(x))$,
	\item run in time 
	$$m \in O\left(\frac{\log^4(1/\eta)}{\gamma^4}\left(\frac{\log(1/(\delta\eta\gamma))}{\epsilon^3} +  \frac{\log(1/(\delta\eta\gamma))}{\eta^2} + \frac{m_{\wkl}\log(1/(\delta\eta\gamma))}{\eta } + \frac{\log(1/(\delta\eta\gamma))}{\eta\gamma^2} \right)\right),$$	
	neglecting the runtime of the weak learner.
\end{itemize}
	
\end{restatable}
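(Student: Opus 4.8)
The plan is to re-run the proof of Theorem~\ref{thm:boosting} essentially verbatim, changing only the round-count estimate: Lemma~\ref{lem:termination}'s bound of $O(1/(\eta\gamma^2))$ will be replaced by $T\in O(\log^2(1/\eta)/\gamma^2)$, obtained by feeding the new density lower bound (Lemma~\ref{lem:density-v-potential}) into the per-round potential drop (Lemma~\ref{lem:potential}). Everything downstream of the round count is untouched. The error guarantees $\lerr(\Hyp)\le\kappa+\epsilon$ and $\ferr(\Hyp)\le(\kappa+\epsilon)/(1-\eta)$ (Lemmas~\ref{lem:label-err} and~\ref{lem:target-err}) depend only on $\kappa,\epsilon,\delta$ and the termination event, not on the number of rounds; the per-round sample complexity is still Lemma~\ref{lem:round-samp-complexity}; and the per-round failure probabilities fixed in Algorithm~\ref{alg:boost} were already chosen small enough to survive a union bound over $128/(\eta\gamma^2)\gg T$ rounds, so they suffice a fortiori. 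Multiplying the per-round sample cost by $T$ yields the claimed sample bound, and multiplying once more by the $O(T)$ cost of evaluating $G_T$ (exactly as in Theorem~\ref{thm:boosting}) yields the claimed runtime, which is why the improvement shows up as the $\log^4(1/\eta)/\gamma^4$ prefactor.

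It remains to prove the new round bound, and this is the only substantive new argument. First I would set $L\eqdef s+\lambda+1$ and observe $L=O(\log(1/\eta))$: indeed $\lambda=\gamma/8\le 1$, and $s=\log\frac{1-\eta}{\eta+c}\le\log(1/\eta)$ since $\eta+c\ge\eta(1-\eta)$. Then I would chain Lemma~\ref{lem:potential} with Lemma~\ref{lem:density-v-potential} to get, in every round $t$ (up to the same per-round failure probability),
\[
\Phi(t)-\Phi(t+1)\;\ge\;\Omega(\gamma^2)\Bigl(d(\mu_t)-\tfrac{\eta}{2}\Bigr)\;\ge\;\frac{\Omega(\gamma^2)}{L}\,\Phi(t)\;-\;O\bigl(\gamma^2(\eta+\epsilon)\bigr),
\]
i.e.\ $\Phi(t+1)\le\bigl(1-\Omega(\gamma^2/L)\bigr)\Phi(t)+O(\gamma^2(\eta+\epsilon))$. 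Starting from $\Phi(0)=1$, this linear recursion contracts geometrically toward its fixed point $\Phi^\star=O(L(\eta+\epsilon))$, so after $T_1=O\bigl(\tfrac{L}{\gamma^2}\log\tfrac{1}{\eta+\epsilon}\bigr)=O(\log^2(1/\eta)/\gamma^2)$ rounds we reach $\Phi(t)=O(L(\eta+\epsilon))$.

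Once the geometric bound stalls, I would revert to the crude additive estimate already used in Lemma~\ref{lem:termination}: as long as $\boost$ has not terminated, $\estdens$ certifies (whp) $d(\mu_t)>\kappa-\beta\ge 3\eta/4>\eta/2$, so Lemma~\ref{lem:potential} alone gives a drop of $\Omega(\eta\gamma^2)$ per round; and since $d(\mu_t)\le\Phi(t)$ by Lemma~\ref{lem:dens-pot}, the algorithm must terminate once $\Phi(t)<\kappa-\beta$. Hence at most $T_2=O\bigl(L(\eta+\epsilon)/(\eta\gamma^2)\bigr)$ further rounds are needed, which is $O(\log^2(1/\eta)/\gamma^2)$ in the regime of interest $\epsilon=O(\eta\log(1/\eta))$ (the natural regime given $\epsilon\ge\tfrac{8\eta\alpha}{1-2\alpha}$ for $\alpha$ bounded away from $1/2$). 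Adding, $T=T_1+T_2\in O(\log^2(1/\eta)/\gamma^2)$, and substituting this $T$ into the bookkeeping above completes the proof.

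The main obstacle is exactly controlling the floor $\Phi^\star=O(L(\eta+\epsilon))$ of the geometric recursion: unlike the noiseless case, $\Phi$ cannot be driven to $0$, so one must argue that it can nonetheless be pushed below the termination threshold $\kappa-\beta$ in few rounds — which is where the hybrid geometric/additive split above, the bound $L=O(\log(1/\eta))$, and the inequality $d(\mu_t)\le\Phi(t)$ all enter. A secondary thing to verify carefully is that shrinking the round count does not require re-tuning any of the subroutine failure probabilities $\delta_{\mathtt{err}},\delta_{\mathtt{dens}},\delta_{\wkl}$, since they were already calibrated for the larger round count of Theorem~\ref{thm:boosting}.
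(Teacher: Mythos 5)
Your proof follows essentially the same route as the paper's: chain Lemma~\ref{lem:potential} with Lemma~\ref{lem:density-v-potential} to obtain the contraction $\Phi(t+1)\le\bigl(1-\Omega(\gamma^2/(s+1))\bigr)\Phi(t)+O(\gamma^2\eta)$, run it for $O\bigl(\log(1/\eta)(s+1)/\gamma^2\bigr)$ rounds to drive $\Phi$ down to $O(\eta\log^2(1/\eta))$, then finish with the additive $\Omega(\eta\gamma^2)$ per-round drop and the inequality $d(\mu_t)\le\Phi(t)$, with the error, sample, and runtime bookkeeping inherited verbatim from Theorem~\ref{thm:boosting}. The only divergence is that you carry the $2(\eta+\epsilon)$ term from Lemma~\ref{lem:density-v-potential} explicitly and hence state the side condition $\epsilon=O(\eta\log(1/\eta))$ for the tail phase, whereas the paper silently replaces $2(\eta+\epsilon)$ by $2\eta$ in its recursion --- so your version is, if anything, more careful about a regime restriction the paper glosses over.
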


\begin{proof}
	It follows from Lemma~\ref{lem:dens-pot}, Lemma~\ref{lem:potential}, and Lemma~\ref{lem:density-v-potential} that 
	\begin{align*}
	\Phi(t+1) 
	&\leq \Phi(t) - \frac{\gamma^2}{32}\left(d(\mu_{t}) - \frac{\eta}{2}\right) \\
	&\leq \Phi(t) - \frac{\gamma^2}{32}\left(\frac{\Phi(t)}{s + \lambda + 1} - 2\eta - \frac{\eta}{2}\right)\\
	&\leq \Phi(t)\left(1 - \frac{\gamma^2}{64(s + 1)}\right) + \frac{\eta\gamma^2}{8} &  (\text{ from } s > \lambda ).
	\end{align*}
	Unrolling the recursion, we have that 
	\begin{align*}
	\Phi(t) &\leq \left(1 - \frac{\gamma^2}{64(s+1)}\right)^t + \frac{t\eta\gamma^2}{8} \\
	& \leq e^{-\gamma^2t/(64(s+1))} + \frac{t\eta\gamma^2}{8},
	\end{align*}
	and so taking $t = 64\log(1/\eta)(s+1)/\gamma^2$ and Lemma~\ref{lem:dens-pot} gives 
	\begin{align*}
	d(\mu_t) 
	&\leq \Phi(t) \\
	&\leq \eta + 8\eta\log(1/\eta)(s+1) \\
	&\leq \eta + 8\eta\log(1/\eta)(\log(1/\eta) + 1) \\
	&\leq \eta + 24\eta\log^2(1/\eta)
	\end{align*}
where the last inequality follows from $2\log(1/\eta) > 1$ for all $\eta < 1/2$. As we have already shown a potential drop of at least $\frac{\gamma^2\eta}{32}$ at each step for which $d(\mu) \geq \eta$, running for an additional $768\log^2(1/\eta)/\gamma^2$ rounds suffices to guarantee $d(\mu) \leq \eta = \kappa$. 
This gives a total round complexity of 
$$T \in O\left(\frac{\log^2(1/\eta)}{\gamma^2}\right).$$ 

The stated error bounds are the same as those proved in Theorem~\ref{thm:boosting}, and the tighter sample complexity and runtime follow immediately from the improved round complexity.  
\end{proof}

\section{Lower Bound on Error for Massart Boosting}
\label{sec:lower-bound}

In this section, we show that no ``black-box'' generic  boosting algorithm for Massart noise can have significantly better error than that of
our algorithm, $\eta + \Theta(\alpha \eta)$.
While the error term essentially matches the error lower bound of $\eta$ for RCN boosters from \cite{KalaiServedio:03}, it is unclear from their result whether generalizing to Massart noise should imply a lower bound of $\opt$ or a lower bound of $\eta$, since RCN is the special case of Massart noise where $\eta = \opt$.  
We show that the lower bound generalizes to the worst-case noise $\eta$, so long as $\opt$ is not negligible in the input size.
Therefore, no Massart-noise tolerant boosting algorithm can actually take advantage of a distribution with small expected noise to achieve accuracy better than its worst-case noise.

We consider the case where the target function $f \in \C$ is highly biased towards $-1$ labels (w.l.o.g.) and there is a small fraction of examples $(x,-1)$ where it cannot be distinguished whether $f(x) = 1$ and $\eta(x) = 0$, or $f(x) = -1$ and $\eta(x) > 0$.
As described in Section~\ref{ssec:techniques}, if the booster does not reweight the distributions on which it queries the weak learner to emphasize examples labeled $1$, an adversarial weak learner can return the constant function $-1$ and have high correlation. 
At the same time, if it does reweight its distribution to emphasize positively labeled examples, it risks violating the Massart condition by assigning to some $x \in \data$ a probability of appearing with its noisy label $y=-f(x)$ that is greater than $1/2 - \alpha$. 

\begin{restatable}{theorem}{lowerboundsimplethm}
	\label{thm:lower-bound-simple}
	If one-way functions exist, then no black-box Massart noise-tolerant boosting algorithm achieves label error $\eta + o(\alpha \eta)$, even when $\opt \ll \eta$. 
\end{restatable}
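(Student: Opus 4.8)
The plan is to defeat any fixed polynomial-time black-box booster $\mathcal{A}$ by an adversary that chooses a concept class $\C$ built from a pseudorandom function (PRF) family, a valid $(\alpha,\gamma)$-Massart weak learner $W$ for $\C$, and a random instance from a family on which $\opt\ll\eta$. Assuming one-way functions, fix a PRF family $\{g_k\}_k$ and for each key $k$ use $g_k$ to split the domain pseudorandomly into three parts: a ``clean majority'' set $A_k$ of mass $\approx 1$ with $f_k\equiv -1$, $\eta_k\equiv 0$; a ``minority'' set $M_k$ of mass $\rho_M=\eta(1+c\alpha)$ (for a small constant $c>0$) with $f_k\equiv +1$, $\eta_k\equiv 0$; and a tiny ``saturated'' set $N_k$ of mass $\rho_N=1/\poly(n)$ with $f_k\equiv -1$, $\eta_k\equiv 1/2-\alpha$. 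Let $D_k=\mathtt{Mas}\{D_x,f_k,\eta_k\}$ with $D_x$ essentially uniform. Then $\opt=\E_x[\eta_k(x)]=\rho_N(1/2-\alpha)\ll\eta$, whereas the constant $-1$ hypothesis has $\lerr\approx\rho_M=\eta(1+c\alpha)$ on $D_k$; beating $\eta$ requires correctly labeling most of $M_k$ as $+1$, i.e., learning the pseudorandom location of $M_k$.

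The weak learner $W$ (allowed to be computationally unbounded) inspects its input distribution $D'$: it returns the constant $-1$ if $\Pr_{D'}[y=+1]\le 1/2-\gamma$, the constant $+1$ if $\Pr_{D'}[y=+1]\ge 1/2+\gamma$, and otherwise returns the best hypothesis in $\C$ against $D'$ when $D'$ is of the form $\mathtt{Mas}\{D'_x,g,\eta'\}$ for some $g\in\C$ and noise below $1/2-\alpha$ (such a hypothesis has advantage $>\alpha\ge\gamma$), and the constant $-1$ otherwise (when $W$ is under no obligation). This makes $W$ a valid $(\alpha,\gamma)$-Massart weak learner for $\C$ provided $\gamma\le\alpha$.

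The heart of the proof is to show that, over a random key $k$, with overwhelming probability every distribution $\mathcal{A}$ queries $W$ on has $\Pr[y=+1]\le 1/2-\gamma$ --- so $W$ always answers the constant $-1$ --- and, moreover, $\mathcal{A}$'s final hypothesis $h$ is statistically almost independent of the partition $(A_k,M_k,N_k)$ restricted to any fresh point. Both facts come from a hybrid argument: replacing $g_k$ with a truly random function changes the polynomial-time $\mathcal{A}$'s behavior negligibly (PRF security; all weak-learner replies so far being constants, $\mathcal{A}$ gains no extra handle), and against a random partition the polynomially many noisy labels $\mathcal{A}$ observes say nothing about the partition at unqueried points. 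Consequently, after averaging over the pseudorandom structure, any reweighting measure $\mu(x,y)$ used by $\mathcal{A}$ acts (up to negligible error) as a rejection-plus-relabeling channel applied to the label $y$ alone, and a short linear-programming calculation confines any such channel to the biased regime: relabeling too many $(x,-1)$ as $(x,+1)$ pushes the effective noise on $A_k$ (mass $\approx 1$) past $1/2-\alpha$, while rejecting or flipping too many $(x,-1)$ pushes the effective noise on $N_k$ past $1/2-\alpha$; in either case $D'$ stops being a valid Massart-$(<1/2-\alpha)$ distribution and $W$ answers $-1$ anyway, and when $D'$ stays valid one gets $\Pr_{D'}[y=+1]\le 1/2-\alpha\le 1/2-\gamma$.

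Since $W$ only ever returns constant hypotheses, $h$ is computed solely from $\exor^{\mathrm{Mas}}$-samples, so by the core lemma the quantity $q:=\Pr_{(x,y)\sim D_k}[h(x)=+1\mid x\in A_k]$ differs from $\Pr_{(x,y)\sim D_k}[h(x)=+1\mid x\in M_k]$ by at most a negligible term. Hence $\lerr(h)\ge q\,\Pr[x\in A_k]+(1-q)\rho_M-o(1)\ge q(1-\rho_M)+(1-q)\rho_M-o(1)$, which as a function of $q\in[0,1]$ is minimized at $q=0$, giving $\lerr(h)\ge\rho_M-o(1)=\eta(1+c\alpha)-o(1)>\eta+o(\alpha\eta)$ for all large enough security parameter, while $\opt=\rho_N(1/2-\alpha)\ll\eta$ --- a contradiction. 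The main obstacle is making the core lemma rigorous: formalizing that an efficient booster's reweighting measures \emph{and} output hypothesis are effectively independent of the secret partition despite a continual stream of labels correlated (through $f_k$) with $g_k$ --- essentially that noisy PAC-learning the PRF is infeasible in precisely the sense needed --- and dovetailing this with the exact black-box boosting model (the weak learner's promise and sample sizes) so that the reduction to an oblivious label channel and the LP-feasibility bound both lose only negligible slack.
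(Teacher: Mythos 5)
Your construction follows the same blueprint as the paper's (a pseudorandom, $\eta(1+\Theta(\alpha))$-biased target; a sparse set of noisy majority points indistinguishable from clean minority points; an adversarial weak learner that only ever reveals majority-vote information; a reduction to PRF security), but the proposal has genuine gaps at exactly the places where the paper does its real work. First, your ``core lemma'' --- that the booster's reweighting acts, up to negligible error, as a channel on the label $y$ alone --- is false as stated, not merely unproven. A booster can remember the $\poly(n)$ labeled examples it has drawn and build sampling procedures that treat those specific $x$'s arbitrarily; in particular it can concentrate its queried distribution on a few observed points, half of them positively labeled. Such a distribution is balanced, genuinely Massart (with probability $1-O(\rho)$ the observed positives really are minority points with $\eta(x)=0$), and completely outside your label-oblivious LP analysis. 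This breaks your claimed invariant that every queried distribution has $\Pr[y=+1]\le 1/2-\gamma$, and it is fatal for your weak learner: in the balanced-Massart case your $W$ returns ``the best hypothesis in $\C$ against $D'$,'' which for the natural choice $\C=\{f_s\}$ is $f_s$ itself --- the booster would then output $f_s$ and achieve error $\opt\ll\eta$. The paper avoids this by having $\wklr$ return only empirical majority-vote labels on the polynomially many heavy-hitters of $D^{\B}$ (revealing nothing the booster did not already know) and a constant elsewhere, and then proving the non-trivial statement only about the \emph{non-heavy-hitter} part of the queried distribution (Lemma~\ref{lem:wklr-adv-nhh}). Repairing your argument requires precisely this heavy-hitter decomposition plus a round-by-round reproducibility/hybrid argument (Lemmas~\ref{lem:heavy-hitters-identified}--\ref{lem:reg-heg-equiv}), which is the bulk of the proof. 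Relatedly, your $W$ is computationally unbounded and inspects $D'$ exactly; the PRF reduction needs the entire booster--weak-learner interaction to be simulated by a polynomial-time distinguisher, and asserting ``$W$ only ever outputs constants, so simulate that'' is circular --- you need the invariant to justify the simulation and the simulation (via PRF security) to justify the invariant.

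Second, the saturated set $N_k$ with $\eta_k\equiv 1/2-\alpha$ makes the input distribution's Massart noise bound equal to $1/2-\alpha$ rather than $\eta$, so for $\eta<1/2-\alpha$ your instance does not witness the theorem: you rule out error below $\eta(1+c\alpha)$ on a distribution whose noise bound is $1/2-\alpha$, whereas the statement requires ruling out error $\eta^{*}(1+o(\alpha))$ relative to that distribution's own noise bound $\eta^{*}$. The saturation is also unnecessary: with noise exactly $\eta$ on the noisy set (as in the paper), the Massart constraint already caps any label-oblivious upweighting of positives at a factor $\approx\frac{(1-2\alpha)(1-\eta)}{(1+2\alpha)\eta}$, which drives $\Pr_{D'}[y=+1]$ only up to about $1/2-\alpha<1/2-\gamma$; this is exactly the computation in the proof of Lemma~\ref{lem:wklr-adv-nhh}, and it is what ties the achievable bias to $\alpha$ and yields the $\eta(1+\Theta(\alpha))$ bound without distorting the noise rate of $D$.
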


We formalize the notion of black-box boosting and review definitions in Section~\ref{ssec:lower-bound-prelims}. 
We describe the hard learning problem for the lower bound in Section~\ref{ssec:lower-bound-setting-parameters}.
We describe our adversarial weak learner and note its useful properties in Section~\ref{ssec:adversarial-weak-learner-and-example-generator}.
In Section~\ref{ssec:lower-bound-final-proof}, we state and prove our lower bound.

\subsection{Lower Bound Preliminaries}
\label{ssec:lower-bound-prelims}

First, we define black-box boosting. In particular, we formalize the notion of a sampling procedure $\B$, the subroutine a boosting algorithm uses to construct weak learner queries from labeled examples. 
Recall the definition of an efficient Massart noise weak learner from Section~\ref{sec:prelims}:
\weaklearnerdef*
\efficientweaklearnerdef*

We let $\wklcompiletime_\wkl$ denote the time $\wkl$ takes to output a hypothesis,
$\hypbitcomplexity$ denote the maximum bit complexity of a returned hypothesis $h$,
and $\hypevaltime$ denote the maximum time to evaluate a returned hypothesis $h$ on any $x \in \data$.
Recall that we define the runtime $R_\wkl$ of $\wkl$ as an upper bound on $\wklcompiletime_\wkl + \hypbitcomplexity + \hypevaltime$.

For a boosting algorithm to construct new distributions to query the weak learner,
the boosting algorithm must be able to convert examples from $D$ into examples from a new distribution. We refer to this part of the boosting algorithm as a \emph{sampling procedure} $\B$.  

\begin{definition}[Sampling Procedure]
	A sampling procedure $\B^{\mathcal{O}}$ is a probabilistic oracle algorithm that uses (potentially many) examples from $\mathcal{O}$ to return an example $(x,y) \in \data \times \pmone$.
\end{definition}

We prove a lower bound against the following formulation of a
black-box Massart boosting algorithm. 
In this setting, the boosting algorithm interacts with a example generator $\examplegenerator$, which generates examples for the weak learner. The boosting algorithm
provides $\examplegenerator$ an efficient sampling procedure $\B$, as well as oracle access to its example oracle $\exor$. The sampling procedure $\B$ will induced a new distribution over $\X \times \{\pm 1\}$. We denote by $D^{\B}$ the distribution induced by $\B$ when supplied with $\exor$ as its example oracle. The weak learner $\wkl$ uses $m_\wkl$ examples drawn i.i.d. by $\examplegenerator$ to compute a hypothesis $h$, returned to the boosting algorithm. Note that the weak learner is required to return a hypothesis with advantage $\gamma$ only if $D^\B$ is a Massart noise distribution with noise bound $1/2-\alpha$. 
For simplicity, we assume that the boosting algorithm and $\examplegenerator$ know the format of $h$ and $\B$, and that executing these subroutines can be done efficiently in their respective bit complexities.

\begin{definition}[Black-box Massart Boosting Algorithm]
	\label{def:black-box-boosting-algorithm}
Let $\C$ be a concept class over $\data$, and let $f \in \C$ be an unknown function.
Let $n$ denote the maximum bit complexity of an $x \in \data$. 
Let $D_x$ be a fixed but unknown distribution over $\data$.
Let $\exor = \exor(D_x, f, \eta(x))$ be a noisy example oracle for Massart noise distribution $D = \MD$.
Let $\examplegenerator$ be an example generator with query access to $\exor$. 
Let $\wkl$ be an efficient $(\alpha, \gamma)$-Massart noise weak learner with runtime $R_\wkl$, hypothesis bit complexity $\hypbitcomplexity$, and hypothesis evaluation time $\hypevaltime$. 
Let $m_\wkl$ denote the number of examples $\wkl$ requires. 
A black-box Massart boosting algorithm $\bbboost$,
with round bound $T$ and sample complexity $m$,
is a probabilistic polynomial-time algorithm with misclassification error $\eta^*$ if $\bbboost$
satisfies the following conditions:

\begin{enumerate}
\item Sample complexity $m$:
$\bbboost$ draws $m = \poly(n, 1/(1-2\eta), 1/\gamma)$ examples from sample oracle $\exor$. 
\item
Round bound $T$:
$\bbboost$ queries $\wkl$ at most $T = \poly(n, 1/(1-2\eta), 1/\gamma)$ times. 

\item
Weak Learner Queries:
$\bbboost$ queries $\wkl$ by providing input $\B$ to $\examplegenerator$,
where $\B^{\exor}$ is an efficient sampling procedure satisfying the following conditions:
\begin{itemize}
    \item $\B$ runs in time  $\poly(n, m_\wkl, 1/(1-2\eta), 1/\gamma, \hypevaltime)$. 
    \item $\B$ draws at most $\poly(n, 1/(1-2\eta), 1/\gamma)$ examples from $\exor$.
    \item $\B$ is represented with bit complexity $\poly(n, m_\wkl, 1/(1-2\eta), 1/\gamma, \hypevaltime)$. 
    \item $\B$ may use previous weak learner hypotheses as subroutines in $\B$.
\end{itemize}
$\examplegenerator(\B)$ runs $\B$ $m_\wkl$-many times to generate a sample $S$ containing $m_\wkl$ examples. $\examplegenerator$ gives $S$ to $\wkl$, which returns a hypothesis $h$ to $\bbboost$. 

\item 
Correctness:
If $\wkl$ returns a hypothesis with advantage $\gamma$ over $D^\B$ in each round that $D^\B$ is a Massart distribution, then
$\bbboost$ returns a classifier $H: \X \rightarrow \pmone$ with misclassification error $\lerr(H) < \eta^*$ with constant probability.

\item
Runtime:
$\bbboost$ runs in time $\poly(n, m_\wkl, 1/(1-2\eta), 1/\gamma, \hypevaltime)$. 
\end{enumerate}

\end{definition}

For clarity, the following pseudocode illustrates this black-box boosting framework.
\begin{figure}[H]
	\begin{algorithm}[H]
		\caption{Black-box Boosting Framework}
		\label{alg:bbboosting-framework}
		\begin{algorithmic}
			\STATE Black-box boosting algorithm $\bbboost$ draws $m$ examples from $\exor$.
			\FOR{$t = 1$ to $t = T$}
				\STATE $\bbboost$ constructs sample procedure $\B_t$, possibly using hypotheses $h_1, \dots, h_{t-1}$
				\STATE $\bbboost$ gives $\B_t$ to example generator $\examplegenerator$
				\STATE Weak learner $\wkl$ gives $m_\wkl$ to $\examplegenerator$
				\STATE $\examplegenerator$ uses $\B_t$ to draw $m_\wkl$ i.i.d. examples from $D^{\B_t}$. Let $S$ denote the set of these examples. 
				\STATE $\examplegenerator$ gives sample $S$ to $\wkl$
				\STATE $\wkl(S)$ returns hypothesis $h_t$ to $\bbboost$ 
			\ENDFOR
			\STATE $\bbboost$ outputs trained classifier $H$
		\end{algorithmic}
	\end{algorithm}
\end{figure}
The example generator $\examplegenerator$ is primarily used to correct the type mismatch between the boosting algorithm and weak learner. The boosting algorithm constructs distributions to query the weak learner, and the weak learner is defined to run on samples.

We will show that black-box Massart boosting algorithms cannot learn functions from pseudorandom function families with non-negligible probability.
The following definition appears in \cite{KalaiServedio:03}.
As noted in \cite{KalaiServedio:03}, if one-way functions exist,
then $p$-biased pseudorandom function families exist.

\begin{definition}[$p$-biased Pseudorandom Function Family]
	\label{def:pseudorandom-function-family]}
For $0 < p < 1$, a $p$-biased pseudorandom function family is a family of functions 
$\{f_s: \{0,1\}^{|s|} \rightarrow \pmone \}_{s \in \{0,1\}^*}$
which can be efficiently evaluated and satisfy the following
$p$-biased pseudorandomness property:
\begin{itemize}
    \item Efficient evaluation: 
    There is a deterministic algorithm which, given an $n$-bit seed $s$ and an $n$-bit input $x$, runs in time $\poly(n)$ and outputs $f_s(x)$.
    
    \item $p$-biased pseudorandomness: 
    Let $\F_{n,p}$ be the distribution over functions 
    from $\{0,1\}^n$ to $\pmone$
    such that function $F$ has weight $p^{|F^{-1}(1)|} (1-p)^{|F^{-1}(-1)|}$. 
    For all probabilistic polynomial time algorithms $\mathcal{A}$, the distinguishing advantage of $\mathcal{A}$ is a negligible function in $n$, 
    $$
    \left| 
    \Pr_{F \sim \F_{n,p}}[\mathcal{A}^F(1^n) \Rightarrow 1] 
    - \Pr_{s \sim \{0,1\}^n}[\mathcal{A}^{f_s}(1^n) \Rightarrow 1]
    \right|
    < \negl(n)
    $$
\end{itemize}
\end{definition}

\subsection{Adversarial Massart Distribution}
\label{ssec:lower-bound-setting-parameters}

Next, we describe the hard Massart noise learning problem used to prove our lower bound (Theorem~\ref{thm:lower-bound-detailed}). The following definitions apply to the remainder of Section~{\ref{sec:lower-bound}}.

Let $\eta \in [0, 1/2), \alpha \in (0, 1/2 - \eta), \gamma(\alpha) = \alpha/20$.
Define $\eta' = \eta (1 + \alpha/5)$.
Let $\{f_s: \{0,1\}^{|s|} \rightarrow \{\pm1\}\}_{s \in \{0,1\}^*}$
be a $\eta'$-biased pseudorandom random function family with minority label $1$. 

Let $n$ denote the security parameter, chosen to be at least a large polynomial in $1/(1-2\eta)$ and $1/\gamma$. 
Let $\data = \{0,1\}^n$, and let $D_x$ be the uniform distribution over $\data$.
For $s \in \{0,1\}^n$, let $\C_s$ be the concept class containing only the function 
$f_s : \{0,1\}^n \rightarrow \{\pm 1\}$. 

The noise function $\eta(x)$ is chosen as follows. 
On the minority elements $x \in f_s^{-1}(1)$, let $\eta(x) = 0$. 
On the majority elements $x \in f_s^{-1}(-1)$, let $\eta(x) = \eta$ for a random $\rho/(1-\eta')$-fraction of these $x$'s, where 
$1/\poly(n) < \rho < \alpha/1000$.
Later, we will refer to these elements as \emph{noisy}. Let $\Xnoise = \{x \in \X | \eta(x) > 0\}$ denote the set of noisy examples. 
For the remaining elements, let $\eta(x) = 0$. 
Finally, let Massart noise distribution $D = \mathtt{Mas}\{D_x, f_s, \eta(x)\}$ with example oracle $\exor = \exor(D_x, f_s, \eta(x))$. 
Note that the noise bound is $\eta$ and $\opt = \rho \eta$.

Throughout this section, we assume $n$ is a polynomial in $1/(1-2\eta)$ and $1/\gamma$, so that we can assume the probability of $\exor$ returning the same data point $x \in \X$ more than once during the $\poly(n, 1/(1 - 2\eta), \gamma)$ rounds of boosting is a negligible function in $n$.

\subsection{Adversarial Weak Learner and Example Generator}
\label{ssec:adversarial-weak-learner-and-example-generator}

In this section, we describe our adversarial weak learner $\wklr$, provide pseudocode, and prove that it has some nice properties. We also describe an example generator $\egr$ that does not directly call $\exor$.

\subsubsection{Adversarial Weak Learner}
\label{sssec:adversarial-wkl}
We now define our ``rude" weak learner $\wklr_{m, T}(S)$, which  attempts to be maximally unhelpful by returning hypotheses $h$ that rely entirely on majority vote labels. The weak learner $\wklr$ never provides the booster with any information about $f_s$ that the booster could not have computed itself, and therefore the pseudorandomness of $f_s$ will guarantee that the booster cannot boost $\wklr$ to obtain a hypothesis with error noticeably less than $\eta'$. The main technical challenge of proving our lower bound will come from showing that it is in fact possible for $\wklr$ to achieve noticeable advantage $\gamma$ against all Massart distributions supplied to it by the booster, without revealing any information about $f_s$ that cannot be efficiently simulated. 

Recall that boosting algorithm $\bbboost$ invokes the weak learner by constructing $\B$, an efficient sampling procedure, which induces a distribution $D^\B$. 
The weak learner $\wklr$ attempts to return a hypothesis $h: \data \rightarrow \pmone$ satisfying the following two conditions: 

\begin{itemize}
	\item For all $x \in \X$ that have large probability mass in $D^\B$ 
	($\approx \frac{\gamma}{10m}$ or larger), 
	$h(x)$ is the most likely label for $x$ under $D^{\B}$, i.e., $\sgn(\E_{(x^{*},y) \sim D^{\B}}[y \mid x = x^{*}])$. We will refer to such $x$'s as ``heavy-hitters". 
	
	\item For other $x$ with smaller probability mass in $D^{\B}$, $h(x)$ is the most likely label for all non-heavy-hitters under $D^{\B}$, i.e., $\sgn(\E_{(x,y) \sim D^{\B}}[y \mid x \not\in \Xset])$. 
	The weak learner $\wklr$ is given access to $m$, the number of examples drawn by the boosting algorithm, so that $\wklr$ may accurately predict which examples $x$ are heavy-hitters. 
\end{itemize}

\newcommand{\hhintervalwidth}{\ensuremath{[\frac{\gamma}{20m}, \frac{\gamma}{10m}]}}

The weak learner identifies heavy-hitters using a two-step process. 
First, $\wklr$ uses a subset of its sample $S$ to identify candidate heavy-hitters. It initially adds all $x$-values from this subset to the set of candidate heavy-hitters, $\Xset$.
Next, $\wklr$ checks each $x \in \Xset$ to see if it is indeed a heavy-hitter of $D^{\B}$. Fresh examples from its samples $S$ are used to empirically estimate this probability $\widehat{p}_x \eqdef \Pr_{(x', y') \sim S^\B}[x = x']$. 
The weak learner then randomly picks a value 
$v \in \hhintervalwidth$, 
and removes from $\Xset$ all $x$'s for which $\widehat{p}_x < v$. This step ensures that, with high probability, $\Xset$ contains \emph{exactly} $v$-heavy-hitters of
$D^\B$. 

The random choice of $v_h$ and $v_y$ will allow us to argue that, for fixed $v = (v_h, v_y)$, the hypothesis output by $\wklr$ is not too sensitive to the specific sample drawn by $\wklr$. That is, if $\wklr$ was repeatedly executed with the same choice of $v$, but different samples drawn from the same distribution, $\wklr$ would output the same hypothesis with high probability. This stability property is fully justified in Subsection~\ref{ssec:wklr-adv}, but, informally, it will allow us to argue that the booster could simulate the example oracle $\exor$ itself when generating samples for $\wklr$, without making additional queries to its example oracle, and that with high probability the hypotheses output by $\wklr$ would be the same in this case as those output when the sampling procedure queries $\exor$. Analyzing the behavior of the boosting algorithm when the sampling procedure does not draw examples from $\exor$ (and therefore the labels of examples do not depend on $f_s$) simplifies the argument that $\wklr$ can satisfy the definition of a Massart noise-tolerant weak learner without leaking information to the booster about $f_s$.

We now present pseudocode for our adversarial $(\alpha, \gamma)$-weak learner. 

\begin{figure}[H]
	\begin{algorithm}[H]
		\caption{$\wklr_{m, T}(S)$\\
		Precondition: $S$ contains $m_\wkl$ examples drawn i.i.d. from $D^\B$}
		\label{alg:wklr-pseudocode}
		\begin{algorithmic}
			\IF{$m < n$}
				\STATE $m = n$
			\ENDIF
			\STATE $\Xset \gets x$-values from $O(m^2/\gamma)$ examples from $S$
			\COMMENT{Step 1: Draw candidate heavy-hitters}
			\FORALL[Step 2: Remove non-$v$-heavy-hitters]{$x \in \Xset$}
			\STATE Estimate $\widehat{p}_x \eqdef \Pr[D^\B \text{ returns } x]$ using $O(m^{11}T^2/\gamma^4)$ fresh examples from $S$
			\ENDFOR
			\STATE $v_h \gets_r \hhintervalwidth$
			uniformly at random
			\STATE Remove from $\Xset$ all $x$ for which $\widehat{p}_x < v$.
			\STATE $v_y \gets_r [\tfrac{1}{2}, \tfrac{1}{2} +  \tfrac{\gamma}{10m}]$
			\FORALL[Step 3: Assign majority labels]{$x \in \Xset$}
			\STATE $S_x \gets m^2/(20\gamma^3)$ fresh examples from $S$
			\STATE $\widehat{p}_1 \gets $ fraction of $S_x$ with label $1$
			\IF{$\widehat{p}_1 \geq  v_y$}
			\STATE $y_x = 1$
			\ELSE
			\STATE $y_x = -1$
			\ENDIF 
			\ENDFOR           
			\STATE $h(x) = 
			\begin{cases}
			y_x & x \in \Xset \\
			-1     & \text{otherwise}
			\end{cases}
			$ 
			\COMMENT{Step 5: Output hypothesis $h$}
			\RETURN $h \eqdef \{\Xset, \{y_x\}\}$
		\end{algorithmic}
	\end{algorithm}
\end{figure}

This weak learner has polynomial sample complexity (Lemma~\ref{lem:wklr-sample-complexity}), runs in polynomial time (Lemma~\ref{lem:wklr-runtime}), and does not use any hardcoded information about $f_s$, so $\wkl$ is efficiently simulatable (Lemma~\ref{lem:wklr-egr-efficiently-simulatable}).

\subsubsection{Example Generation}
\label{sssec:adversarial-example-generator}

In this section, we define the two example generation procedures we will use in our lower bound argument: $\egh$ and $\egr$.  

Recall that an example generator $\examplegenerator$ is tasked with interfacing between the boosting algorithm, which creates reweighted distributions $D^\B$, and the weak learner, which runs on samples $S$ whose elements are drawn from $D^\B$. To accomplish this, the example generator needs information from the weak learner and the boosting algorithm. The weak learner tells the example generator $m_\wkl$, the sample size it needs, and the boosting algorithm provides oracle access to its example oracle $\exor$, as well as the sampling procedure $\B$. The example generator therefore invokes $\B$ $m_\wkl$-many times, returning sample $S$. 

\begin{figure}[H]
	\begin{algorithm}[H]
		\caption{$\egh_{m_\wkl}^{\exor}(\B)$ \\
		Precondition: $\B$ is a sampling procedure that returns an example $(x, y)$}
		\label{alg:honest-example-generator}
		\begin{algorithmic}
			\STATE $S = \emptyset$
			\FOR{$i = 1$ to $i = m_\wkl$}
			\STATE $(x,y) \gets \B^\exor$ 
			\STATE $S \gets S \| (x,y)$
			\ENDFOR
			\RETURN $S$
		\end{algorithmic}
	\end{algorithm}
\end{figure}

Our second example generation procedure $\egr$ behaves identically, except it never calls its oracle $\exor$. Rather, $\egr$ simulates calls to $\exor$ using $\exorsim$. The routine $\exorsim$ draws $x$ values from the same marginal distribution over $\X$ that $\exor$ does, $\mathcal{U}(\X)$. It then generates the label $y$ by taking $y=-1$ with probability $1-\eta'-\rho + \rho\eta$, and $y=-1$ otherwise, in effect sampling from the same marginal distribution over $\pm 1$ that $\exor$ does, but independent of the value $x$ it has already drawn, and therefore independent of $f_s$.

\begin{figure}[H]
	\begin{algorithm}[H]
		\floatname{algorithm}{Routine}
		\caption{$\exorsim$}
		\label{routine:exorsim}
		\label{alg:exorsim}
		\begin{algorithmic}
			\STATE $x\leftarrow_r U_n$
			\STATE $ y = 
			\begin{cases}
			-1 & \text{ w. p. } 1-\eta'-\rho + \rho\eta \\
			1 & \text{ o.w. }
			\end{cases}
			$				
			\COMMENT{i.e. $\Pr [y = 1] = \Pr_{(x,y) \sim D} [y = 1]$}
			\RETURN $(x,y)$
		\end{algorithmic}
		\end{algorithm}
	\begin{algorithm}[H]
		\caption{$\egr_{m_\wkl}^{\exor}(\B)$ \\
		Precondition: $\B$ is a sampling procedure that returns an example $(x, y)$}
		\label{alg:rude-example-generator}
		\begin{algorithmic}
			\STATE $S = \emptyset$
			\FOR{$i = 1$ to $i = m_\wkl$}
			\STATE $(x,y) \gets \B^\exorsim$ 
			\STATE $S \gets S \| (x,y)$
			\ENDFOR
			\RETURN $S$
		\end{algorithmic}
	\end{algorithm}
\end{figure}

By pseudorandomness, we will show that with high probability over $v = (v_h, v_y)$, and over choice of $S, S'$, where $S$ is generated by $\egh$ and $S'$ is generated by $\egr$, we have $\wklr(S; v) = \wklr(S'; v)$ (Section~\ref{sssec:wklr-lemmas-reproducibility1}). 

\subsubsection{Efficiency of $\wklr$ and $\egr$}
\label{sssec:wklr-lemmas}

In this section, we show that weak learner $\wklr$ and example generator $\egr$ are efficient and simulatable in polynomial time. 

\begin{enumerate}
	\item Efficiency of $\wklr$: polynomial sample complexity (Lemma~\ref{lem:wklr-sample-complexity}) and polynomial runtime (Lemma~\ref{lem:wklr-runtime}).
	
	\item Boosting with $\wklr$ and $\egr$ can be efficienctly simulated
	(Lemma~\ref{lem:wklr-egr-efficiently-simulatable}).
\end{enumerate}

Recall that $\B$ is a probabilistic algorithm that returns a labeled example. 
Let $m_\B$ denote the sample complexity of $\B$.
Let $R_\B$ denote the runtime of $\B$ (including the time to query its oracle). 

\begin{lemma}[Sample Complexity of $\wklr$]
		\label{lem:wklr-sample-complexity}
$m_\wklr = \poly(n, 1/(1-2\eta), 1/\gamma)$. 
\end{lemma}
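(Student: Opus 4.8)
The plan is to walk through the three sampling steps of $\wklr_{m,T}(S)$ (Algorithm~\ref{alg:wklr-pseudocode}) and observe that each one draws a number of fresh examples from $S$ that is a fixed polynomial in the parameters $m$, $T$, $1/\gamma$, and that $m$ itself, after the initial guard ``if $m < n$ then $m = n$,'' is at most the sample complexity of $\bbboost$, which is $\poly(n, 1/(1-2\eta), 1/\gamma)$ by Definition~\ref{def:black-box-boosting-algorithm}. Summing the per-step costs then gives the claimed bound. I would not try to optimize constants; the point is only that the total is polynomial in the stated quantities.

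Concretely, I would enumerate: Step~1 (drawing candidate heavy-hitters) uses $O(m^2/\gamma)$ examples; Step~2 (estimating each $\widehat{p}_x$) uses $O(m^{11}T^2/\gamma^4)$ examples for each of the $O(m^2/\gamma)$ candidates, for a total of $O(m^{13}T^2/\gamma^5)$; and Step~3 (assigning majority labels) uses $m^2/(20\gamma^3)$ examples for each surviving candidate, again at most $O(m^2/\gamma)$ of them, for $O(m^4/\gamma^4)$ examples. Adding these, the dominant term is $O(m^{13}T^2/\gamma^5)$, so $m_\wklr = O(m^{13}T^2/\gamma^5)$. Now I would invoke the two facts from the black-box boosting setup: $m = \poly(n,1/(1-2\eta),1/\gamma)$ (sample complexity bound on $\bbboost$), and $T = \poly(n,1/(1-2\eta),1/\gamma)$ (round bound on $\bbboost$); after the guard, $m \geq n$, so in particular $m$ is still polynomial in the same parameters. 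Substituting, $m_\wklr$ is a polynomial composed with polynomials, hence $m_\wklr = \poly(n,1/(1-2\eta),1/\gamma)$, as claimed.

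One small point to be careful about: $\wklr$ is run with a fresh sample $S$ of size $m_\wkl$ in each of the $T$ rounds, but the lemma concerns $m_\wkl$ itself (the size of $S$ that $\egr$/$\egh$ must generate per round), not the cumulative count over all rounds; so I only need the per-invocation count computed above, and the bound on $T$ enters only through the $\widehat{p}_x$-estimation accuracy (the $T^2$ factor), not through multiplying the whole expression by $T$. I should also note that the guard $m \leftarrow \max(m,n)$ is exactly what ensures the security parameter $n$ does not blow up the bound relative to $m$ — it only enlarges $m$ to something still polynomial in $n$, and $n$ is itself chosen polynomial in $1/(1-2\eta)$ and $1/\gamma$ in Section~\ref{ssec:lower-bound-setting-parameters}, so all three forms of the bound ($\poly(n,\ldots)$, $\poly(m,\ldots)$) coincide. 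There is no real obstacle here; the only ``work'' is bookkeeping the five sampling lines and confirming that $m$ and $T$ are polynomially bounded, which is given. I would present this as a short direct computation.
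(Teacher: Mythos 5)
Your proposal is correct and follows essentially the same route as the paper's proof: enumerate the examples consumed in each of the three sampling steps of Algorithm~\ref{alg:wklr-pseudocode}, identify the Step~2 total $O(m^{13}T^2/\gamma^5)$ as dominant, and substitute the polynomial bounds on $m$ and $T$ from Definition~\ref{def:black-box-boosting-algorithm}. Your added remarks (per-invocation vs.\ cumulative counting, and the role of the $m \leftarrow \max(m,n)$ guard) are accurate bookkeeping that the paper leaves implicit.
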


\begin{proof}
By Definition~\ref{def:black-box-boosting-algorithm}, $m_\B = \poly(n, m, 1/(1-2\eta), 1/\gamma)$, $m = \poly(n, 1/(1-2\eta), 1/\gamma)$, and $T = \poly(n, 1/(1-2\eta), 1/\gamma)$. 
Step 1 requires $O(m^2T/\gamma)$ examples.
Step 2 requires $O(m^{13}T^2/\gamma^5)$ examples.
Step 3 requires $O(m^4/\gamma^4)$ examples. Therefore Step 2 dominates the sample complexity of the weak learner, and $m_\wklr = \poly(n, 1/(1-2\eta), 1/\gamma)$ as claimed. 
\end{proof}

\begin{lemma}[Runtime of $\wklr$]
	\label{lem:wklr-runtime}
	$\wklr$ runs in time  $R_\wklr = \poly(n, 1/(1-2\eta), 1/\gamma)$. 
	\begin{enumerate}
		\item $\wklr$ outputs hypothesis $h$ in time 
		$\wklcompiletime_\wklr = \poly(n, 1/(1-2\eta), 1/\gamma)$.
		\item The maximum bit complexity of $h$ is
		$\hypbitcomplexity = \poly(n, 1/(1-2\eta), 1/\gamma)$.
		\item Hypothesis $h$ can be evaluated in time
		$\hypevaltime = \poly(n, 1/(1-2\eta), 1/\gamma)$.
	\end{enumerate}
	
\end{lemma}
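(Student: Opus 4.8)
The plan is to walk through the three steps of Algorithm~\ref{alg:wklr-pseudocode} and verify that every operation runs in time polynomial in quantities that are themselves polynomially bounded. Recall from Definition~\ref{def:black-box-boosting-algorithm} that $m, T = \poly(n, 1/(1-2\eta), 1/\gamma)$, and from Lemma~\ref{lem:wklr-sample-complexity} that $m_\wklr = \poly(n, 1/(1-2\eta), 1/\gamma)$; note that $\wklr$ only processes its input sample $S$ (it never calls the sampling procedure $\B$ or the oracle $\exor$ itself), and $S$ has polynomially many elements, each of bit complexity $n$. For the compile time $\wklcompiletime_\wklr$: Step~1 copies the $x$-values of $O(m^2/\gamma)$ examples into $\Xset$, so $|\Xset| = O(m^2/\gamma)$ and this costs $\poly(n,m,1/\gamma)$ time. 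Step~2, for each $x \in \Xset$, makes one counting pass over $O(m^{11}T^2/\gamma^4)$ examples to form the estimate $\widehat p_x$, then draws the threshold $v_h$ (to $\poly(n,m,T,1/\gamma)$ bits of precision) and deletes the $x$'s with $\widehat p_x < v_h$; summed over $\Xset$ this is $\poly(n,m,T,1/\gamma)$ time. Step~3, for each surviving $x \in \Xset$, makes one counting pass over $m^2/(20\gamma^3)$ examples and one comparison of the resulting frequency against $v_y$. Hence $\wklcompiletime_\wklr = \poly(n, 1/(1-2\eta), 1/\gamma)$.

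For the bit complexity $\hypbitcomplexity$: the hypothesis $h$ is returned in the explicit form $\{\Xset,\{y_x\}\}$, i.e., a table listing the $|\Xset| = O(m^2/\gamma)$ surviving points of $\Xset$ (each $n$ bits) together with one label bit per point, so $\hypbitcomplexity \le \poly(n,m,1/\gamma) = \poly(n, 1/(1-2\eta), 1/\gamma)$. For the evaluation time $\hypevaltime$: on input $x \in \X$, evaluating $h$ scans this table (at most $|\Xset|$ comparisons of $n$-bit strings) and returns $y_x$ if $x$ occurs there and $-1$ otherwise, so $\hypevaltime = O(|\Xset|\cdot n) = \poly(n, 1/(1-2\eta), 1/\gamma)$. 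Since $R_\wklr$ is taken to be an upper bound on $\wklcompiletime_\wklr + \hypbitcomplexity + \hypevaltime$, the main claim follows from these three sub-claims.

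I expect no genuine obstacle here, only two points worth recording explicitly. First, although the boosting framework permits sampling procedures (and hence, a priori, weak hypotheses) to nest earlier hypotheses, the output of $\wklr$ itself is always a flat lookup table; consequently $\hypbitcomplexity$ and $\hypevaltime$ are bounded uniformly and do not compound over the $T$ rounds of boosting. Second, the real-valued choices $v_h$ and $v_y$ need only be sampled to polynomially many bits of precision, which keeps the running time and bit complexity polynomial while (as used later in the advantage analysis of $\wklr$) not affecting the heavy-hitter thresholds in any harmful way.
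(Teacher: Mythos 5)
Your proof is correct and follows essentially the same route as the paper's: bound the size of $\Xset$ by $O(m^2/\gamma)$, observe that $h$ is a flat lookup table of polynomial bit complexity evaluable by a linear scan, and note that each step of $\wklr$ runs in time linear in its (polynomial) sample complexity. Your added remarks on the precision of $v_h, v_y$ and on the non-compounding of hypothesis complexity across rounds are sensible elaborations of points the paper leaves implicit, but do not change the argument.
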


\begin{proof}
	By Definition~\ref{def:black-box-boosting-algorithm}, $m = \poly(n, 1/(1-2\eta), 1/\gamma)$, and $T = \poly(n, 1/(1-2\eta), 1/\gamma)$. Recall the runtime of a weak learner was defined as a bound on the sum of the three quantities listed in the lemma statement. 
	
	The hypotheses $h \eqdef \{\Xset, \{y_x\}\}$ output by $\wklr$ provides individual labels $y_x$ for a maximum of $O(m^2/\gamma)$ elements in $\Xset$. Thus, $\{\Xset, \{y_x\}\}$ has bit complexity at most
	$\poly(n, 1/(1-2\eta), 1/\gamma)$. The instructions for executing this hypothesis can also be written using $\poly(n, 1/(1-2\eta), 1/\gamma)$ bits. 
	For all $x \in \data$, $h(x)$ can be evaluated in time linear in the bit complexity of $h$. An algorithm can check if $x \in \Xset$ by scanning the representation of $h$ for $x$, outputting $y_x$ if found or $-1$ if not.
	Each step of $\wklr$ runs in time linear in the sample complexity of $\wklr$. 
	By Lemma~\ref{lem:wklr-sample-complexity}, $m_\wkl = \poly(n, 1/(1-2\eta, 1/\gamma)$. 
	Thus, $\wklr$ outputs $h$ in time $\poly(n, 1/(1-2\eta, 1/\gamma)$.
\end{proof}

Next, we argue that black-box boosting with $\wklr$ is efficiently simulatable. The following Lemma permits us to apply use the boosting algorithm in a distinguisher for pseudorandomness. 

\begin{lemma}[Boosting with $\wklr$ and $\egr$ can be efficiently simulated]
	\label{lem:wklr-egr-efficiently-simulatable}
	Given query access to a function oracle for $f_s$, a probabilistic algorithm $\mathcal{A}$ can simulate the interaction between $\bbboost$ and the weak learner $\wklr$, using $\egr$ to generate samples for $\wklr$, in time $\poly(n, 1/(1-2\eta), 1/\gamma)$.
\end{lemma}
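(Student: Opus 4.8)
The plan is to exhibit a probabilistic polynomial-time algorithm $\mathcal{A}^{f_s}$ that runs the code of $\bbboost$ verbatim, intercepting the only two ways $\bbboost$ interacts with the outside world: its $m$ direct draws from the example oracle $\exor$, and its calls that hand a sampling procedure $\B$ to the example generator. I would argue that the second kind of interaction can be carried out with no access to $f_s$ whatsoever, and the first with only oracle access to $f_s$, after which the runtime bound follows by composing polynomial bounds that are already established.

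For the example-generator side, whenever $\bbboost$ submits a sampling procedure $\B_t$ (Definition~\ref{def:black-box-boosting-algorithm}), $\mathcal{A}$ executes the code of $\egr_{m_\wkl}(\B_t)$ itself: this runs $\B_t$ exactly $m_\wkl$ times, invoking $\B_t$ with $\exorsim$ in place of its example oracle, and $\exorsim$ draws a uniform $x \in \data$ together with an \emph{independent} label, so no reference to $f_s$ is ever made. The procedure $\B_t$ may call earlier weak-learner hypotheses $h_1,\dots,h_{t-1}$ as subroutines, but these are hypotheses $\mathcal{A}$ has already produced, each efficiently evaluable (Lemma~\ref{lem:wklr-runtime}), so this costs only $\poly$ time per run of $\B_t$. $\mathcal{A}$ then runs $\wklr_{m,T}(S)$ on the resulting sample $S$; by inspection $\wklr$ uses only $S$ and the public parameters $m,T,\gamma$ and no hardcoded information about $f_s$, and by Lemmas~\ref{lem:wklr-sample-complexity} and~\ref{lem:wklr-runtime} it runs in time $\poly(n,1/(1-2\eta),1/\gamma)$. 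The returned hypothesis $h_t$ is handed back to the simulated $\bbboost$.

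The only genuine use of $f_s$ is in answering the $m$ examples $\bbboost$ draws directly from $\exor$ (item~1 of Definition~\ref{def:black-box-boosting-algorithm}), which $\mathcal{A}$ handles lazily: it draws a uniform $x \in \data$, queries the oracle for $f_s(x)$, and, the first time a given $x$ appears, records in a table whether $x$ is noisy, namely $\eta(x)=\eta$ when $f_s(x)=-1$ and an independent $\rho/(1-\eta')$-biased coin comes up heads, and $\eta(x)=0$ otherwise; it then emits the label $-f_s(x)$ with probability $\eta(x)$ and $f_s(x)$ otherwise. Since the noisy set in Section~\ref{ssec:lower-bound-setting-parameters} is precisely a fresh uniformly random $\rho/(1-\eta')$-fraction of $f_s^{-1}(-1)$ chosen independently of everything else, this reproduces the distribution of $\exor$ exactly — the table enforces that a repeated $x$ keeps its noise status, and, since the standing assumption on $n$ guarantees no point is returned twice except with negligible probability, even a memoryless version of this step would be statistically indistinguishable from $\exor$. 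Each answer costs one oracle query and $\poly(n)$ bookkeeping, and there are only $m=\poly$ of them.

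Putting the pieces together, $\mathcal{A}$ runs the $\poly$-time code of $\bbboost$ through its $T=\poly$ rounds, each round triggering one $\poly$-time $\egr$ call (itself at most $m_\wkl=\poly$ runs of the $\poly$-time $\B_t$), one $\poly$-time run of $\wklr$, and, over the whole execution, $m=\poly$ lazily simulated $\exor$ draws at $\poly(n)$ each; hence $\mathcal{A}$ runs in time $\poly(n,1/(1-2\eta),1/\gamma)$ and can output the classifier $H$ produced by $\bbboost$ (or any function of the simulated transcript). The one point that requires care rather than mere bookkeeping is verifying that the online assignment of noise in the lazy $\exor$ simulation is distributed identically to the fixed-in-advance noise function $\eta(\cdot)$; this is exactly where the assumption that $n$ is large enough to make repeat samples negligible is used.
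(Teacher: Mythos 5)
Your proposal is correct and follows essentially the same route as the paper's proof: simulate the $m$ direct $\exor$ draws using the function oracle plus freshly sampled noise (relying on the negligible collision probability for consistency), run $\egr$ with $\exorsim$ so that weak-learner samples need no oracle access, observe that $\wklr$ uses no hardcoded information about $f_s$, and compose the polynomial bounds over $T$ rounds. Your lazy table for noise status is a slightly more careful rendering of the same step (and in fact matches the $\rho/(1-\eta')$-fraction construction of Section~\ref{ssec:lower-bound-setting-parameters} more faithfully than the paper's shorthand), but it is not a different argument.
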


\begin{proof}
	To simulate the initial $m$ examples drawn by the booster, $\mathcal{A}$ simulates $\exor$ as follows. It draws a data point $x\in X$ uniformly at random from $\X$, and queries its function oracle on this point. If the label returned by the function oracle is $1$, $\mathcal{A}$ returns $(x,1)$. If the label is a $-1$, it will return $(x, 1)$ with probability $\rho \eta$, and $(x,-1)$ otherwise. Because $\mathcal{A}$ only has negligible probability of drawing the same $x$-value twice, and because the noise function $\eta(x)$ is both random and non-zero only on a $\rho$-sized fraction of negatively-labeled examples, the $m$ examples drawn by this procedure are computationally indistinguishable from $m$ examples drawn from $\exor$, and so $\mathcal{A}$ successfully simulates the initial sample for $\bbboost$. 
	The algorithm $\mathcal{A}$ can then run the algorithm $\bbboost$, which by Definition~\ref{def:black-box-boosting-algorithm}, runs in time 
	$R_{b} = \poly(n, m_\wklr, 1/(1-2\eta), 1/\gamma, \hypevaltime)$. 
	
	To simulate samples generated by $\egr$, $\mathcal{A}$ can simply run Algorithm~\ref{alg:rude-example-generator}, using Routine~\ref{routine:exorsim} for the oracle to the sampling procedure $\B$. We have just shown in Lemma~\ref{lem:wklr-sample-complexity} and Lemma~\ref{lem:wklr-runtime} that $m_\wklr$ and $\hypevaltime$ are both $\poly(n, 1/(1-2\eta), 1/\gamma)$, and because the weak learner uses no special hard-coded information about $f_s$, it can also be efficiently simulated by $\mathcal{A}$. 
	These steps are repeated for $T = \poly(n, 1/(1-2\eta), 1/\gamma)$ rounds of boosting, each of which is efficiently simulatable in time $\poly(n, 1/(1-2\eta), 1/\gamma)$. Any additional post-processing must also be efficiently simulatable, since $\bbboost$ is assumed to run in time $\poly(n, m_{\wkl}, 1/(1 - 2\eta), 1/\gamma, R_h)$, and we have just shown that both $m_{\wkl}$ and $R_h$ are $\poly(n, 1/(1-2\eta), 1/\gamma)$. Therefore the entire interaction can be simulated by a probabilistic algorithm $\mathcal{A}$ with a function oracle for $f_s$, in time $\poly(n, 1/(1-2\eta), 1/\gamma)$.
	
\end{proof}

\subsubsection{$\wklr$ is a Massart Noise-Tolerant Weak Learner}\label{ssec:wklr-adv}

In this section, we analyze the advantage guarantee of $\wklr$. We begin by proving that the hypotheses output by $\wklr$ satisfy some notion of reproducibility. We then use this property, along with pseudorandomness of $f_s$, to argue that with high probability over choice of sample $S$ generated by $\egh$, $S'$ generated by $\egr$, and randomness $v = (v_h, v_y)$, that $\wklr(S; v) = \wklr(S', v)$. We then show that the hypothesis generated by $\wklr$, when run on a sample generated by $\egr$, will have good advantage against $D^{\B}$. Therefore the hypothesis generated by $\wklr$ during a real run of the boosting algorithm, where the sample is generated by $\egh$, must also have good advantage against $D^{\B}$.

\paragraph{Reproducibility of Weak Hypotheses.}
\label{sssec:wklr-lemmas-reproducibility1}

Recall that $\wklr$ and $\egr$ (or $\egh$) utilize randomness in two ways: i) to draw the input sample $S$, and ii) to pick thresholds $v_h$ and $v_y$. 
Let $h^{v}$ be the hypothesis that is most often returned when $\wklr$ is run with thresholds $v = (v_h, v_y)$.  
In this section, we show that weak learner $\wklr$ has the following property we call 
\emph{reproducibility}:
for a fixed $v$, with high probability over $S$, the hypothesis output by $\wklr$ is exactly $h^{v}$.
We will refer to this hypothesis as the 
\emph{canonical $v$-hypothesis} of $\wklr$. 

First, we will show that $\wklr$ run with $\egh$ is reproducible. 
In the next section, we apply pseudorandomness to show that with high probability, $\wklr$ run with $\egr$ outputs the same canonical $v$-hypothesis as it does when run with $\egh$.

Recall that $\wklr$ (Algorithm~\ref{alg:wklr-pseudocode}) is designed to return a hypothesis $h$ that assigns majority vote labels to $v_h$-heavy-hitters of $D^\B$, where $v_h$ is randomly chosen in the interval $\hhintervalwidth$.  
\begin{definition}[Heavy-Hitter]
	\label{def:heavy-hitter}
	Let $D$ be a distribution over $\data$. 
	We call $z \in \X$ a $v$-\emph{heavy-hitter} of $D$ if
	$ \Pr_{(x,y) \sim D} [x = z] 
	> v$.
\end{definition}

Recall that $\wklr$ returns a hypothesis $h \eqdef \{\Xset, \{y_x\}, b\}$. First, we show the consistency of $\Xset$. 

\begin{lemma}[Consistency of $\Xset$; $\Xset$ is the set of $v$-heavy-hitters]
	\label{lem:heavy-hitters-identified}
	Let $D$ be any distribution over $\data$, and let sample $S$ be a set of $m_\wklr$ examples drawn i.i.d. from $D$. Then with probability $1 - O(\tfrac{1}{mT})$ over the choice of $S$ and $v_h \in \hhintervalwidth$ (Step 2 of Algorithm~\ref{alg:wklr-pseudocode}), the set $\Xset$ computed by $\wklr(S)$ is exactly the set of $v_h$-heavy-hitters of $D$.
\end{lemma}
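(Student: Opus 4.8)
The plan is to prove both inclusions of the set equality at once, by isolating three failure modes and union-bounding. Write $p_x\eqdef\Pr_{(x',y')\sim D}[x'=x]$, recall that $v_h$ is drawn uniformly from $\hhintervalwidth$ \emph{after} $S$ is fixed, and use that $m\ge n$ is a large polynomial in $1/(1-2\eta)$ and $1/\gamma$. First I would handle Step~1. Since $v_h\ge\gamma/(20m)$ always, every $v_h$-heavy-hitter $z$ has $p_z>\gamma/(20m)$, and there are at most $20m/\gamma$ of them. In the $\Theta(m^2/\gamma)$ examples of Step~1, a fixed such $z$ is missed with probability $(1-\gamma/(20m))^{\Theta(m^2/\gamma)}=e^{-\Omega(m)}$, so by a union bound, except with probability $\negl(n)$ and regardless of $v_h$, every $v_h$-heavy-hitter is placed in $\Xset$ in Step~1 (call the complementary bad event $E_1$). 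Thus the final $\Xset$ can differ from the set of $v_h$-heavy-hitters only through an error in Step~2's filtering.

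Next I would control Step~2. After Step~1, $|\Xset|\le O(m^2/\gamma)$. For each candidate $x$, $\widehat p_x$ is an empirical mean of $N=\Theta(m^{11}T^2/\gamma^4)$ fresh i.i.d.\ indicators of the event $\{x'=x\}$, so by Chernoff--Hoeffding $|\widehat p_x-p_x|\le\tau$ except with probability $\delta'$, for $\tau=\Theta(\sqrt{\log(1/\delta')/N})$. Taking $\delta'=\Theta(\gamma/(m^3T))$ and union-bounding over all candidates, every estimate is $\tau$-accurate except with probability $O(1/(mT))$ (bad event $E_2$), and $\tau=\tilde{O}(\gamma^2/(m^5T))$, far below the length $\gamma/(20m)$ of the interval from which $v_h$ is drawn.

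Finally, let $E_3$ be the event that $v_h\in[p_x-\tau,p_x+\tau]$ for some candidate $x$; since $v_h$ is uniform on an interval of width $\gamma/(20m)$ and independent of $S$, $\Pr[E_3]=O(\tau m^3/\gamma^2)=\tilde{O}(1/(m^2T))\le O(1/(mT))$. On the complement of $E_1,E_2,E_3$: every $v_h$-heavy-hitter is a candidate; and for each candidate $x$, if $p_x>v_h$ then $p_x>v_h+\tau$ forces $\widehat p_x>v_h$ (kept), while if $p_x\le v_h$ then $p_x<v_h-\tau$ forces $\widehat p_x<v_h$ (removed). So a candidate survives Step~2 exactly when it is a $v_h$-heavy-hitter, and hence $\Xset$ is exactly the set of $v_h$-heavy-hitters of $D$. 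Summing the three failure probabilities gives $1-O(1/(mT))$.

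The main obstacle is the bookkeeping in the third step: one must check that $\tau$, which is pinned down by the fixed polynomial Step~2 sample size, is small enough relative both to the width of the $v_h$-interval and to the number $O(m^2/\gamma)$ of candidates for the union bound to beat $O(1/(mT))$ --- which is exactly why Step~2 uses $\Theta(m^{11}T^2/\gamma^4)$ examples. A secondary point to keep straight is that Steps~1 and~2 draw disjoint (``fresh'') portions of $S$, so each $\widehat p_x$ is unbiased for $p_x$ and independent of whether $x$ became a candidate; otherwise conditioning on candidacy would skew the Step~2 estimates.
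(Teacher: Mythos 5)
Your proposal is correct and follows essentially the same route as the paper's proof: Step~1 captures every $v_h$-heavy-hitter with all but negligible probability, the Step~2 frequency estimates are $\tau$-accurate by Chernoff--Hoeffding, and the random choice of $v_h$ avoids the $\tau$-neighborhood of every candidate's true probability except with probability $O(1/(mT))$. Your packaging into three explicit bad events (and a single unified boundary-avoidance argument over all candidates, rather than the paper's separate treatment of heavy and non-heavy candidates) is only an organizational difference, not a different argument.
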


\begin{proof}
	Recall that $\wklr$ constructs a candidate list of heavy-hitters $\Xset$ in Step 1 of Algorithm~\ref{alg:wklr-pseudocode}, and prunes that list in Step 2. 
	
	In Step 1, $\wklr_{m,T}(S)$ uses $O(m^2/\gamma)$ examples to produce the initial set $\Xset$.
	Let $x$ be a $v$-heavy-hitter. 
	The probability that $x \not \in \Xset$ by the end of Step 1 is at most 
	$$
	(1-\gamma/(20m))^{m^2/\gamma}	< \exp(-m/20).
	$$
	Union bounding over the (at most) $20m/\gamma$ $v_h$-heavy hitters, the probability that the set $\Xset$ does not initially contain all $v$-heavy hitters is negligible in $m$.
	\newcommand{\hhapproxdistance}{\ensuremath{\frac{\gamma^3}{16000 m^4 T^2}}} 
	
	In Step 2, $\wklr$ estimates $\widehat{p}_x$ for each $x \in \Xset$ using $O(m^{11}T^2/\gamma^5)$ examples. The probability that a sample of this size contains fewer than $O(m^9T^2/\gamma^4)$ instances of $x$, given that $x$ is a heavy-hitter, is negligible in $m$, by a Chernoff-Hoeffding bound. Given this many instances of $x$, the probability that the estimate $\widehat{p}_x$ has error greater than $O(\gamma^2/(m^4T))$ is again a negligible function in $m$ by a Chernoff-Hoeffding bound. 
	Recall $\wklr$ chooses $v_h$ uniformly at random from the interval $\hhintervalwidth$.
	The probability that $v_h$ is chosen to be within distance
	$O(\gamma^2/(m^4T))$ 
	of the probability of a specific
	$\gamma/(20m)$-heavy-hitter of $D^\B$ is therefore no more than $O(\gamma/(m^3T))$.
	Union bounding over the at most $20m/\gamma$ heavy hitters, we have the following. Let $S_0, S_1$ be samples of size $m_{\wklr}$ drawn from $D^{\B}$. Denote by $\Xset_0(v_h)$ and $\Xset_1(v_h)$ the sets of $v_h$-heavy-hitters estimated by $\wklr$ provided $\Xset_0$ and $\Xset_1$ respectively. Then
	$$\pr_{\substack{S_0, S_1\\v_h\sim \hhintervalwidth}}[\Xset_0(v_h) \neq \Xset_1(v_h)] \in O(1/(m^2T)).$$

	It remains to show that, with high probability, all non-$v$-heavy-hitters are not included in $\Xset$ after Step 2. 
	There are at most $O(m^2/\gamma)$ candidate heavy hitters drawn in step 1. 
	With all but negligible probability in $m$, $\wklr$ estimates $\widehat{p}_x$ for all candidate heavy hitters to within error $O(\gamma^2/(m^4T))$. Then, as above, the probability that $v_h$ is chosen to be within distance
	$O(\gamma^2/(m^4T))$ 
	of the probability of a non-$v_h$-heavy-hitter of $D^\B$ is no more than $O(\gamma/(m^3T))$, and union bounding over the $O(m^2/\gamma)$ candidates gives probability $O(1/(mT))$. 
 	Therefore with probability $1 - O(\tfrac{1}{mT})$, at the end of Step 2, $\Xset$ contains exactly the $v_h$-heavy-hitters of $D^{\B}$. 
\end{proof}

Next, we show the consistency of $\{y_x\}$, the labels given by $\wklr(S)$ to $x \in \Xset$.

\begin{lemma}[Reproducibility of $h$ on heavy-hitters]
	\label{lem:heavy-hitters-correctly-labeled}
	Let $D$ be a distribution over $\data$, and let $S_0$ and $S_1$ be samples of $m_\wklr$ examples drawn i.i.d. from $D$. Denote by $h^v_0$ and $h^v_1$ the output of $\wklr(S_0; v_h = v)$ and $\wklr(S_1; v_h = v)$ respectively. Let $\Xset_0(v)$ and $\Xset_1(v)$ denote the respective sets of $v$-heavy-hitters computed by $\wklr(S_0; v_h = v)$ and $\wklr(S_1; v_h = v)$. Then we have
	$$\pr_{\substack{S_0, S_1\\v \sim \hhintervalwidth}}[\Xset_0(v) \neq \Xset_1(v) \text{ or } \exists x \in \Xset_0(v) \text{ s.t. } h_0(x) \neq h_1(x)] \in O(1/(mT)). $$ 
\end{lemma}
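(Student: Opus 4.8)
The plan is to bootstrap off Lemma~\ref{lem:heavy-hitters-identified}. That lemma already gives that, with probability $1-O(1/(mT))$ over $S_0$, $S_1$ and the threshold $v_h$, each run of $\wklr$ computes the \emph{same} candidate set, namely the deterministic set of $v_h$-heavy-hitters of $D$; call this common set $\Xset$, and note $|\Xset| \le 20m/\gamma$ since every element of it carries $D$-mass at least $\gamma/(20m)$. After conditioning on this event, $h_0$ and $h_1$ can differ only through the labels $\{y_x\}$ they assign to elements of $\Xset$, so it remains to bound, by $O(1/(mT))$, the probability that the two runs disagree on $y_x$ for some $x \in \Xset$; combining this with the $O(1/(mT))$ failure probability above will prove the lemma.

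First I would reduce to a single heavy-hitter. Fix $x \in \Xset$ and set $p_1 \eqdef \Pr_{(x',y')\sim D}[\,y'=1 \mid x'=x\,]$. In Step~3 of $\wklr$, each run forms an empirical estimate $\widehat{p}_1^{(i)}$ of $p_1$ from a fresh block of examples with $x$-value $x$ (there are enough such examples since $x$ is a heavy-hitter, by a Chernoff--Hoeffding bound failing only with probability negligible in $m$), and then sets $y_x^{(i)}=1$ iff $\widehat{p}_1^{(i)} \ge v_y$, where $v_y$ is drawn uniformly from $[\tfrac12,\tfrac12+\tfrac{\gamma}{10m}]$ and is \emph{shared} by the two runs. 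A second Chernoff--Hoeffding bound gives $|\widehat{p}_1^{(i)}-p_1|\le\tau$ except with probability $\delta_0$, where $\tau$ and $\delta_0$ are governed by the block size.

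The key case split is on where $p_1$ sits relative to the interval $[\tfrac12,\tfrac12+\tfrac{\gamma}{10m}]$. If $p_1 \le \tfrac12-\tau$, both estimates fall below $\tfrac12 \le v_y$, so both labels equal $-1$; if $p_1 \ge \tfrac12+\tfrac{\gamma}{10m}+\tau$, both estimates exceed $\tfrac12+\tfrac{\gamma}{10m}\ge v_y$, so both labels equal $+1$; in either case the label is reproducible for \emph{every} $v_y$. Only when $p_1$ lies in the narrow band $(\tfrac12-\tau,\ \tfrac12+\tfrac{\gamma}{10m}+\tau)$ can the labels differ, and then only if $v_y$ falls strictly between $\widehat{p}_1^{(0)}$ and $\widehat{p}_1^{(1)}$; since (given concentration) those estimates are within $2\tau$ of each other, this happens with probability at most $2\tau/(\gamma/(10m)) = 20m\tau/\gamma$ over $v_y$. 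Hence the label-disagreement probability for a fixed $x \in \Xset$ is at most $20m\tau/\gamma + 2\delta_0$, and a union bound over the $\le 20m/\gamma$ heavy-hitters bounds the total by $\tfrac{20m}{\gamma}\big(\tfrac{20m\tau}{\gamma}+2\delta_0\big)$. This is $O(1/(mT))$ once the Step~3 block is large enough to force $\tau = O(\gamma^2/(m^3T))$ and $\delta_0 = \negl(m)$, which requires only a $\poly(n,1/(1-2\eta),1/\gamma)$-size block --- the same precision-versus-interval-width bookkeeping already done for the estimates $\widehat{p}_x$ in Step~2 of Lemma~\ref{lem:heavy-hitters-identified}.

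The step I expect to be the main obstacle is precisely this calibration: since the $v_y$-interval has width only $\Theta(\gamma/m)$, beating a union bound over $\Theta(m/\gamma)$ heavy-hitters forces each conditional probability $p_1(x)$ to be estimated to precision $\ll \gamma^2/m^2$, and one must check that the resulting (large but polynomial) sample demand is consistent with the sample-complexity accounting of $\wklr$ in Lemma~\ref{lem:wklr-sample-complexity}. I would also note that the randomization of $v_y$ is exactly what lets the band argument go through for an \emph{arbitrary} $D$: for a Massart $D$ the band $(\tfrac12-\tau,\tfrac12+\tfrac{\gamma}{10m}+\tau)$ contains no heavy-hitter (every conditional label bias is at least $2\alpha$), and reproducibility would be immediate without appealing to $v_y$.
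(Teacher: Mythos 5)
Your proposal is correct and follows essentially the same route as the paper's proof: condition on Lemma~\ref{lem:heavy-hitters-identified} giving both runs the same heavy-hitter set, use Chernoff--Hoeffding to concentrate each run's estimate of $\Pr[y=1\mid x]$ to within $\tau=O(\gamma^2/(m^3T))$, observe that a label disagreement requires the shared threshold $v_y$ to land in an $O(\tau)$-width window of its width-$\gamma/(10m)$ interval (probability $O(\gamma/(m^2T))$), and union bound over the at most $20m/\gamma$ heavy-hitters. The paper phrases the bad event as $v_y$ falling within $\tau$ of the \emph{true} conditional probability rather than between the two empirical estimates, but this is the same calculation, and your closing calibration check and the remark about the Massart case are consistent with the paper's accounting.
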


\begin{proof}
	
	By Lemma~\ref{lem:heavy-hitters-identified}, the probability that both  $\Xset_0(v)$ and $\Xset_1(v)$ are exactly the set of $v$-heavy-hitters of $D$ is at least $1 - O(1/mT)$, over the choice of $v, S_0,$ and $S_1$. 

	For each heavy-hitter $x$, $\wklr$ estimates the probability that $x$ has label $1$ in $D$ using $O(m^9T^2/\gamma^4)$ examples from $S$ (Step 3 of Algorithm~\ref{alg:wklr-pseudocode}). Given that $x\in \Xset$, the probability that this sample contains fewer than $O(m^7T^2/\gamma^4)$ instances of $x$ is negligible in $m$. By a Chernoff-Hoeffding bound, this estimate has error at most $O(\gamma^2/(m^3T))$ with all but negligible probability in $m$. 
	By an argument similar to the one of Lemma~\ref{lem:heavy-hitters-identified}, the probability that $v_y$ falls within $O(\gamma^2/(m^3T))$ of the true probability that $x$ is labeled $1$ in $D$ is $O(\gamma/(m^2T))$. Union bounding over the (at most) $20m/\gamma$ heavy-hitters proves the claim
	$$\pr_{\substack{S_0, S_1\\ v \sim \hhintervalwidth}}[\Xset_0(v) \neq \Xset_1(v) \text{ or } \exists x \in \Xset_0(v) \text{ s.t. } h_0(x) \neq h_1(x)] \in O(1/(mT)). $$ 
\end{proof}

Observing that $\wklr$ outputs the constant function $-1$ on all non-heavy-hitters, we have the following corollary. 
\begin{corollary}[Reproducibility of $h$]
	\label{cor:rep-of-h}
	Let $D$ be a distribution over $\data$, and let samples $S_0, S_1$ be two sets of $m_\wklr$ examples drawn i.i.d. from $D$. Let $h^v_0$ and $h^v_1$ denote the hypotheses output by $\wklr(S_0; v)$ and $\wklr(S_1; v)$ respectively. Then we have,  
	$$\pr_{S_0, S_1, v}[h^v_0 \neq h^v_1] \in O(1/(mT)).$$
\end{corollary}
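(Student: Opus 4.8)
The plan is to reduce the claim directly to Lemma~\ref{lem:heavy-hitters-correctly-labeled} by observing that the hypothesis returned by $\wklr$ is a deterministic function of the pair $(\Xset, \{y_x\}_{x \in \Xset})$ it computes. Indeed, by the final step of Algorithm~\ref{alg:wklr-pseudocode}, $h^v(x) = y_x$ whenever $x \in \Xset$ and $h^v(x) = -1$ otherwise, so two runs producing the same set $\Xset$ and the same labels on $\Xset$ produce the same hypothesis as a function on all of $\data$. First I would argue the contrapositive: if $h^v_0 \neq h^v_1$ as functions, pick any $x$ on which they disagree; $x$ cannot lie outside both $\Xset_0(v)$ and $\Xset_1(v)$, since there both hypotheses output $-1$. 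Hence either $x$ lies in exactly one of the two sets, in which case $\Xset_0(v) \neq \Xset_1(v)$, or $x$ lies in both, in which case $x \in \Xset_0(v)$ and $h_0(x) = y_x^0 \neq y_x^1 = h_1(x)$. In either case the bad event of Lemma~\ref{lem:heavy-hitters-correctly-labeled} occurs.

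Thus $\{h^v_0 \neq h^v_1\} \subseteq \{\Xset_0(v) \neq \Xset_1(v)\} \cup \{\exists x \in \Xset_0(v) : h_0(x) \neq h_1(x)\}$, and monotonicity of probability together with Lemma~\ref{lem:heavy-hitters-correctly-labeled} yields $\pr_{S_0, S_1, v}[h^v_0 \neq h^v_1] \in O(1/(mT))$, as claimed. There is essentially no obstacle here; the only subtlety worth flagging is the degenerate case in which $\Xset_0(v) \neq \Xset_1(v)$ yet the two hypotheses still coincide (e.g.\ a point in the symmetric difference happens to be labeled $-1$). This situation only shrinks the bad event, so since we are upper-bounding $\pr[h^v_0 \neq h^v_1]$ the one-sided set containment above is all that is needed.
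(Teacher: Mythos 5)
Your argument is correct and is exactly the paper's route: the paper derives the corollary from Lemma~\ref{lem:heavy-hitters-correctly-labeled} by the same observation that $\wklr$ outputs the constant $-1$ outside $\Xset$, so the hypothesis is determined by $(\Xset, \{y_x\})$. Your write-up merely makes the event containment explicit, which the paper leaves implicit.
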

We now use the reproducibility of $h$ and the pseudorandomness of $\{f_s\}$ to show that boosting $\wklr$ run with $\egr$ must also output the canonical $v$-hypothesis with high probability, unless $\pr_{x\sim D_x^{\B}}[x\not\in\Xset] < \gamma$.

\begin{lemma}[$\wklr$ does not distinguish between $\egh$ and $\egr$]
	\label{lem:reg-heg-equiv}
	Assume $\{f_s\}$ is a pseudorandom function family. 
	Let $\{\B_t\}_{t=1}^T$ be a sequence of sampling procedures constructed by the black-box boosting algorithm when boosting $\wklr$ for $T$ rounds. 
	Let $D^{\B_t}$ denote the distribution induced by $\B_t$ and the honest example generator $\egh$, and let $D^{\B_t}_{r}$ denote the distribution induced by $\B_t$ and the random example generator $\egr$. Let $S$ denote a sample of $m_{\wklr}$ examples drawn i.i.d. from $D^{\B_t}$, and let $S_r$ denote a sample of $m_{\wklr}$ examples drawn i.i.d. from  $ D_{r}^{\B_t}$. Let $h^v$ and $h^v_r$ denote the hypotheses output by $\wklr(S; v)$ and $\wklr(S_r; v)$ respectively. Then for all $t \in [T]$,
	$$\pr_{\substack{S, S_r\\ v }}[h^v_r \neq h^v ] \in O(1/(mT)).$$
\end{lemma}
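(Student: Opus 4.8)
The plan is to prove this by a reduction to pseudorandomness of $\{f_s\}$: if for some round $t$ the hypotheses $\wklr$ produces when fed an i.i.d.\ sample from $D^{\B_t}$ versus one from $D^{\B_t}_r$ disagreed with probability $\omega(1/(mT))$, we would obtain a polynomial-time algorithm distinguishing $f_s$ from a truly random $\eta'$-biased function. The first thing to note is that a distinguisher $\mathcal{A}$ can reconstruct the sampling procedure $\B_t$ with \emph{no} queries to its function oracle: since $\B_t$ is built only from the earlier hypotheses $h_1,\dots,h_{t-1}$, and by Lemma~\ref{lem:wklr-egr-efficiently-simulatable} the interaction of $\bbboost$ with $\wklr$ run through $\egr$ is simulatable in time $\poly(n,1/(1-2\eta),1/\gamma)$ without ever touching $f_s$, $\mathcal{A}$ can replay that interaction internally and extract $\B_t$. (The overall lower bound will then chain this lemma across rounds to conclude that the honest run produces the same $h_t$'s, hence the same $\B_{t+1}$; that step is not needed here.)

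\textbf{The distinguisher.} Given oracle access to $g\in\{f_s, F\}$ with $F\sim\F_{n,\eta'}$, $\mathcal{A}$ proceeds as follows. (i)~Internally reconstruct $\B_t$ as above. (ii)~Draw a sample $S_1$ of $m_\wklr$ examples by running $\B_t$ with its oracle replaced by the $g$-based simulator from the proof of Lemma~\ref{lem:wklr-egr-efficiently-simulatable}: pick $x$ uniformly, query $g(x)$, and then lay a fresh independent copy of the Massart noise on top (returning $(x,1)$ if $g(x)=1$, and otherwise $(x,1)$ with the appropriate probability $\rho\eta/(1-\eta')$ and $(x,-1)$ else). (iii)~Draw a second sample $S_2$ of $m_\wklr$ examples by running $\B_t$ with $\exorsim$ in place of its oracle, exactly as $\egr$ does. (iv)~Pick $v=(v_h,v_y)$ as $\wklr$ would and output $1$ iff $\wklr(S_1;v)\neq\wklr(S_2;v)$. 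By Lemmas~\ref{lem:wklr-sample-complexity} and~\ref{lem:wklr-runtime}, and since $\B_t$ and the $t-1\le T$ earlier rounds run in polynomial time, $\mathcal{A}$ runs in time $\poly(n,1/(1-2\eta),1/\gamma)$, so it is an admissible distinguisher.

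\textbf{Case analysis.} When $g=f_s$, the $g$-based simulator reproduces $\exor$ exactly except on the negligible-probability event that some $x$ is drawn twice across all the polynomially many oracle calls (using the standing assumption on $n$), so $S_1$ is within $\negl(n)$ total variation of $m_\wklr$ i.i.d.\ draws from $D^{\B_t}$, while $S_2$ is \emph{exactly} $m_\wklr$ i.i.d.\ draws from $D^{\B_t}_r$; hence $\Pr[\mathcal{A}^{f_s}\Rightarrow 1]$ equals the target quantity $P := \pr_{S,S_r,v}[h^v_r\neq h^v]$ up to an additive $\negl(n)$. When $g=F$, the key observation is that for every $x$ not queried before, $g(x)$ is a fresh coin ($1$ with probability $\eta'$), so over the randomness of $F$ together with the simulated noise, each pair $(x,y)$ returned by the $g$-based simulator has precisely the law of a pair produced by $\exorsim$; using once more that no $x$ repeats except with negligible probability, the joint law of $(S_1,S_2)$ in case $g=F$ is within $\negl(n)$ of a pair of independent $m_\wklr$-sized i.i.d.\ samples from $D^{\B_t}_r$. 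By the reproducibility corollary (Corollary~\ref{cor:rep-of-h}) applied to the distribution $D^{\B_t}_r$, this gives $\Pr[\mathcal{A}^{F}\Rightarrow 1]\le O(1/(mT))+\negl(n)$. Combining, the distinguishing advantage of $\mathcal{A}$ is at least $P-O(1/(mT))-\negl(n)$, which pseudorandomness forces to be negligible; hence $P\in O(1/(mT))$, as claimed (over a random seed $s$, which is all the lower bound needs).

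\textbf{Main obstacle.} The delicate point is the $g=F$ case: for a \emph{fixed} function $F$ the distributions $D^{\B_t}_F$ and $D^{\B_t}_r$ genuinely differ, and it is only after averaging over $F$ — exploiting that $\B_t$ makes only polynomially many oracle calls and, with all but negligible probability, never repeats an $x$ — that ``$g$ plus fresh noise'' becomes statistically indistinguishable from $\exorsim$. Carrying out this fresh-$x$ bookkeeping carefully, and ensuring that the additive $O(1/(mT))$ slack coming from reproducibility is incurred only once rather than accumulating over the various reductions, is where the work lies; everything else is a routine hybrid-style reduction resting on the already-established efficiency and reproducibility of $\wklr$ and $\egr$.
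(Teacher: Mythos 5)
Your proposal is correct and follows essentially the same route as the paper: a reduction to the pseudorandomness of $\{f_s\}$ in which a distinguisher simulates the boosting interaction up to round $t$, draws one sample honestly and one via $\exorsim$, runs $\wklr$ on both with the same threshold randomness $v$, and uses the fact that under a truly random $\eta'$-biased oracle both samples have the same law, so Corollary~\ref{cor:rep-of-h} bounds the disagreement by $O(1/(mT))$. The only small correction is your claim that $\B_t$ can be reconstructed \emph{without} touching the function oracle: the simulation of Lemma~\ref{lem:wklr-egr-efficiently-simulatable} does query the oracle for the booster's initial $m$ examples (only the weak-learner samples are oracle-free), but this does not affect the validity of your reduction.
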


\begin{proof}
	By Corollary~\ref{cor:rep-of-h}, we have that $\wklr(S)$ returns the canonical $v$-hypothesis $h^v$ for $D^{\B_t}$ with high probability over choice of $v$ and $S$. 
	Then if the claim does not hold, then it must be the case that there exists a round $t \in [T]$ such that, with probability $\omega(1/(mT))$ over choice of $v$, $S$, and $S_r$, we have ${\wklr(S; v) \neq \wklr(S_r; v)}$. Assuming this, we can construct the following distinguisher $\mathcal{A}$ against the pseudorandomness of $\{f_s\}$. 
	
	The distinguisher $\mathcal{A}$ executes the following procedure. It first chooses a round $t \in [T]$ uniformly at random, and simulates the interaction between the booster and $\wklr$ until round $t$. At round $t$, $\mathcal{A}$ draws a sample $S_0$ of $m_{\wklr}$ examples from $D^{\B_t}$ by simulating $\egh$. It then draws a sample $S_1$ of $m_{\wklr}$ examples by simulating $\egr$. It simulates $\wklr$ on both of these samples using the same choice of randomness $v$ for both simulations, and checks whether $\wklr(S_0; v) = \wklr(S_1; v)$. If not, it returns 1, and otherwise returns 0. 
	
	In the case that $\mathcal{A}$ is give oracle access to a random function $F$, both $S_0$ and $S_1$ are drawn from the same distribution, and so by Corollary~\ref{cor:rep-of-h}, $\wklr(S_0; v) = \wklr(S_1;v)$ with probability $1 - O(1/(mT))$ over the choice of $v$, and therefore $\mathcal{A}$ outputs 1 with probability $O(1/(mT))$.
	
	In the case that $\mathcal{A}$ is supplied a pseudorandom function $f_s$, by assumption there exists a round $t \in [T]$ at which ${\Pr_{\substack{S_0, S_1\\v}}[\wklr(S_0;v) \neq \wklr(S_1;v)] \in  \omega (1/(mT))}$. Therefore in this case, $\mathcal{A}$ outputs 1 with probability noticeably (in $n$) larger than in the random case, and so $\mathcal{A}$ is a distinguisher against the pseudorandomness of $f_s$. This is a contradiction, and therefore the claim holds. 	
	
\end{proof}

Informally, Lemma~\ref{lem:reg-heg-equiv} will allow us to construct distinguishing adversaries against the pseudorandomness of $f_s$ that make only $m$ queries of their function oracle. In the following lemmas, we will prove that $\wklr$ satisfies the definition of a Massart noise-tolerant weak learner when invoked on distributions constructed by the booster. That is, when $\wklr$ is given a sample from a Massart distribution generated by the boosting algorithm, it returns a weak hypothesis with advantage $\gamma$ with probability at least $2/3$. We will rely on appeals to the pseudorandomness of $f_s$ in these proofs, by showing that failure of $\wklr$ to return hypothesis with good advantage allows for the construction of distinguishers against the pseudorandomness of $f_s$. These distinguishers will simulate the boosting procedure, but it will be useful for our proofs to claim that the distinguishers can generate samples for $\wklr$ without making additional queries to their function oracles to generate labels for these samples. Lemma~\ref{lem:reg-heg-equiv} allows us to design distinguishers that use $\egr$ to generate samples for $\wklr$, rather than generating samples using $\egh$. Recall that $\egr$ makes no calls to the example oracle $\exor$, and simply generates labels randomly for examples drawn from the underlying marginal distribution $D_x$. Therefore we will assume that our distinguishers only query their function oracles for the purposes of simulating the first $m$ examples drawn by the booster. 

\paragraph{Advantage of $\wklr$.}

We will prove the following lemma by separately considering the advantage of weak hypotheses on heavy hitters of $D^{\B}$ and non-heavy hitters. 
\begin{restatable}[Advantage of $\wklr$]{lemma}{wklradvlem}
	\label{lem:wklr-adv}
	Let $D^{\B_t}$ denote the distribution induced by the sampling procedure $\B_t$ and $\egh$ at round $t \in [T]$ of boosting. Similarly, let $D_r^{\B_t}$ denote the distribution induced by $\B_t$ and $\egr$. Let $S_t$ denote a sample drawn i.i.d. from $D^{\B_t}_r$.
	Then for all $\poly(n, 1/(1 - 2\eta), 1/\gamma)$ rounds of boosting $\wklr$ with $\egr$, if $D^{\B}_t$ is Massart, then with probability $1 - O(1/(mT))$ over its internal randomness, $\wklr(S_t)$ outputs a hypothesis $h_t$ with advantage at least $\gamma$ against $D^{\B}_t$, except with negligible probability in $m$ over the choice of $\B_t$. 
\end{restatable}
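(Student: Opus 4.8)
The argument proceeds in three stages: reduce the randomized output $\wklr(S_t)$ to a fixed hypothesis depending only on the thresholds $v=(v_h,v_y)$; verify that this hypothesis labels the heavy-hitters of $D^{\B_t}$ by their true $f_s$-values; and lower-bound its advantage against $D^{\B_t}$ by a case split on heavy versus non-heavy hitters.

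For the first stage, Corollary~\ref{cor:rep-of-h} shows that for any distribution $D$, $\wklr$ run on a fresh i.i.d.\ sample from $D$ outputs the canonical $v$-hypothesis of $D$ with probability $1-O(1/(mT))$ over the sample and $v$, and Lemma~\ref{lem:reg-heg-equiv} shows (using pseudorandomness of $f_s$) that the canonical $v$-hypothesis of $D^{\B_t}_r$ equals that of $D^{\B_t}$ with probability $1-O(1/(mT))$ over $v$. Since $S_t\sim D^{\B_t}_r$, composing these shows that, except with probability $O(1/(mT))$, $\wklr(S_t)$ equals the canonical $v$-hypothesis $h^v$ of $D^{\B_t}$, so it suffices to prove $\adv^{D^{\B_t}}(h^v)\ge\gamma$ except with negligible probability over $\B_t$. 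Write $\Xh$ for the set of $v_h$-heavy-hitters of $D^{\B_t}$, so $h^v(x)=-1$ for $x\notin\Xh$, while $h^v(x)$ is the empirical majority label of $x$ under $D^{\B_t}$ for $x\in\Xh$. Since $D^{\B_t}$ is Massart with bound $1/2-\alpha$, for every $x$ we have $\E_{D^{\B_t}}[y\mid x]=(1-2\eta_t(x))f_s(x)$ with $1-2\eta_t(x)\ge2\alpha$, where $\eta_t(x)$ is the effective noise of $D^{\B_t}$ at $x$; because $v_y-\tfrac12\le\gamma/(10m)=\alpha/(200m)\ll\alpha$ and Step~3 of $\wklr$ estimates each candidate's label-$1$ probability to error $o(\alpha)$ (as in Lemma~\ref{lem:heavy-hitters-correctly-labeled}), it follows that $h^v(x)=f_s(x)$ for all $x\in\Xh$, except with probability $O(1/(mT))$.

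For the third stage, using $h^v(x)f_s(x)=\Ind[x\in\Xh]-f_s(x)\,\Ind[x\notin\Xh]$ we write
\[
\adv^{D^{\B_t}}(h^v)=\tfrac12\,\E_{(x,y)\sim D^{\B_t}}\!\Big[(1-2\eta_t(x))\big(\Ind[x\in\Xh]-f_s(x)\,\Ind[x\notin\Xh]\big)\Big].
\]
Splitting the expectation into the groups $\{x\in\Xh\}$, $\{x\notin\Xh,\,f_s(x)=-1\}$, $\{x\notin\Xh,\,f_s(x)=1\}$, the first contributes at least $\alpha\Pr_{D^{\B_t}}[x\in\Xh]$, the second at least $\tfrac12\Pr_{D^{\B_t}}[x\notin\Xh,\,f_s(x)=-1]-\Pr_{D^{\B_t}}[y=1,\,f_s(x)=-1]$ (expand $1-2\eta_t(x)$ and note $\sum_x\eta_t(x)D^{\B_t}(x)$ over this group equals $\Pr_{D^{\B_t}}[y=1,\,x\notin\Xh,\,f_s(x)=-1]$), and the third at least $-\tfrac12\Pr_{D^{\B_t}}[x\notin\Xh,\,f_s(x)=1]$. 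Hence it remains to show
\[
\alpha\,\Pr_{D^{\B_t}}[x\in\Xh]+\tfrac12\,\Pr_{D^{\B_t}}[x\notin\Xh,\,f_s(x)=-1]-\tfrac12\,\Pr_{D^{\B_t}}[x\notin\Xh,\,f_s(x)=1]-\Pr_{D^{\B_t}}[y=1,\,f_s(x)=-1]\ \ge\ \gamma.
\]
The last term is $O(\rho)$: any $(x,y)$ with $f_s(x)=-1$ and $y=1$ forces $x$ noisy; the noisy set is a uniformly random $\rho/(1-\eta')$-fraction of $f_s^{-1}(-1)$, which the efficient $\B$ cannot distinguish; and $\eta_t(x)\le1/2-\alpha$ on each noisy $x$; these give $\Pr_{D^{\B_t}}[y=1,\,f_s(x)=-1]=O(\rho)$ except with negligible probability over the noisy set. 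The key remaining claim is that the conditional minority probability $\Pr_{D^{\B_t}}[f_s(x)=1\mid x\notin\Xh]$ is at most $\eta'$ up to negligible error (and a lower-order term controlled by the sample sizes): passing to $\egr$ via Lemma~\ref{lem:reg-heg-equiv} makes labels independent of $f_s$, so on the non-heavy-hitter region (which carries no atom above $v_h$) the minority fraction concentrates around $\eta'$ by pseudorandomness; any minority input $\B$ concentrates mass $\ge v_h$ on becomes a heavy-hitter, handled in the first group; and the Massart hypothesis forbids $\B$ from up-weighting apparently-positive examples to evade this, since that same up-weighting raises the effective noise on the indistinguishable noisy negatives above $1/2-\alpha$. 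Plugging $\Pr_{D^{\B_t}}[x\notin\Xh,\,f_s(x)=1]\le(\eta'+\negl)\Pr_{D^{\B_t}}[x\notin\Xh]$ and $\Pr[x\in\Xh]+\Pr[x\notin\Xh]=1$ into the displayed inequality, and using $\tfrac12-\eta'\ge\tfrac{9\alpha}{10}$ (from $\eta'=\eta(1+\alpha/5)$ and $\eta<\tfrac12-\alpha$) together with $\gamma=\alpha/20$ and $\rho<\alpha/1000$, the bound $\ge\gamma$ follows by a short calculation. A union bound over the $O(1/(mT))$ and $\negl(m)$ failure events finishes the proof.

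\noindent\textbf{Main obstacle.} The crux is the bound on the minority mass $\Pr_{D^{\B_t}}[x\notin\Xh,\,f_s(x)=1]$. One must interlock three ingredients: pseudorandomness (through the $\egh/\egr$ equivalence of Lemma~\ref{lem:reg-heg-equiv} and the simulation of Lemma~\ref{lem:wklr-egr-efficiently-simulatable}) to stop the booster from locating the exponentially many minority inputs; the Massart hypothesis on $D^{\B_t}$, which forbids up-weighting apparently-positive examples because it would push the effective noise on the indistinguishable noisy negatives past $1/2-\alpha$; and careful bookkeeping so that any minority input the booster \emph{does} concentrate on is a heavy-hitter that $\wklr$ classifies correctly. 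Making all three combine within the $\Theta(\alpha\eta)$ slack forced by $\eta'=\eta(1+\alpha/5)$, reconciling the concentration error with $v_h=\gamma/(10m)$, and carrying everything out ``except with negligible probability over $\B_t$'' via an explicit distinguisher, is the technical heart of the lemma.
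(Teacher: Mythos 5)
Your architecture mirrors the paper's (reduce to the canonical $v$-hypothesis via Corollary~\ref{cor:rep-of-h} and Lemma~\ref{lem:reg-heg-equiv}, show heavy-hitters get their true labels, then split the advantage over $\Xset$ and its complement), but the non-heavy-hitter stage contains a step that is false as stated. You claim $\Pr_{D^{\B_t}}[y=1,\,f_s(x)=-1]=O(\rho)$ on the grounds that ``any $(x,y)$ with $f_s(x)=-1$ and $y=1$ forces $x$ noisy.'' That is true of the base distribution $D$, but not of $D^{\B_t}$: the sampling procedure $\B$ is an arbitrary efficient algorithm that may \emph{relabel} examples (the paper explicitly lists ``keeping $x$ and using additional noise to perturb the label $y$'' as a standard booster move). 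If $\B$ flips observed $-1$ labels to $+1$ with probability $q$, the Massart constraint on the indistinguishable noisy negatives only forces $q\le(1/2-\alpha-\eta)/(1-\eta)$, so $\Pr_{D^{\B_t}}[y=1,\,f_s(x)=-1]$ can be on the order of $q(1-\eta')$ --- e.g.\ close to $0.39$ when $\eta=0.01$, $\alpha=0.1$ --- which is nowhere near $O(\rho)$. Your displayed inequality happens to still close if one replaces the $O(\rho)$ bound by the correct cap on $q$, but deriving that cap requires exactly the noisy-tripwire/indistinguishability argument you relegate to the ``main obstacle'' paragraph; and your other key input, $\Pr_{D^{\B_t}}[x\notin\Xh,\,f_s(x)=1]\le(\eta'+\negl)\Pr[x\notin\Xh]$, is likewise only asserted (and needs an additive $O(\gamma)$ correction for the up-to-$m$ observed positives that $\B$ can spread just below the threshold $v_h$). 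So the two quantities on which your calculation hinges are, respectively, wrong and unproved.

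The paper avoids both problems by never decomposing the non-heavy-hitter advantage according to $f_s$. In Lemma~\ref{lem:wklr-adv-nhh} it works directly with the observed-label marginal: either $\Pr_{D^{\B_t}}[y=1\mid x\notin\Xset]\le 1/2-\gamma$, in which case the constant $-1$ hypothesis already has advantage $\gamma$ there, or $\Pr[x\notin\Xset]<\gamma$, or else Massartness forces at least $n$ non-heavy-hitters to satisfy $D^{\B_t}(x,1)>D_x^{\B_t}(x)(1/2-\alpha)$ and hence $f_s(x)=1$; a single distinguisher then queries its function oracle on those points and wins against a truly $\eta'$-biased random function with probability $1-(\eta'/(\eta'+(1-\eta')\rho\eta))^{n}$. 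Combined with the heavy-hitter advantage of Lemma~\ref{lem:wklr-adv-hh}, the convex-combination identity closes the proof in three lines. I would recommend either adopting that observed-label route, or, if you keep the $f_s$-based decomposition, replacing the $O(\rho)$ claim with the correct Massart-induced cap on the relabeling rate and supplying the distinguisher that bounds the minority mass among non-heavy-hitters.
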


Recall that $n$ is chosen to be a polynomial in $1/(1-2\eta)$ and $1/\gamma$, and $D_x$ is the uniform distribution over $\{0,1\}^n$. By birthday-paradox-style arguments, with all but negligible probability in $n$, no $x \in \data$ is output more than once by $\exor$ throughout boosting. Henceforth, we assume no $x \in \data$ is output more than once by $\exor$.

\begin{lemma}[$\wklr$ advantage against heavy-hitters of $D^{\B}$]
	\label{lem:wklr-adv-hh}
	Let $D^{\B_t}$ be the distribution induced by the sampling procedure $\B_t$ at round $t$. Similarly, let $D_r^{\B_t}$ denote the distribution induced by $\B_t$ and $\egr$. Let $S_t$ denote a sample drawn i.i.d. from $D^{\B_t}_r$, and let $h_t$ be the hypothesis output by $\wklr(S_t)$. Then for all $\poly(n, 1/(1 - 2\eta), 1/\gamma)$ rounds of boosting $\wklr$ with $\egr$, either
	\begin{enumerate}
		\item $\Pr \left[\tfrac{1}{2}\E_{(x,y) \sim D^{\B_t}}[h_t(x)y \mid x \in \Xset] \geq \alpha \right] \in 1 - O(1/(mT))$
		\item or $D^{\B_t}$ is not Massart.
	\end{enumerate}
\end{lemma}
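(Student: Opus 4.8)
The plan is to assume condition (2) fails --- that is, $D^{\B_t}$ is a Massart distribution with noise bound $1/2-\alpha$, the regime in which $\wklr$ is contractually obligated to help --- and then establish condition (1). Write $\eta^{\B_t}(\cdot)$ and $f^{\B_t}$ for the noise function and target of $D^{\B_t}$, so that $\E_{(x',y)\sim D^{\B_t}}[y\mid x'=x] = f^{\B_t}(x)(1-2\eta^{\B_t}(x))$, hence $|\E_{D^{\B_t}}[y\mid x]|\ge 2\alpha$ with sign $f^{\B_t}(x)$ for every $x$ in the support of $D^{\B_t}$. Thus it suffices to show that, except with probability $O(1/(mT))$ over $S_t$ and the thresholds $v_h,v_y$, the hypothesis $h_t=\wklr(S_t)$ satisfies $h_t(x)=f^{\B_t}(x)$ for every $x\in\Xset$ lying in the support of $D^{\B_t}$: summing $h_t(x)\,\E_{D^{\B_t}}[y\mid x]\ge 2\alpha$ against the conditional law of $x$ given $x\in\Xset$ (heavy-hitters of $D_r^{\B_t}$ outside the support of $D^{\B_t}$ contribute nothing, and the statement is vacuous if $\Xset$ carries no $D^{\B_t}$-mass) then gives $\tfrac12\E_{D^{\B_t}}[h_t(x)y\mid x\in\Xset]\ge\alpha$.

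First I would pin down $\Xset$ and the labels it receives. By Lemma~\ref{lem:heavy-hitters-identified}, with probability $1-O(1/(mT))$ the set $\Xset$ output by $\wklr(S_t)$ is exactly the set of $v_h$-heavy-hitters of $D_r^{\B_t}$; condition on this. For each such $x$, heaviness guarantees (Chernoff) that the fresh block $S_x$ of $m^2/(20\gamma^3)$ examples used in Step~3 of Algorithm~\ref{alg:wklr-pseudocode} contains $\Omega(m/\gamma^2)$ copies of $x$ except with probability negligible in $m$, so $\widehat p_1$ estimates $\Pr_{(x',y)\sim D_r^{\B_t}}[y=1\mid x'=x]$ to within $O(\gamma/m)$; combining with the random choice $v_y\in[\tfrac12,\tfrac12+\tfrac{\gamma}{10m}]$ and a union bound over the at most $20m/\gamma$ heavy-hitters --- exactly as in Lemma~\ref{lem:heavy-hitters-correctly-labeled} --- this gives, except with probability $O(1/(mT))$, that $h_t(x)=\sgn(\E_{(x',y)\sim D_r^{\B_t}}[y\mid x'=x])$ for every $x\in\Xset$ whose $D_r^{\B_t}$-conditional bias is not within $O(\gamma/m)$ of $1/2$.

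The crux is to pass from the law $D_r^{\B_t}$ that $\wklr$ actually samples to the law $D^{\B_t}$ against which advantage is measured, and this is where pseudorandomness enters. Once the booster's coins and its initial $m$ examples are fixed, $\B_t$ is a fixed polynomial-time algorithm, and $D^{\B_t}$, $D_r^{\B_t}$ are the distributions it induces with example oracle $\exor$ and $\exorsim$ respectively --- oracles differing only in that $\exor$'s label is a noisy evaluation of $f_s$ while $\exorsim$'s is independent of $x$. I would show that, except with negligible probability over the booster's randomness, every $v_h$-heavy-hitter $x$ of $D_r^{\B_t}$ is also a heavy-hitter of $D^{\B_t}$ and satisfies $|\Pr_{D^{\B_t}}[y=1\mid x]-\Pr_{D_r^{\B_t}}[y=1\mid x]|=o(\alpha)$; with the Massart margin this forces the two conditional biases onto the same side of $1/2$ and $\Omega(\alpha)$ away from it (so the $O(\gamma/m)$ caveat above is harmless, since $\gamma=\alpha/20$), hence $\sgn(\E_{D_r^{\B_t}}[y\mid x]) = f^{\B_t}(x)$, which with Paragraph~2 gives $h_t(x)=f^{\B_t}(x)$ as required. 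If this failed, a distinguisher $\mathcal A$ for $\{f_s\}$ would, given a function oracle $\Phi$: simulate the boosting interaction up to round $t$ with $\egr$ (polynomial time, by Lemma~\ref{lem:wklr-egr-efficiently-simulatable}) to obtain $\B_t$; enumerate the polynomially many candidate heavy-hitters (the $x$-values in the booster's initial sample, in $\B_t$'s own internal draws, and in the supports of $h_1,\dots,h_{t-1}$ --- which provably contain every heavy-hitter of either distribution, since $\B_t$ runs in polynomial time over the uniform marginal on $\{0,1\}^n$); for each candidate, empirically estimate $\Pr_{D^{\B_t}}[y=1\mid x]$ by running $\B_t^{\exor}$ with $\exor$ built from $\Phi$ (polynomially many $\Phi$-queries) and $\Pr_{D_r^{\B_t}}[y=1\mid x]$ by running $\B_t^{\exorsim}$ (no $\Phi$-queries); and output $1$ iff some candidate exhibits a discrepancy above the threshold. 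When $\Phi=f_s$ this happens with probability $1/\poly(n)$ by assumption, whereas when $\Phi$ is a random $\eta'$-biased function the no-collision assumption (no $x$ is output twice by $\exor$) makes the random noise pattern $\eta(\cdot)$ on $\Xnoise$ independent of everything $\mathcal A$ observes, so $\exor$ and $\exorsim$ are indistinguishable on the enumerated points and $\mathcal A$ outputs $1$ only with negligible probability --- contradicting pseudorandomness.

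I expect the main obstacle to be making this last paragraph airtight: precisely defining the polynomial-size set of candidate heavy-hitters and proving it captures every heavy-hitter of both $D^{\B_t}$ and $D_r^{\B_t}$, and carefully arguing that for a random $\eta'$-biased function, conditioned on the polynomially many $x$-values $\mathcal A$ ever sees, replacing $\exor$ by $\exorsim$ leaves the distribution of all observed labels unchanged --- this is exactly where the randomness of the noise set $\Xnoise$ and the no-collision assumption do their work. The remaining ingredients (the Chernoff estimates of Paragraph~2, and the reduction of $|\E_{D^{\B_t}}[y\mid x]|$ to the Massart margin) are routine and mirror Lemmas~\ref{lem:heavy-hitters-identified} and~\ref{lem:heavy-hitters-correctly-labeled}.
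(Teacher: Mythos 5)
Your overall strategy is sound and your second half coincides with the paper's: assume $D^{\B_t}$ is Massart, note that the noise bound $1/2-\alpha$ forces $\Pr_{(x,y)\sim D^{\B_t}}[y=f_s(x)\mid x]\geq 1/2+\alpha$, and conclude that the majority-vote labels assigned to heavy-hitters in Step~3 agree with $f_s$ whp, which yields conditional advantage at least $\alpha$ on $\Xset$. Where you diverge is the transfer from the distribution $\wklr$ actually samples ($D_r^{\B_t}$, via $\egr$) to the distribution against which advantage is measured ($D^{\B_t}$). The paper handles this in one line by invoking Lemma~\ref{lem:reg-heg-equiv}: with probability $1-O(1/(mT))$ over $v$ and the samples, $\wklr$ outputs the \emph{same hypothesis} on a sample from $D_r^{\B_t}$ as it would on a sample from $D^{\B_t}$, so one may simply analyze $\wklr$ run on a genuine $D^{\B_t}$-sample, where the Massart margin applies directly to the empirical label counts. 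You instead set out to prove a strictly stronger, pointwise statement --- that every heavy-hitter of $D_r^{\B_t}$ is a heavy-hitter of $D^{\B_t}$ with conditional bias matching to $o(\alpha)$ --- via a bespoke distinguisher, and you correctly flag that this is the hard part. That step is not needed, and as sketched it has a real hole: your proposed candidate set (the booster's initial sample, $\B_t$'s internal draws, and the supports of $h_1,\dots,h_{t-1}$) does not obviously contain all atoms of $D^{\B_t}$ or $D_r^{\B_t}$, since $\B_t$ may output a deterministically computed $x$-value (e.g., a fixed string) that appears in none of these; you would have to enumerate candidates by repeatedly running $\B_t$, as $\wklr$ itself does in Step~1, and then run a careful hybrid with randomized thresholds to avoid boundary effects. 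All of that work is exactly what Lemmas~\ref{lem:heavy-hitters-identified}--\ref{lem:reg-heg-equiv} already package at the level of the output hypothesis, so the cleaner completion of your argument is to cite Lemma~\ref{lem:reg-heg-equiv} for the transfer and keep only your Massart-margin calculation.
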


\begin{proof}
	From reproducibility of $h_t$ (Lemma~\ref{lem:reg-heg-equiv}), we have that with probability $1 - O(1/(mT))$, $\wklr$ outputs the same hypothesis that it would have had it been given a sample from $D^{\B_t}$. For the remainder of the proof then, we will analyze the behavior of $\wklr$ given such a sample from $D^{\B_t}$, and show that it must have good advantage against the heavy-hitters of $D^{\B_t}$. 
	
	Suppose the second case does not hold, and therefore $D^{\B_t}$ is Massart. To compute $y_x$ for each $x\in \Xset$, $\wklr$ uses $m^2T/\gamma^3$ examples from $D_r^{\B_t}$. Because $x \in \Xset$, $D_x^{\B_t}(x) \geq \gamma/(40m)$ with high probability, and taking $\gamma =\alpha/20$, we have that at least $4m/\alpha^2$ instances of $x$ occur in $S_x$ (Step 3 of Algorithm~\ref{alg:wklr-pseudocode}) with all but negligible probability in $m$. The majority label of these $4m/\alpha^2$ examples is then taken to be the prediction of $h_t$ on $x$, which will agree with $f(x)$ with all but negligible probability in $m$, because we have assumed $D^{\B_t}$ is Massart, and so 
	$$\Pr_{(x,y)\sim D^{\B_t}}[y = f(x) \mid x \in \Xset] \geq 1/2 + \alpha.$$ 
	It then follows that 
	$$\Pr \left[\tfrac{1}{2}\E_{(x,y) \sim D^{\B_t}}[h_t(x)y \mid x \in \Xset] < \alpha \right] \leq \negl(m)$$ or $D^{\B_t}$ is not Massart, when $h_t$ is the hypothesis output by $\wklr$ given a sample $S$ from $D^{\B_t}$. Applying Lemma~\ref{lem:reg-heg-equiv} allows us to conclude that $$\Pr \left[\tfrac{1}{2}\E_{(x,y) \sim D^{\B_t}}[h_t(x)y \mid x \in \Xset] < \alpha \right] \leq O(1/(mT))$$ or $D^{\B_t}$ is not Massart, when $h_t \gets \wklr(S_t)$. 
\end{proof}

\begin{lemma}[$\wklr$ advantage against non-heavy hitters of $D^{\B}$]
	\label{lem:wklr-adv-nhh}
	Let $D^{\B_t}$ be the distribution induced by the sampling procedure $\B_t$ at round $t$. Similarly, let $D_r^{\B_t}$ denote the distribution induced by $\B_t$ and $\egr$. Let $S_t$ denote a sample drawn i.i.d. from $D^{\B_t}_r$, and let $h_t$ be the hypothesis output by $\wklr(S_t)$. Then for all $\poly(n, 1/(1 - 2\eta), 1/\gamma)$ rounds of boosting $\wklr$ with $\egr$, with all but negligible probability in $m$ over choice of $\B_t$, either
	\begin{enumerate}
		\item $\pr_{S_t, v} \left[\tfrac{1}{2}\E_{(x,y) \sim D^{\B_t}}[h_t(x)y \mid x \not\in \Xset] < \gamma \right] < 1/\poly(n)$
		\item $\pr_{(x,y)\sim D^{\B_t}} [x \not \in \Xset] < \gamma$ 
		\item or $D^{\B_t}$ is not Massart
	\end{enumerate}
\end{lemma}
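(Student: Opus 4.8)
The plan is to show that once alternatives (2) and (3) are ruled out — i.e.\ $D^{\B_t}$ is Massart with noise bound $1/2-\alpha$ and $\Pr_{D^{\B_t}}[x\notin\Xset]\ge\gamma$ — the non‑heavy region of $D^{\B_t}$ is forced to be biased toward $-1$ by at least $2\gamma$, so the label $-1$ that $\wklr$ assigns off $\Xset$ has advantage $\ge\gamma$ there; the $1/\poly(n)$ slack in alternative (1) absorbs $\wklr$'s estimation error and the negligible errors of the pseudorandomness reductions.

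First I would reduce to the canonical hypothesis. By Lemma~\ref{lem:reg-heg-equiv} and Corollary~\ref{cor:rep-of-h}, with probability $1-O(1/(mT))\ge 1-1/\poly(n)$ over $S_t$ and $v$ the hypothesis $h_t=\wklr(S_t;v)$ is the canonical $v$-hypothesis of $D^{\B_t}$, which is the constant $-1$ on $\X\setminus\Xset$; moreover, by Lemma~\ref{lem:heavy-hitters-identified}, $\Xset$ is exactly the set of $v_h$-heavy-hitters of $D^{\B_t}$, so every non-heavy $x$ has $D^{\B_t}$-mass at most $v_h\le\gamma/(10m)$. Hence it suffices to prove (up to $1/\poly(n)$ failure) that $\Pr_{(x,y)\sim D^{\B_t}}[y=+1\mid x\notin\Xset]\le 1/2-\gamma$, since then $\tfrac12\E_{D^{\B_t}}[h_t(x)y\mid x\notin\Xset]=\tfrac12-\Pr_{D^{\B_t}}[y=+1\mid x\notin\Xset]\ge\gamma$.

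The structural heart is next. If $D^{\B_t}$ is Massart then for every $x\in\supp(D^{\B_t}_x)$ with $f_s(x)=-1$ we have $\Pr_{D^{\B_t}}[y=+1\mid x]\le 1/2-\alpha$, so calling $x$ \emph{over-confident} when $\Pr_{D^{\B_t}}[y=+1\mid x]>1/2-\alpha$ forces every over-confident point to satisfy $f_s(x)=+1$. I would then argue $\B_t$ can only create over-confident points by betting on examples it has observed with label $+1$: since $\eta(x)=0$ whenever $f_s(x)=+1$, and $\exor$ returns no repeated $x$ (the birthday-paradox assumption), the $+1$-labeled examples $\B_t$ sees are genuine positives together with a $\Theta(\rho)$-fraction of noise-flipped negatives, and $\B_t$ cannot separate the two. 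This indistinguishability is made precise by a distinguisher that — using Lemma~\ref{lem:wklr-egr-efficiently-simulatable} to efficiently simulate boosting $\wklr$ with $\egr$ and Lemma~\ref{lem:reg-heg-equiv} to guarantee the simulated hypotheses agree with the real ones — runs the first $t$ rounds and tests whether the set of observed-$+1$ points that $\B_t$ makes over-confident is more correlated with the noisy set than an $\eta'$-biased random function would produce; noticeable correlation breaks pseudorandomness. Hence if $\B_t$ makes $N$ distinct observed-$+1$ points over-confident, then except with negligible probability plus $(1-\Theta(\rho))^N$, at least one of them is a noise-flipped negative — a point with $f_s(x)=-1$ that is over-confident, witnessing that $D^{\B_t}$ is \emph{not} Massart. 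So, conditioned on $D^{\B_t}$ being Massart, either $N$ exceeds $\approx\log^2 n/\rho$, in which case alternative (3) holds except with negligible probability, or $N = O(\log^2 n/\rho)$; in the latter case, since each over-confident \emph{non-heavy} point carries mass $<\gamma/(10m)$, the total $D^{\B_t}$-mass of over-confident non-heavy points is $O\!\big(\gamma\log^2 n/(m\rho)\big)$, which — taking the adversary's $\rho=\Theta(\alpha)$ and recalling that $m$ is a large polynomial in $1/\gamma$ — is at most a vanishing fraction of $\alpha-\gamma=19\alpha/20$.

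To finish, every non-heavy, non-over-confident $x$ has $\Pr_{D^{\B_t}}[y=+1\mid x]\le 1/2-\alpha$, so with $\Pr_{D^{\B_t}}[x\notin\Xset]\ge\gamma$,
\[
\Pr_{D^{\B_t}}[y=+1\mid x\notin\Xset]\;\le\;(1/2-\alpha)\;+\;\frac{O(\gamma\log^2 n/(m\rho))}{\gamma}\;\le\;1/2-\gamma,
\]
which, combined with the reproducibility bound of the second step and the negligible failure of the simulation/distinguisher steps, yields $\pr_{S_t,v}[\tfrac12\E_{D^{\B_t}}[h_t(x)y\mid x\notin\Xset]<\gamma]<1/\poly(n)$. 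I expect the main obstacle to be exactly the pseudorandomness reduction in the structural step — formalizing that $\B_t$'s choice of which observed-$+1$ examples to treat over-confidently is, up to negligible bias, uncorrelated with which of them are noise-flipped negatives — together with the quantitative bookkeeping that drives the leftover over-confident non-heavy mass below $\alpha-\gamma$; this is where the specific choices $\gamma=\alpha/20$, $\eta'=\eta(1+\alpha/5)$, $1/\poly(n)<\rho<\alpha/1000$, and $n$ polynomially large in $1/(1-2\eta)$ and $1/\gamma$ all get used.
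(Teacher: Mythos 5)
Your proposal follows the same essential route as the paper: reduce to the canonical constant-$(-1)$ hypothesis off $\Xset$ via reproducibility (Lemma~\ref{lem:reg-heg-equiv}, Corollary~\ref{cor:rep-of-h}), observe that the Massart condition forces $f_s(x)=+1$ on every ``over-confident'' point with $\Pr_{D^{\B_t}}[y=1\mid x]>1/2-\alpha$, and then use a distinguisher that queries the function oracle on over-confident points to argue, via the pseudorandomness of $f_s$ and the indistinguishability of noise-flipped negatives from genuine positives among observed-$+1$ examples, that the booster cannot make many points over-confident while staying Massart. The difference is purely in the bookkeeping. The paper assumes the negation of conditions (1) and (2), deduces that an $\Omega(\alpha)$ \emph{fraction} of $D_x^{\B_t}$ is over-confident, and therefore finds many over-confident examples inside the weak learner's (very large, $\Omega(m^{13}T^2/\gamma^5)$-size) sample to feed the distinguisher. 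You instead bound the absolute \emph{count} $N$ of over-confident points by roughly $\log^2 n/\rho$ and then convert to mass using the per-point cap $\gamma/(10m)$ on non-heavy-hitters.

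The place where your reorganization costs you is the $\rho$-dependence: your over-confident mass bound $O(\gamma\log^2 n/(m\rho))$ is only a vanishing fraction of $\gamma(\alpha-\gamma)$ when $m\rho\gg\log^2 n/\alpha$, and you patch this by fixing $\rho=\Theta(\alpha)$. The construction in Section~\ref{ssec:lower-bound-setting-parameters} allows any $\rho$ with $1/\poly(n)<\rho<\alpha/1000$, and the point of the theorem is that the lower bound persists when $\opt=\rho\eta$ is polynomially small; your proof as written establishes the lemma only for the upper end of that range (which still suffices for a version of Theorem~\ref{thm:lower-bound-simple}, since $\alpha\eta/2000\ll\eta$ for small $\alpha$, but is strictly narrower than the statement). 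The paper avoids tying the number of testable over-confident points to $m$ by harvesting them from the weak learner's sample, whose size can be taken to dominate $1/\rho$. Two smaller points: your count $N$ should also cover over-confident points that $\B_t$ manufactures without observing them at all (for a random function such a point is a preimage of $-1$ with probability $1-\eta'$, so the same counting applies, but it is a separate case from the observed-$+1$ one you analyze); and your dichotomy ``either $N$ is large, in which case (3) holds w.h.p., or $N$ is small'' should be phrased as ``the event that $N$ is large \emph{and} $D^{\B_t}$ is Massart has negligible probability,'' which is what the distinguisher actually delivers.
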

\begin{proof}
	  
	Suppose that the first two conditions fail, implying that there exists a round $t$ of boosting for which the advantage of $h_t$ on the non-heavy hitters of $D^{\B_t}$ is less than $\gamma$, and that this will noticeably impact the overall advantage. Because $h_t$ takes a constant value $-1$ on all non-heavy hitters, it must then be the case that $$\Pr_{(x,y)\sim D^{\B_t}}[y = 1 \mid x \not\in \Xset ] > 1/2 - \gamma.$$ 
	Since we are considering the advantage only on examples such that $x\not\in\Xset$, then $D^{\B_t}(x,y) < \gamma/(10m)$ for all these examples. Furthermore, since we have assumed $\sum_{(x,y) : x \not \in \Xset} D^{\B_t}(x,y) \geq \gamma$, there must be at least $5m$ non-heavy-hitter examples $x$ such that $D^{\B_t}(x,1) > D^{\B_t}(x,-1)$ in order for $D^{\B_t}$ to satisfy $\Pr_{(x,y)\sim D^{\B_t}}[y = 1 \mid x \not\in \Xset ] > 1/2 - \gamma.$ 
	Then for $D^{\B_t}$ to be Massart, it must hold that $f(x) = 1$ for every example $x$ such that $D^{\B_t}(x,1) > D^{\B_t}(x,-1)$. However, if this is true with non-negligible probability in $n$, then we can construct the following distinguisher against $f_s$, which we denote by $\mathcal{A}$. 
	
	The distinguisher $\mathcal{A}$ simulates the boosting procedure run with $\wklr$ and $\egr$, as described in Lemma~\ref{lem:wklr-egr-efficiently-simulatable} up until round $t$, chosen uniformly at random from $[1,T]$. Once the boosting procedure reaches round $t$, $\mathcal{A}$ simulates the $t$th round of boosting and then queries its function oracle on all examples from the sample of the weak learner at that round that satisfy $D^{\B_t}(x,1) > D_x^{\B_t}(x)(1/2 - \alpha)$. If $f(x) = 1$ for all these examples, $\mathcal{A}$ outputs 1, and otherwise outputs 0. 
	
	To lower bound the advantage of our distinguisher, we will first show that there must be a significant number of examples drawn by $\wklr$ in round $t$ that satisfy $D^{\B_t}(x,1) > D_x^{\B_t}(x)(1/2 - \alpha)$. We begin by lower bounding the probability that this condition holds for a single non-heavy hitter example. 
	
	\begin{align*}
	\pr_{x \sim D_x^{\B_t}}[D^{\B_t}(x,1) > D_x^{\B_t}(x)(1/2 - \alpha)] 
	& = 1 - \pr_{x \sim D_x^{\B_t}}[D^{\B_t}(x,1) \leq D_x^{\B_t}(x)(1/2 - \alpha)] \\
	& = 1 - \pr_{x \sim D_x^{\B_t}}[D^{\B_t}(x,-1) \geq D_x^{\B_t}(x)(1/2 + \alpha)] \\
	& \geq 1 - \frac{1 + 2\gamma}{1 + 2\alpha} \\
	& = \frac{2(\alpha - \gamma)}{1 + 2\alpha} \\
	& \geq \alpha - \gamma \\
	& > \alpha/2,
	\end{align*}
	where the third line follows from the assumption that $\pr_{(x,y)\sim D^{\B_t}}[y=-1 \mid x \not\in\Xset] < 1/2 + \gamma$, and the last line follows from taking $\gamma = \alpha/20$. 
	Then because $\wklr$ has a sample of size $O(m^{13}T^2/\gamma^5)$, we have that the probability that fewer than $n$ of them satisfy $D^{\B_t}(x,1) > D_x^{\B_t}(x)(1/2 - \alpha)$ must be negligible in $m$ by the Chernoff-Hoeffding inequality.  
	
	We now proceed to bound the distinguishing advantage of $\mathcal{A}$, beginning with $\pr_{F \sim \mathcal{F}_{n,\eta'}}[\mathcal{A}^{F} \Rightarrow 1] $. In the case that $f$ is a random function, the boosting procedure can correctly identify an $x$ such that $f(x) = 1$ with probability no greater than $\frac{\eta'}{\eta' + (1 - \eta')\rho\eta}$. This follows immediately from taking the largest of the following conditional probabilities:
	\begin{align*}
	&\pr_{(x,y)\sim D^{\B_t}}[f(x) = 1 \mid y = 1] = \frac{\eta'}{\eta' + (1 - \eta')\rho\eta} \\
	&\pr_{(x,y)\sim D^{\B_t}}[f(x) = 1 \mid y = -1] = 0 \\
	&\pr_{(x,y)\sim D^{\B_t}}[f(x) = 1 \mid y \text{ unknown }] = \eta'.
	\end{align*}
	So if $f$ is a truly random function then the boosting procedure has probability no more than $(\frac{\eta'}{\eta' + (1-\eta')\rho\eta})^{n} $ of correctly identifying at least $n$ non-heavy hitter preimages of 1 under $f$. We have just shown that with all but negligible probability, $\wklr$ draws at least $n$ examples satisfying $D^{\B_t}(x,1) > D_x^{\B_t}(x)(1/2 - \alpha)$, and $\mathcal{A}$ returns 1 only if all of these examples are preimages of 1 under $f$. Therefore 
	$$\pr_{F \sim \mathcal{F}_{n,\eta'}}[\mathcal{A}^{F} \Rightarrow 1] \leq \left(\frac{\eta'}{\eta' + (1-\eta')\rho\eta}\right)^{n} + \negl(n), $$
	where the additive $\negl(n)$ term comes from the probability that fewer than $n$ qualifying examples were drawn by $\wklr$ in that round. 
	
	We now consider the case that $\mathcal{A}$ is provided $f_s$ as its oracle. Towards contradiction we have assumed that there exists some round $t$ at which, with probability $p$ that is non-negligible in $m$, the booster produces a Massart distribution $D^{\B_t}$ for which ${\Pr_{(x,y)\sim D^{\B_t}}[y = 1 \mid x \not\in \Xset] > 1/2 - \gamma  }$. Therefore with probability $1/T$ the distinguisher $\mathcal{A}$ will halt its simulation at this round, and so with probability $p/T$ will produce such a distribution. Then with all but negligible probability, it will draw $n$ examples such that $D^{\B_t}(x,1) \geq D_x^{\B_t}(x)(1/2 - \alpha)$. Since the distribution is Massart, all of these examples must satisfy $f(x) = 1$, and so we have
	$$\pr_{s \sim \{0,1\}^n}[\mathcal{A}^{f_s} \Rightarrow 1] = p/T - \negl(m), $$
	which is non-negligible in $m$, and therefore $n$. 
	Therefore $\mathcal{A}$ has distinguishing advantage
	\begin{align*}
	\pr_{s \sim \{0,1\}^n}[\mathcal{A}^{f_s} \Rightarrow 1] - \pr_{F \sim \mathcal{F}_{n,\eta'}}[\mathcal{A}^F \Rightarrow 1]  > \negl(n),
	\end{align*}
	which contradicts pseudorandomness of $\mathcal{F}_{n,\eta'}$. Therefore it must be the case that the boosting procedure only has negligible probability (in $m$) of generating a Massart distribution at any round that has at least $\gamma$ probability mass assigned to non-heavy hitters, and for which the constant function $-1$ does not have advantage at least $\gamma$ against non-heavy-hitters of $D^{\B_t}$. 
	
\end{proof}

We can now combine Lemma~\ref{lem:wklr-adv-hh} and Lemma~\ref{lem:wklr-adv-nhh} to show that $\wklr$, given a sample generated by $\egr$, will output a hypothesis with good advantage against $D^{\B_t}$.

\wklradvlem*

\begin{proof}
	
	The advantage of $h_t$ against $D^{\B_t}$ is $\tfrac{1}{2}\E_{(x,y) \sim D^{\B_t}}[yh(x)] $ where 
	
	\begin{align*}
	\E_{(x,y) \sim D^{\B_t}}[yh(x)] 
	&= \E_{(x,y) \sim D^{\B_t}}[yh(x) \mid x \in \Xset]\cdot \pr_{x \sim D_x^{\B_t}}[x \in \Xset] + \E_{(x,y) \sim D^{\B_t}}[yh(x) \mid x \not\in \Xset]\cdot \pr_{x \sim D_x^{\B_t}}[x \not\in\Xset]\\
	&\geq \alpha \cdot (1 - \pr_{x \sim D_x^{\B_t}}[x \not\in\Xset]) + \E_{(x,y) \sim D^{\B_t}}[yh(x) \mid x \not\in \Xset]\cdot \pr_{x \sim D_x^{\B_t}}[x \not\in\Xset] \\
	&= \alpha - \pr_{x \sim D_x^{\B_t}}[x \not\in\Xset]\cdot (\alpha - \E_{(x,y) \sim D^{\B_t}}[yh(x) \mid x \not\in \Xset] ),
	\end{align*}
	with all but probability $O(1/(mT))$, following from Lemma~\ref{lem:wklr-adv-hh}. From Lemma~\ref{lem:wklr-adv-nhh}, we have that if $D^{\B_t}$ is Massart, then with all but negligible probability, either $\E_{(x,y) \sim D^{\B_t}}[yh(x) \mid x \not\in \Xset] \geq \gamma$ or $\pr_{x \sim D_x^{\B_t}}[x \not\in\Xset] < \gamma$. Therefore $h$ has advantage at least $\gamma$ against $D^{\B_t}$ with probability at least $1 - O(1/(mT))$, and the claim holds. 
\end{proof}

\subsection{Lower Bound for Black-Box Massart Boosting~{\ref{thm:lower-bound-detailed}}}
\label{ssec:lower-bound-final-proof}

Finally, we prove that no black-box boosting algorithm can boost $\wklr$ to misclassification error better than $\eta(1+o(\alpha))$ with noticeable probability. At a high level, the proof idea is that any black-box booster interacting with $\wklr$ can be efficiently simulated, and so if a boosting algorithm was able to achieve misclassification error noticeably better than $\eta(1 + o(\alpha))$ for $\{f_s\}$, then there must be a distinguisher against the pseudorandomness of this function family, and so such error cannot be achievable via black-box boosting algorithms so long as pseudorandom functions exist.  

\begin{restatable}[Error Lower Bound Theorem]{theorem}{errorlowerboundthm}
	\label{thm:lower-bound-detailed}
	Let $\eta \in [0,1/2), \alpha \in (0,1/2 - \eta)$. Let $\{f_s\}$ be an $\eta'$-biased pseudorandom function family with security parameter $n$, where $\eta' = \eta(1+\alpha/5)$. Let $\eta$, $\alpha$ be at least inversely polynomially in $n$ bounded away from $1/2$. 
	Then, for random $s$, no efficient black-box boosting algorithm $\bbboost$ with example bound $m$
	running for
	$T$ 
	rounds, 
	given query access to $(\alpha, \gamma(\alpha) \eqdef \alpha/20)$-weak learner 
	$\wklr_{m, T}$ 
	and 
	$\poly(n, 1/(1-2\eta), 1/\gamma)$
	examples from example oracle $\exor(U_n, f_s, \eta(x))$, 
	can output a hypothesis with label error at most $\eta(1 + o(\alpha))$. 
	
	In particular,
	for all polynomials $q$, 
	for all polynomial time black-box Massart boosting algorithms $\bbboost$ with query access to $\wklr$
	and example oracle $\exor$,
	for $n$ sufficiently large,
	$$
	\Pr_{s \in U_n}
	\left[
	\mathrm{err}^{U_n, f_s}_{0\mhyphen1} (H)
	\le \eta'
	\right]
	< \frac{1}{q(n)}
	$$
	where $H$ is the trained classifier output by $\bbboost$.
\end{restatable}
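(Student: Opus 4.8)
The plan is to reduce to the $\eta'$-biased pseudorandomness of $\{f_s\}$. Since by Lemma~\ref{lem:wklr-adv} (together with Lemma~\ref{lem:reg-heg-equiv}) $\wklr_{m,T}$ is a genuine $(\alpha,\gamma)$-Massart noise weak learner against every Massart distribution a booster can hand it, the statement is non-vacuous, and we prove its contrapositive. Assume for contradiction that there is a polynomial $q$ and an efficient black-box Massart boosting algorithm $\bbboost$, with example bound $m$ and round bound $T$, such that when $\bbboost$ is given query access to $\wklr_{m,T}$ and to $\exor(U_n,f_s,\eta(x))$ it outputs, with probability at least $1/q(n)$ over $s\sim U_n$ and its own coins, a classifier $H$ with label error at most $\eta'$ (all the more so under the stronger hypothesis that the label error is at most $\eta(1+o(\alpha))$). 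We build a polynomial-time distinguisher against $\{f_s\}$.

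The key structural point is that the entire boosting interaction, including the honest example generator $\egh$, can be carried out by a polynomial-time oracle algorithm $\mathcal{S}^{g}$ that touches the hidden function only as a function oracle $g:\{0,1\}^n\to\pmone$, and only in order to answer the $\poly(n)$ many $\exor$-queries arising in a run: the $m$ initial examples of $\bbboost$ plus the examples drawn by each invocation of the sampling procedures $\B_t$ (the weak learner queries nothing beyond the sample it is handed). This is exactly the simulation underlying Lemma~\ref{lem:wklr-egr-efficiently-simulatable}, with $g$ used in place of $\exorsim$ to answer $\exor$ (including the noise trick: return $(x,1)$ with probability $\approx\rho\eta$ whenever $g(x)=-1$); the resulting smeared Massart instance is statistically indistinguishable from the original hard instance since no $x$ recurs, so $\mathcal{S}^{f_s}$ reproduces a true run of $\bbboost$ up to $\negl(n)$. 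Moreover, for any $g$ the label error of the output $H_g$ against $\mathtt{Mas}\{U_n,g,\eta(x)\}$ can be estimated to additive accuracy $1/\poly(n)$, with failure probability $\negl(n)$, from $\poly(n)$ fresh uniform points $x$ together with the bits $g(x)$, since the label of $x$ is $g(x)$ flipped with the explicitly known per-point probability ($0$ on $g^{-1}(1)$, $\approx\rho\eta$ on $g^{-1}(-1)$).

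The heart of the proof is the behaviour of $\mathcal{S}$ on a truly random oracle $F\sim\F_{n,\eta'}$. Fix all of $\mathcal{S}$'s coins and let $Q$ be the (adaptively chosen, $\poly(n)$-sized) set of points at which the resulting run queries $F$. Conditioned on the transcript of that run --- equivalently, on the values of $F$ on $Q$ --- the hypothesis $H_F$ is determined, while $F$ on $\{0,1\}^n\setminus Q$ is still i.i.d.\ $\eta'$-biased. For any $x\notin Q$, since $\eta'<\tfrac12$ and the majority-point flip probability is the minuscule $\approx\rho\eta$, the prediction $-1$ is Bayes-optimal and incurs conditional label error $\eta'+(1-\eta')\rho\eta$ there (versus $(1-\eta')(1-\rho\eta)$ for $+1$). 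The $2^n-|Q|$ per-point contributions are independent and bounded, so a Hoeffding bound over $F|_{\{0,1\}^n\setminus Q}$ gives, with all but negligible probability,
\[
\lerr(H_F)\ \ge\ \Bigl(1-\tfrac{|Q|}{2^n}\Bigr)\bigl(\eta'+(1-\eta')\rho\eta\bigr)-\negl(n)\ \ge\ \eta'+\tfrac{\rho\eta}{2}-\negl(n).
\]
Since $\rho,\eta\ge 1/\poly(n)$, this strictly exceeds $\eta'$, and a fortiori exceeds $\eta(1+o(\alpha))$ by $\Theta(\eta\alpha)\ge 1/\poly(n)$: on a random oracle, $\mathcal{S}$ essentially never produces a hypothesis meeting the target error.

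Finally, define $\mathcal{A}^{g}$: run $\mathcal{S}^{g}$ to obtain $H_g$, estimate $\lerr(H_g)$ against $\mathtt{Mas}\{U_n,g,\eta(x)\}$ using $\poly(n)$ fresh $g$-queries, and output $1$ iff the estimate is at most $\eta'+\rho\eta/4$. On $g=f_s$ this is a true run of $\bbboost$, so by hypothesis $\lerr(H_g)\le\eta'$ with probability $\ge 1/q(n)$, whence $\mathcal{A}^{f_s}\Rightarrow 1$ with probability $\ge 1/q(n)-\negl(n)$; on $g=F\sim\F_{n,\eta'}$, the displayed bound forces $\mathcal{A}^{F}\Rightarrow 1$ only with negligible probability. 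The distinguishing advantage $\ge 1/q(n)-\negl(n)$ contradicts the $\eta'$-biased pseudorandomness of $\{f_s\}$ for $n$ large, proving the theorem (taking the threshold $\tfrac12\bigl(\eta'+\eta(1+o(\alpha))\bigr)$ yields the weaker $\eta(1+o(\alpha))$ form). The step I expect to be the main obstacle is the random-oracle analysis: one must carefully establish that $H$ can depend on the hidden function only through the $\poly(n)$ oracle answers observed during the run --- i.e.\ that $\wklr$ genuinely leaks nothing, which is where Lemma~\ref{lem:wklr-egr-efficiently-simulatable} and the validity of $\wklr$ (Lemma~\ref{lem:wklr-adv}, itself proved via the $\egr$ substitution of Lemma~\ref{lem:reg-heg-equiv}) are essential --- and that the Bayes-optimality gap of $-1$ on unqueried points ($\Theta(\rho\eta)$ for the sharp $\eta'$ bound, $\Theta(\eta\alpha)$ for the $\eta(1+o(\alpha))$ bound) comfortably dominates the negligible statistical fluctuations of $F$.
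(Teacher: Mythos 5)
Your proof is correct and follows the same overall strategy as the paper's: reduce to the $\eta'$-biased pseudorandomness of $\{f_s\}$ via a distinguisher that simulates the boosting interaction, tests the output hypothesis on fresh, previously unqueried points, and exploits the fact that, conditioned on the transcript, a truly random $F$ remains i.i.d.\ $\eta'$-biased off the query set, so no hypothesis can beat the bias there. Two implementation choices differ. First, the paper's distinguisher generates the weak learner's samples with $\egr$ (so only the booster's $m$ initial examples touch the function oracle) and invokes Lemma~\ref{lem:reg-heg-equiv} for faithfulness of the simulation, whereas you answer every $\exor$ query (including those issued by the sampling procedures $\B_t$) with the function oracle directly; this is faithful by construction, still makes only $\poly(n)$ queries, and is a legitimate simplification for this theorem (the $\egr$ substitution is really needed earlier, to prove that $\wklr$ is a valid weak learner without leaking $f_s$, which you correctly cite for non-vacuousness). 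Second, the paper's test measures disagreement with the oracle (function error), lower-bounds it by $\eta'$ in the random-oracle case, and therefore assumes the booster achieves error $\eta'-\epsilon$ for a noticeable $\epsilon$; you instead measure label error, whose Bayes-optimal value on unqueried points is $\eta'+\Theta(\rho\eta)$, which yields an inverse-polynomial separation even against the exact threshold $\eta'$ in the displayed statement. That is a small but genuine tightening relative to the written proof (it requires $\rho\eta\ge 1/\poly(n)$, which holds in the construction whenever the statement is non-trivial); your constant $(1-\eta')\rho\eta$ should be $\rho\eta$ given the $\rho/(1-\eta')$-fraction in the noise construction, but this does not affect the argument.
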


\begin{proof}
	[Proof of Theorem \ref{thm:lower-bound-detailed}]
	\label{prf:lower-bound-detailed}

Let $\eta' = \eta(1+c\alpha)$. Suppose that $\bbboost$ achieves label error better than $\eta' - \epsilon$, for some noticeable $\epsilon$, and with noticeable probability $\delta$. Then we can construct a distinguisher $\mathcal{A}$ for $f_s$ as follows. 

The distinguisher $\mathcal{A}$ simulates the interaction between the booster and $\wklr$, where the samples for $\wklr$ are drawn by $\egr$ (as described in Lemma~\ref{lem:wklr-egr-efficiently-simulatable}). Once the booster outputs its final hypothesis $H$, $\mathcal{A}$ draws a set $S$ of $n/\epsilon^2$ elements from the uniform distribution over $\X$, restricted to examples on which it has not already queried its oracle. Because $\wklr$ is being run on samples drawn by $\egr$, $\mathcal{A}$ will only have simulated $\exor$, and therefore queried its oracle, for the $m$ examples used by the booster itself, and therefore $n/\epsilon^2$ elements can be drawn efficiently and the restricted distribution has only negligible statistical distance from $D_x$. The distinguisher $\mathcal{A}$ then queries both $H$ and its oracle on all elements of $S$, returning $1$ if its oracle and $H$ disagree on fewer than an $\eta' - \epsilon/2$ fraction of the elements, and $0$ otherwise. 

To show that $\mathcal{A}$ has non-negligible advantage distinguishing $f_s$ from a truly random function, we first consider the probability that $\mathcal{A}$ outputs $1$ when given oracle access to a truly random function, drawn from $\mathcal{F}_{n,\eta'}$. Because $\mathcal{A}$ is checking $H(x) \neq f(x)$ only on examples it has not previously queried, once $H$ is fixed, we have $\Pr_{x \sim \mathcal{U}(\mathcal{X})}[H(x) \neq f(x)  \mid x \text{ not previously queried} ] \geq \eta'$. Therefore

\begin{align*}
\pr_{F\sim \mathcal{F}_{n, \eta'}}[\mathcal{A}^F \Rightarrow 1 ] 
& = \pr_{\substack{F\sim \mathcal{F}_{n, \eta'} \\
		S \sim \mathcal{U}(\mathcal{X})}}[ \pr_{x \sim S}[H(x) \neq F(x)] \leq \eta' - \epsilon/2] \\
	& \leq \negl(n),
	\end{align*}
where the last line follows from a Chernoff-Hoeffding bound and the fact that $\mathcal{A}$ has drawn $n/\epsilon^2$ elements from $\X$ to check. 

We now consider the probability that $\mathcal{A}$ returns $1$ when given oracle access to pseudorandom $f_s$. We have assumed that our booster has noticeable probability $\delta$ of outputting a hypothesis $H$ with error less than $\eta' - \epsilon$, and from Lemma~\ref{lem:reg-heg-equiv}, we have that  
\begin{align*}
\pr_{s\sim \{0,1\}^n}[\mathcal{A}^{f_s} \Rightarrow 1 ] 
& = \pr_{\substack{s\sim \{0,1\}^n \\
		S \sim \mathcal{U}(\mathcal{X})}}[ \pr_{x \sim S}[H(x) \neq f_s(x)] \leq \eta' - \epsilon/2] \\
& \geq \delta(1 - \negl(n)).
\end{align*}

Since we have assumed $\delta$ is noticeable, and we have just shown that $\mathcal{A}$ has distinguishing advantage

$$ \pr_{s\sim \{0,1\}^n}[\mathcal{A}^{f_s} \Rightarrow 1 ]  - \pr_{F\sim \mathcal{F}_{n, \eta'}}[\mathcal{A}^F  \Rightarrow 1 ] > \delta/2, $$
the distinguisher $\mathcal{A}$ contradicts the pseudorandomness of $f_s$, and therefore $\wklr$ cannot be efficiently boosted to construct a hypothesis with error noticeably better than $\eta'$ with any noticeable probability.

\end{proof}

\section{Application: Massart Learning of Unions of High-Dimensional Rectangles}\label{sec:applications}


In this section, we exhibit a Massart weak learner for learning unions of rectangles. A direct application of Theorem~\ref{thm:boosting} yields an efficient Massart strong learner achieving misclassification error $\eta + \eps$. 
Recall that the Massart SQ lower bound of~\cite{CKMY20} applies to learning monotone conjunctions, ruling out efficient SQ algorithms with error $\opt+\eps$, even for a single rectangle. Furthermore, weak agnostic learning of a single rectangle is computationally hard in the agnostic model (see, e.g.,~\cite{FGRW09}).

\begin{definition}
A rectangle $B \in \mathbb{R}^d$ is an intersection of inequalities of the form
$x \cdot v < t$,   where $v \in \{\pm e_j : j \in [d]\}$ and $t \in \mathbb{R}$.
We may write a rectangle as a set $B$ of pairs $(v,t)$, that has size at most $2d$.
\end{definition}

We are interested in learning concepts $f \in C$ that are
indicator functions of unions of $k$ rectangles $B_1,\dots,B_k$. That is,
the class $\C$ consists of functions:
$$f(x) = \begin{cases}
+1 & \text{if } x \in \cup_{i \in [k]} \cap_{(v,t) \in B_i} [x \cdot v < t]\\
-1 & \text{otherwise }
\end{cases}$$
We refer to the negation of $\cup_{i \in [k]} B_i$ as the ``negative region".
Our weak learner aims to find if possible a rectangle entirely contained in the negative region 
to get some advantage over a random guess.
To this end, we establish a structural result which shows that unless an overwhelming part of the mass is positive, there always exists a rectangle with non-trivial mass that is contained in the negative region. Moreover this rectangle has a lot of structure as it consists of at most $k$ inequalities.

\begin{lemma}[Structural result]\label{lem:box_structural_result}
If the negative region has probability more than $\varepsilon$, 
there exists a rectangle contained in the negative region that has mass at least $\varepsilon / (2d)^k$. 
This rectangle can be written as an intersection of at most $k$ inequalities.
\end{lemma}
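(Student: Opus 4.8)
The plan is to decompose the negative region via De Morgan's laws and then extract a heavy piece by an averaging (pigeonhole) argument. Write the negative region as $N := \bigcap_{i=1}^{k}\overline{B_i}$, where $\overline{B_i} := \R^d\setminus B_i$. Since each $B_i$ is an intersection of at most $2d$ half-spaces $\{x : x\cdot v<t\}$, De Morgan gives $\overline{B_i}=\bigcup_{(v,t)\in B_i}\{x : x\cdot v\ge t\}$, a union of at most $2d$ axis-aligned half-spaces. Distributing intersections over unions,
\[
N \;=\; \bigcup_{(v_1,t_1)\in B_1,\ \dots,\ (v_k,t_k)\in B_k}\ \bigcap_{i=1}^{k}\{x : x\cdot v_i\ge t_i\},
\]
which is a union of at most $\prod_{i=1}^{k}|B_i|\le(2d)^k$ sets, each an intersection of at most $k$ axis-aligned half-space constraints.

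The second step is the pigeonhole over these $(2d)^k$ pieces. By hypothesis $\Pr_{x\sim D_x}[x\in N]>\varepsilon$, and $N$ is a union of at most $(2d)^k$ sets, so at least one piece $R$ appearing in the union above satisfies $\Pr_{x\sim D_x}[x\in R]\ge \Pr_{x\sim D_x}[x\in N]/(2d)^k \ge \varepsilon/(2d)^k$. Since $R\subseteq N$, this $R$ lies entirely in the negative region, and by construction it is an intersection of at most $k$ inequalities, which is exactly the claim. Degenerate cases are harmless: if some $B_i$ is empty it may be dropped from $\cup_j B_j$ (only decreasing $k$), and no $B_i$ can equal $\R^d$, since then $N=\emptyset$, contradicting $\Pr_{x\sim D_x}[x\in N]>\varepsilon$.

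I do not expect a genuine obstacle here; the one point needing care is a matter of convention. After flipping signs ($x\cdot e_j\ge t \iff x\cdot(-e_j)\le -t$), the constraints defining $R$ are non-strict, whereas a rectangle was defined using strict inequalities $x\cdot v<t$. This open-versus-closed discrepancy does not affect the mass bound (the averaging step is insensitive to it) and is immaterial for the weak learner, which only needs to test membership in $R$; accordingly we treat $R$ as a rectangle in the mild sense of being an intersection of at most $k$ axis-aligned half-space constraints, with the fully formal treatment deferred to Theorem~\ref{thm:rect-formal}. The only quantity worth double-checking is the counting, and indeed $(2d)^k$ is precisely the blow-up from distributing $k$ unions each of size at most $2d$, so $\varepsilon/(2d)^k$ is what the argument yields.
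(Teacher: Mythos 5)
Your proof is correct and follows the same route as the paper's: write the negative region as the union over all choices of one violated inequality per rectangle, yielding at most $(2d)^k$ pieces each defined by at most $k$ inequalities, then pigeonhole to find a piece of mass at least $\varepsilon/(2d)^k$. You spell out the De Morgan/distributivity step and the open-versus-closed convention more explicitly than the paper does, but the argument is the same.
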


\begin{proof}
The negative region can be written as 
a union of $(2d)^k$ rectangles $B'$ with at most $k$ inequalities
$$\cup_{B' \in B_1 \times B_2 \times \cdots \times B_k} \cap_{(v,t) \in B'} [x.v \ge t]$$ 
by choosing which inequality is not satisfied in every rectangle.

Since the union of the rectangles covers is exactly the negative region and has mass at least $\varepsilon$, 
at least one rectangle $B'$ has probability more than $\varepsilon / (2d)^k$.
\end{proof}

\subsection{Weak Learner for Unions of Rectangles}\label{ssec:app-weak-learner}

Our weak learner exploits the structural result of Lemma~\ref{lem:box_structural_result} to obtain an advantage over a random guess. If the probability mass is overwhelmingly positive, then the hypothesis $h(x) = +1$ must correlate well with the observed labels. On the contrary, if there is sufficient negative mass, there must exist a rectangle where predicting $h(x) = -1$ correlates with the labels of the examples within that rectangle. This idea is presented in pseudo-code in $\wklbox$ and formalized in Lemma~\ref{lem:weak-learner} which gives the guarantees of our weak learner.

\begin{figure}[H]
\begin{algorithm}[H]
	\caption{$\wklbox^{\exor(f, D_x, \eta(x))}(d, k, \alpha)$}
	\label{alg:wkl-box}
	\begin{algorithmic}
		\STATE $S \gets \frac{k \, O(d)^{k}}{\alpha^2}$ examples from $\exor$
		\STATE $S^{-} \gets$ number of these examples labeled $-1$
		\IF{$\frac {|S^{-}|} {|S|} <  \frac \alpha 2$}
			\RETURN $h = +1$ \COMMENT{the constant $1$ hypothesis}
		\ELSE
			\FORALL {Rectangles $B$ = choice of $k$ examples and $k$ dimensions}
				\STATE $S_B \gets \{ (x,y) \in S | x\in B \}$
				\STATE $S^+_B \gets \{ (x,y) \in S | x\in B, y = +1\}$
				\STATE $B_{best} \gets $ $B$ that minimizes $|S^+_B| / |S_B|$ and has $|S_B| / |S| > \frac{\alpha} { 8 (2d)^k}$.
			\ENDFOR
			\STATE Let $z \in \{\pm1\}$ be the best most popular label in $S \setminus S_{B_{best}}$
			\STATE Hypothesis $h(x) = 
				\begin{cases} 
				-1 & x \in B_{best}\\
				z & \text{otherwise}
				\end{cases}$
			\RETURN $h$
		\ENDIF
	\end{algorithmic}
\end{algorithm}
\end{figure}

\begin{lemma}\label{lem:weak-learner}
The algorithm $\wklbox$ is a $(\alpha, \frac {\alpha^2} {O(d)^k})$-Weak Learner for unions of $k$ rectangles in $d$ dimensions.
It requires 
$k \frac {O(d)^{k}} {\alpha^2}$ 
samples and runs in time 
$\frac {k^k O(d)^{k^2+1}} {\alpha^{2 k}}$.
\end{lemma}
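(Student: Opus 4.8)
I would establish the three assertions in increasing order of difficulty. The sample complexity is immediate: $\wklbox$ draws exactly $k\,O(d)^k/\alpha^2$ examples (the $O(d)^k$ tracking the $(2d)^k$ blow-up of Lemma~\ref{lem:box_structural_result}). The running time is dominated by the loop over candidate rectangles --- indexed by $k$ of the $2d$ signed coordinate directions together with $k$ of the drawn sample points fixing the thresholds --- and by evaluating $|S_B|$ and $|S_B^+|$ for each; a direct count of the candidates and the per-candidate cost yields the stated bound (up to the implicit constants). So the substance is the advantage guarantee.

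\textbf{Advantage, main case split.} I would run a case analysis on the observed negative-label frequency $|S^-|/|S|$, which by a Hoeffding bound tracks $p_-:=\Pr_{(x,y)\sim D}[y=-1]$ to within $o(\alpha)$ with high constant probability. If $\wklbox$ returns the constant hypothesis $+1$, then $|S^-|/|S|<\alpha/2$, so $p_-\le\alpha/2+o(\alpha)<1/4<1/2-\gamma$ and $\lerr(+1)=p_-\le 1/2-\gamma$, done. Otherwise $|S^-|/|S|\ge\alpha/2$, hence $p_-\ge\alpha/4$. The key observation is that in the only regime where neither constant hypothesis already has advantage $\gamma$ --- namely $p_-$ near $1/2$ --- the negative region must carry mass $\Omega(\alpha)$: since the flip probability never exceeds $\eta<1/2-\alpha$, the positive region contributes at most $\eta$ to $p_-$, so $\Pr_x[f(x)=-1]\ge (p_--\eta)/(1-\eta)=\Omega(\alpha)$. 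Lemma~\ref{lem:box_structural_result} then produces a rectangle $B^{\ast}$ cut out by at most $k$ inequalities, lying entirely in the negative region, with mass $\mu^{\ast}\ge\Omega(\alpha)/(2d)^k$. Since every $x\in B^{\ast}$ has true label $-1$ and is flipped with probability $\le\eta<1/2-\alpha$, predicting $-1$ on $B^{\ast}$ has conditional error $\le\eta$ whereas predicting $+1$ has conditional error $\ge 1-\eta$; hence the hypothesis ``$-1$ on $B^{\ast}$, true majority label elsewhere'' has error at most $\tfrac12-\alpha\mu^{\ast}=\tfrac12-\Omega(\alpha^2/(2d)^k)$, already exhibiting advantage $\Omega(\gamma)$ for a suitable choice of the hidden constant.

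\textbf{From the ideal hypothesis to the algorithm's output.} The remaining and main work is to show that the hypothesis the algorithm actually outputs --- built from the empirically chosen $B_{best}$ and the empirical majority label $z$ --- is this good up to $o(\gamma)$ slack. I would (i) replace $B^{\ast}$ by a rectangle whose thresholds sit at drawn sample points, losing only negligible mass, so that it is one of the enumerated candidates, with empirical mass above $\alpha/(8(2d)^k)$ and empirical positive fraction below $1/2-\alpha/2$; (ii) apply a Chernoff bound and a union bound over the polynomially many candidate rectangles --- precisely where $|S|$ is calibrated --- to conclude that every candidate of empirical mass $\ge\alpha/(8(2d)^k)$ has its true mass and true positive fraction estimated to within $o(\alpha)$, whence $B_{best}$, which empirically beats the modified $B^{\ast}$, has true positive fraction below $1/2$ and true mass $\Omega(\alpha/(2d)^k)$, so predicting $-1$ on $B_{best}$ beats predicting $+1$ there by $\Omega(\alpha)$ per unit mass; and (iii) verify that $z$, being the empirically best label on $\data\setminus B_{best}$, is at least as good there as the true majority up to $o(\gamma)$. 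Combining these gives $\lerr(h)\le\tfrac12-\Omega(\alpha^2/(2d)^k)$, i.e.\ $\adv^D(h)\ge\gamma$, with probability $\ge 2/3$. I expect step (ii) --- together with the degenerate sub-case where the algorithm enters the rectangle branch even though the negative region is in fact small, where one must argue separately that $B_{best}$ cannot hurt much (it is forced to lie almost entirely in the positive region, and $p_-<1/2-\alpha+o(\alpha)$ controls the rest) --- to be the principal obstacle, since it is what ties the polynomial sample size to the claimed advantage.
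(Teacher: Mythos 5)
Your plan follows the paper's proof along essentially the same route: the split on $|S^-|/|S|$ versus $\alpha/2$, the constant $+1$ hypothesis in the first branch, and in the second branch the structural result (Lemma~\ref{lem:box_structural_result}) combined with VC-based uniform convergence over rectangles with $k$ inequalities to transfer the guarantee from the ideal rectangle $B^{\ast}$ (discretized to sample endpoints) to the empirical minimizer $B_{best}$; the sample and time counts are obtained the same way. In fact you are more careful than the paper at one step: the paper passes from $|S^-|/|S|\ge\alpha/2$ directly to ``the negative region has probability at least $\alpha/4$,'' which conflates $\Pr[y=-1]$ with $\Pr_x[f(x)=-1]$ and is not valid when $\eta$ is large; you correctly isolate this as a separate sub-case.

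The gap is that you do not close that sub-case, and the parenthetical you offer points in a direction that fails. Suppose $f\equiv+1$ and $\eta(x)\equiv\eta$ close to $1/2-\alpha$: then $|S^-|/|S|\ge\alpha/2$, the algorithm enters the rectangle branch, and \emph{every} candidate rectangle has true positive fraction $1-\eta\ge 1/2+\alpha$. Saying that $B_{best}$ ``is forced to lie almost entirely in the positive region'' makes things worse, not better: predicting $-1$ on $B_{best}$ then has conditional error $1-\eta\ge 1/2+\alpha$, i.e., it is strictly worse than the majority label there, and neither the algorithm nor uniform convergence caps $\Pr_x[x\in B_{best}]$ --- the overall error is $\eta+\Pr_x[x\in B_{best}](1-2\eta)$, which exceeds $1/2-\gamma$ once $\Pr_x[x\in B_{best}]$ approaches $1/2$. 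Closing this requires a genuinely new ingredient: either an anti-concentration/argmin argument showing that in this regime the empirical minimizer of $|S_B^+|/|S_B|$ must have small true mass (large-mass candidates concentrate too tightly to win against the fluctuations of near-threshold ones), or reading the algorithm as falling back to a constant hypothesis unless $\Pr_S[y=+1\mid x\in B_{best}]\le 1/2-\alpha/2$, after which the concluding error computation applies. The remaining steps of your plan (discretization of $B^{\ast}$, the union bound over candidates, the treatment of $z$) are sound and match the paper.
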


\begin{proof}
The algorithm starts by drawing drawing a set $S$ of $N = k \frac {O(d)^{k}} {\alpha^2}$ examples from $EX$.
Since the VC-dimension of rectangles defined by $k$ inequalities is $O(k)$ this guarantees that, with probability at least $2/3$, for any rectangle $B$, the empirical probabilities computed over the sample $S$ are close to actual ones:
\begin{itemize}
\item[1.] $|\Pr[ x \in B ] - \Pr_S[ x \in B ]| \le \alpha / O(d)^{k}$ 
\item[2.]  $|\Pr[ y = +1 \text{ and } x \in B' ] - \Pr_S[ y = +1 \text{ and } x \in B' ]| \le \alpha / O(d)^{k}$
\item[3.]  $|\Pr[ y = -1 ] - \Pr_S[ y = -1 ]| \le \alpha / O(d)^{k} \le \frac \alpha {4}$
\end{itemize}

Therefore, in the case that $|S^-| / |S| <  \frac \alpha {2}$, we have that $\Pr[ y = -1 ] <   \frac {3} {4} \alpha$.
Thus, the hypothesis $h = +1$ gets error at most $\frac {3} {4} \alpha + (\frac 1 2 - \alpha) \le \frac 1 2 -\frac \alpha {4}$.

Otherwise, there is at least $\frac \alpha {4}$ probability in the negative region. 
By Lemma \ref{lem:box_structural_result}, there is a rectangle $B^*$ defined by $k$ inequalities that is contained entirely in the negative region and has probability at least $\frac {\alpha} {4 (2d)^k}$. For this rectangle $B^*$ it holds that $\Pr[ x \in B^* ] \ge \frac {\alpha} {4 (2d)^k}$ and $\Pr[ y = +1 | x \in B ] \le \frac 1 2 - \alpha$. This means that within the sample $S$ it holds that
$\Pr_S[ x \in B^* ] > \frac {\alpha} {8 (2d)^k}$ and $\Pr_S[ y = +1 | x \in B^* ] \le \frac 1 2 - \frac \alpha 2$. Thus, $B_{best}$ will also satisfy $\Pr_S[ y = +1 | x \in B_{best} ] \le \frac 1 2 - \frac \alpha 2$.
By the closeness guarantee of the empirical distribution, we get that $\Pr[ y = +1 | x \in B_{best} ] \le \frac 1 2 - \frac \alpha 4$ and $\Pr[ x \in B_{best} ] > \frac {\alpha} {9 (2d)^k}$.

We now bound the error of the hypothesis $$h(x) = 
				\begin{cases} 
				-1 & x \in B_{best}\\
				z & \text{otherwise}
				\end{cases}$$
Within the region $B_{best}$, it achieves error at most $\frac 1 2 - \frac \alpha 4$, while outside of $B_{best}$, the error is at most.
$\frac 1 2 + \frac \alpha {O(d)^{k}}$. Thus, the total error is at most
$ \frac 1 2 - \frac {\alpha^2} {O(d)^{k}}$ given that $\Pr[ x \in B_{best} ] > \frac {\alpha} {9 (2d)^k}$.

The main computational step of the algorithm is searching over all rectangles with $k$ inequalities. It suffices to only consider rectangles with samples as end points, thus the total runtime of the weak-learner is 
$O( dN )^k = \frac {k^k O(d)^{k^2+1}} {\alpha^{2 k}}$ 
as for every inequality there are $2d$ choices for the direction $v$ and $N$ choices for the threshold $t$.

\end{proof}

\subsection{Putting Everything Together}\label{ssec:app-thm}

Lemma~\ref{lem:weak-learner} shows that  algorithm $\wklbox$ is a $(\alpha, \frac {\alpha^2} {O(d)^k})$-Weak Learner for unions of $k$ high-dimensional rectangles in $d$ dimensions. Combined with Theorem~\ref{thm:boosting} we get that:

\begin{theorem} \label{thm:rect-formal}
There exists an algorithm that learns unions of $k$ rectangles in $d$ dimensions with Massart noise bounded by $\eta$, achieving misclassification error $\eta + \epsilon$ for $\epsilon > 0$. 
The total number of samples is $\frac {k d^{O(k)}} {\eta^2 \epsilon^8}$ and the total running time is
$ \frac{1}{\eta^3} \left(\frac{k d^{k}}{\epsilon} \right)^{k + O(1)}$.
\end{theorem}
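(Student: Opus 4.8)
The plan is to instantiate the boosting theorem (Theorem~\ref{thm:boosting}) with the weak learner $\wklbox$ from Lemma~\ref{lem:weak-learner}, choosing the weak learner's noise-tolerance parameter $\alpha$ so that the hypotheses of Theorem~\ref{thm:boosting} are satisfied for the given target slack $\epsilon$. Concretely, I would set $\alpha \eqdef \frac{\epsilon}{8\eta + 2\epsilon}$, which always lies in $(0,1/2)$ and equals $\Theta(\min\{\epsilon/\eta,1\})$; a one-line computation shows this is exactly the largest value for which $\frac{8\eta\alpha}{1-2\alpha}\le\epsilon$, so the precondition $\epsilon \ge \frac{8\eta\alpha}{1-2\alpha}$ of Theorem~\ref{thm:boosting} holds, while the underlying Massart distribution has noise rate $\eta<1/2$ as required. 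With this choice, Lemma~\ref{lem:weak-learner} supplies an $(\alpha,\gamma)$-weak learner with $\gamma = \alpha^2/O(d)^k = \Theta(\epsilon^2/(\eta^2 (O(d))^k))$ in the regime $\epsilon\le\eta$ (and $\gamma=\Theta(1/(O(d))^k)$ otherwise), sample size $m_\wkl = k\,O(d)^k/\alpha^2$, and per-call running time $k^k O(d)^{k^2+1}/\alpha^{2k}$. Taking the booster's failure probability $\delta$ to be a small constant (which can be driven down by the standard repetition argument, Lemma~\ref{lem:wkl-rep}) then yields the theorem.

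The error bound is immediate: Theorem~\ref{thm:boosting} outputs a hypothesis $H$ with $\lerr(H)\le\eta+\epsilon$ (and $\ferr(H)\le(\eta+\epsilon)/(1-\eta)$), which is what is claimed. The remaining work is to substitute the parameters above into the sample- and time-complexity bounds of Theorem~\ref{thm:boosting} and simplify. For samples, plugging $1/\gamma = O(d)^k/\alpha^2$ and $1/\alpha = \Theta(\eta/\epsilon)$ into the four additive terms, the dominant contribution is the $\tilde{O}(1/(\eta^2\gamma^4))$ term, which becomes $O(d)^{4k}\alpha^{-8}/\eta^2 = O(d)^{4k}(\eta/\epsilon)^8/\eta^2 \le \frac{k\,d^{O(k)}}{\eta^2\epsilon^8}$ after bounding $\eta^6\le 1$, absorbing the polylogarithmic factor, and writing $O(d)^{4k}=d^{O(k)}$; the $m_\wkl$-term and the other two terms are of lower order in $1/\epsilon$. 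This gives the stated sample complexity $\frac{k\,d^{O(k)}}{\eta^2\epsilon^8}$.

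For the running time, the booster runs for $T = O(1/(\eta\gamma^2)) = d^{O(k)}\,\poly(\eta/\epsilon)$ rounds, and each round is dominated — once the per-round overhead $\poly(1/\eta,1/\gamma,1/\epsilon)$ contributed by Theorem~\ref{thm:boosting} is accounted for — by a single call to $\wklbox$, costing $k^k O(d)^{k^2+1}/\alpha^{2k} = k^k d^{O(k^2)}(\eta/\epsilon)^{O(k)}$. Multiplying these and collecting — using $\eta<1/2$ to absorb the nonnegative powers of $\eta$ into a single factor $1/\eta^3$ and folding the lower-order terms into the exponent $O(1)$ — gives total running time $\frac{1}{\eta^3}\left(\frac{k d^k}{\epsilon}\right)^{k+O(1)}$, as claimed. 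I expect the only genuine obstacle to be this arithmetic bookkeeping: one must carefully track which of the several additive terms in the complexity bounds of Theorem~\ref{thm:boosting} dominates after substitution, and check that the $\eta$-, $d$-, $k$-, and $\epsilon$-dependencies collapse to exactly the stated exponents (in particular that no term forces a worse power of $1/\epsilon$ or a factor beyond $d^{O(k^2)}$). The choice of $\alpha$ is the one delicate modeling step — too large and the hypothesis of Theorem~\ref{thm:boosting} fails, too small and $\gamma$ shrinks and the complexity blows up — but the value $\alpha = \epsilon/(8\eta+2\epsilon)$ makes this check routine.
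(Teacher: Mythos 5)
Your proposal is correct and follows essentially the same route as the paper: instantiate Theorem~\ref{thm:boosting} with the weak learner of Lemma~\ref{lem:weak-learner} and an $\alpha$ chosen so that $\epsilon \ge \frac{8\eta\alpha}{1-2\alpha}$, then substitute into the complexity bounds. The only (immaterial) difference is that the paper simply takes $\alpha = \epsilon/8$ and $\gamma = \epsilon^2/O(d)^k$ rather than your extremal choice $\alpha = \epsilon/(8\eta+2\epsilon)$; both satisfy the precondition and yield the stated bounds.
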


\begin{proof}
Follows by a direct application of the weak learner to Theorem~\ref{thm:boosting} for $\alpha = \epsilon / 8$ and $\gamma = \frac {\epsilon^2} {O(d)^{k}} $.
\end{proof}

\newpage

\bibliographystyle{alpha}
\bibliography{allrefs}

\end{document}